\newtheorem*{rep@theorem}{\rep@title}
\newcommand{\newreptheorem}[2]{%
\newenvironment{rep#1}[1]{%
 \def\rep@title{#2 \ref{##1}}%
 \begin{rep@theorem}}%
 {\end{rep@theorem}}}
\DeclareMathOperator*{\argmax}{arg\,max}
\newtheorem{theorem}{Theorem}[section]
\newtheorem{corollary}{Corollary}[section]
\newtheorem{lemma}{Lemma}[section]
\newtheorem{proposition}{Proposition}[section]
\newtheorem{definition}{Definition}[section]
\newtheorem{conjecture}{Conjecture}[section]
\newcounter{assumption}
\newcommand\underrel[2]{\mathrel{\mathop{#2}\limits_{#1}}}
\title{Stochastic Collapse: How Gradient Noise Attracts SGD Dynamics Towards Simpler Subnetworks}
\author{
    Feng Chen\thanks{Equal contribution. Ordered alphabetically.} \qquad Daniel Kunin$^*$ \qquad Atsushi Yamamura (\begin{CJK}{UTF8}{ipxm}山村篤志\end{CJK})$^*$ \qquad Surya Ganguli\\ 
    Stanford University\\
    \texttt{\{fengc,kunin,atsushi3,sganguli\}@stanford.edu} \\
}
\begin{document}

\maketitle

\begin{abstract}
    In this work, we reveal a strong implicit bias of stochastic gradient descent (SGD) that drives overly expressive networks to much simpler subnetworks, thereby dramatically reducing the number of independent parameters, and improving generalization. 
    To reveal this bias, we identify {\it invariant sets}, or subsets of parameter space that remain unmodified by SGD. 
    We focus on two classes of invariant sets that correspond to simpler (sparse or low-rank) subnetworks and commonly appear in modern architectures.
    Our analysis uncovers that SGD exhibits a property of {\it stochastic attractivity} towards these simpler invariant sets. 
    We establish a sufficient condition for stochastic attractivity based on a competition between the loss landscape's curvature around the invariant set and the noise introduced by stochastic gradients.
    Remarkably, we find that an increased level of noise strengthens attractivity, leading to the emergence of attractive invariant sets associated with saddle-points or local maxima of the train loss.
    We observe empirically the existence of attractive invariant sets in trained deep neural networks, implying that SGD dynamics often collapses to simple subnetworks with either vanishing or redundant neurons.
    We further demonstrate how this simplifying process of {\it stochastic collapse} benefits generalization in a linear teacher-student framework.
    Finally, through this analysis, we mechanistically explain why early training with large learning rates for extended periods benefits subsequent generalization.
\end{abstract}

\section{Introduction}

The remarkable performance of modern deep learning systems relies on a complex interplay between a training dataset, a network's architecture, and an optimization strategy. 
Contrary to traditional statistical learning theory, these highly expressive models exhibit impressive generalization capabilities on natural tasks, even without explicit regularization, despite having the capacity to memorize random data ~\cite{zhang2021understanding}. 
This phenomenon is often attributed to implicit biases introduced in the training process that drive the learning dynamics towards models with low-complexity, thereby improving generalization.
It is widely believed that a central source of this implicit bias is the randomness introduced by stochastic gradient descent (SGD) \cite{he2019control}.
In this work we discuss SGD’s role in attracting the dynamics, throughout training, towards subsets of parameter space that correspond to simpler subnetworks. 
We reveal that the architecture of modern deep neural networks as well as the nonlinear activation function plays a crucial role in forming these subsets, thus providing a novel perspective on the source of SGD's implicit bias.
Our contributions are as follows:
\begin{compactenum}
    \item We introduce \emph{invariant sets} as subsets of parameter space that, once entered, trap SGD. We characterize two such sets that correspond to simpler subnetworks and appear extensively in modern architectures: one for vanishing neurons and the other for identical neurons (Sec.~\ref{sec:invariant_set}).
    \item We formulate a sufficient condition for \emph{stochastic attractivity} --- a process attracting SGD dynamics towards invariant sets --- that reveals a competition between the loss landscape's curvature around an invariant set and the noise introduced by stochastic gradients (Sec.~\ref{sec:attractivity}).
    \item We apply the attractivity condition to determine when neurons with origin-passing activation functions collapse their parameters to zero, effectively removing the neuron. We empirically show that the parameters of permutable neurons within the same hidden-layer can collapse towards invariant sets corresponding to identical neurons (Sec.~\ref{sec:dnn}).
    \item We demonstrate how this process of \emph{stochastic collapse} influences generalization in a linear teacher-student framework. We apply these findings to shed light on the empirically observed importance of maintaining a large learning rate for an extended period during the early stages of training (Sec.~\ref{subsection:lrschedule}).
\end{compactenum}

\section{Related Work}
\label{sec:related_work}

\textbf{Implicit biases of SGD.}
Several recent works have explored the properties of SGD and its implicit biases.
\citet{barrett2020implicit} showed how SGD can be interpreted as adding implicit gradient norm regularization and \citet{geiping2021stochastic} showed that training full-batch gradient descent while explicitly adding this regularization can achieve the same performance as SGD.
\citet{kunin2021limiting} showed how the anisotropic structure of SGD noise effectively modifies the loss and \citet{haochen2021shape} showed that parameter-dependent noise introduces an implicit bias towards local minima with smaller variance, while spherical Gaussian noise does not.
\citet{blanc2020implicit} and \citet{damian2021label} studied the dynamics of gradient descent with label noise near a manifold of minima and proved that SGD implicitly regularizes the trace of the Hessian. 
\citet{li2021happens} developed a framework to study this bias of SGD by considering the projection of the trajectory to the manifold.  
\citet{kleinberg2018alternative} showed how SGD is effectively operating on a smoother version of the original loss allowing it to escape sharp local minima.
\citet{zhu2018anisotropic} further demonstrated how the anisotropic structure of SGD noise helps SGD escape efficiently. 
\citet{xie2020diffusion} demonstrated that the covariance matrix of SGD approximates the Hessian around local minima, and thus the escape rate from a minima is linked to the Hessian eigenvalues or flatness.
While numerous studies have sought to identify the origin of SGD's implicit bias, most have predominantly focused on how SGD introduces an implicit regularization term \cite{barrett2020implicit, geiping2021stochastic, kunin2021limiting}, minimizes a measure of curvature among equivalent minima \cite{blanc2020implicit, damian2021label, li2021happens}, or escapes from sharp local minima \cite{zhu2018anisotropic, kleinberg2018alternative, xie2020diffusion, keskar2016large}.
In our work, we provide a novel perspective and discuss how stochastic gradients introduce a strong attraction towards regions of parameter space associated with simpler subnetworks, even when this attraction is detrimental to the full-batch train loss.

\textbf{Simplicity biases.}
Several works have explored implicit biases that encourage some notion of sparsity or low-rankness in neural network learning dynamics. 
\citet{kunin2022asymmetric} showed how gradient flow training with an exponential loss can lead to sparse solutions via an implicit max-margin process. 
\citet{woodworth2020kernel} demonstrated how an implicit $L_1$ penalty occurs for diagonal linear networks trained with gradient flow in the limit of small initialization. 
\citet{nacson2022implicit} and \citet{pesme2021implicit} showed how large step sizes and stochastic gradients further bias these networks towards the sparse regime.
\citet{andriushchenko2022sgd} observed that longer training with large learning rates keeps SGD high in the loss landscape where an implicit bias towards sparsity is stronger and theoretically analyzed diagonal linear networks, showing that an associated SDE has implicit bias towards sparser representations.
\citet{vivien2022label} studied the role of label noise in inducing an implicit Lasso regularization for quadratically parameterized models.~\citet{kunin2019loss} and \citet{ziyin2022exact} showed how weight decay induce a bias towards rank minimization for linear networks.
\citet{galanti2022sgd} and \citet{wang2023implicit} extended this observation to deep settings trained with SGD.
\citet{jacot2022implicit} theoretically, and \citet{huh2021low} empirically, showed an implicit bias towards learning low-rank functions with increasing depth.
\citet{gurari2018gradient} showed that gradients of SGD converge to a small subspace spanned by a few top eigenvectors of the Hessian. 
\citet{ziyin2022shapes} demonstrated how data augmentation can promote dimensional collapse of representations in self-supervised learning.
While many works have explored the emergence of sparsity or low-rankness during training, much of the analysis is confined to a particular architecture or consists of general empirical observations lacking a clear underlying mechanism.
In our work, we focus on a fundamental characteristic of neural networks trained with SGD that leads to novel empirical observations.

\vspace{15pt}
\textbf{Learning dynamics near singular regions.}
The phenomenon of stochastic collapse of learning dynamics is closely related to the existence of invariant sets in the parameter space, which emerge due to the hierarchical structure and symmetries of neural network architectures.
Several previous works have pointed out that  learning algorithms can be biased due to the symmetry-induced singular regions in the parameter space\cite{cousseau2008dynamics, wei2008dynamics, watanabe2001algebraic, watanabe2009algebraic}, a similar (but different) concept to invariant sets.
The primary focus of these earlier works has been either on Bayesian settings \cite{watanabe2001algebraic, watanabe2009algebraic} or on the dynamics of deterministic (or averaged) learning trajectories \cite{cousseau2008dynamics, wei2008dynamics}.
In contrast, our work concentrates on examining how position-dependent noise in SGD plays a significant role for attraction to the invariant sets, and the subsequent implications for learning rate scheduling and generalization error.

Our analysis is closely related to a recent work studying failure modes of SGD \cite{ziyin2021sgd} and a concurrent follow up work studying the stability of SGD near a fixed point \cite{ziyin2023probabilistic}.
In App.~\ref{app:related_work} we further discuss these works and other relevant works.

\section{Invariant Sets of SGD Generated by Reflection Symmetries}
\label{sec:invariant_set}

Throughout this work, we consider a feed-forward network\footnote{This notation encompasses fully-connected and convolutional networks, excluding more complex architectures such as transformers for simplicity, although much of our theory would directly apply.}, 
\begin{equation}
    f_\theta(x)=w^{(m)}\sigma\left(w^{(m-1)}\cdots\sigma\left(w^{(1)}x +  b^{(1)}\right)\cdots + b^{(m-1)}\right) + b^{(m)},
\end{equation}
parameterized by $\theta=(w^{(1)},b^{(1)},\dots,w^{(m-1)},b^{(m-1)},w^{(m)},b^{(m)}) \in \mathbb{R}^d$, where $w$ and $b$ denote weights and biases respectively, the superscript indexes the $m$ layers, and $\sigma(\cdot)$ is an activation function.
The network is trained by stochastic gradient descent (SGD) over a training dataset $\{(x_{1},y_1), \dots, (x_{n},y_n)\}$ of size $n$, where $x_i \in \mathbb{R}^{p}$ and $y_i \in \mathbb{R}^{c}$. 
The network parameters are randomly initialized at $\theta^{(0)}$ and iteratively updated according to
\begin{equation}
    \label{eq:SGD}
    \theta^{(t+1)} = \theta^{(t)} - \frac{\eta}{\beta} \sum_{i \in \mathcal{B}^{(t)}} \nabla_\theta \ell\left(\theta^{(t)};x_i,y_i\right), 
\end{equation}
where $\eta > 0$ is the learning rate, $\mathcal{B}^{(t)} \subseteq [n]$ is a random\footnote{For simplicity we assume sampling with replacement such that there is no dependency between batches.} index set (mini-batch) of size $\beta$ at step $t$, and $\ell(\theta;x_i,y_i)$ is a loss function.
We let $\mathcal{L}(\theta) = \frac{1}{n}\sum_{i \in [n]}\ell(\theta;x_i,y_i)$ denote the train loss averaged over the entire training dataset.
Despite the simplicity of SGD's optimization procedure, understanding how it can navigate through complex, high-dimensional, non-convex loss landscapes to find generalizing solutions is still a mystery.
Here, we take a dynamical systems perspective and identify subspaces of the parameters that are preserved under the SGD update equation (Eq.~\ref{eq:SGD}).
\begin{definition}[Invariant Set of SGD]
    A Borel-measurable set $A \subseteq \mathbb{R}^d$ is an invariant set of SGD if given any initialization $\theta^{(0)} \in A$, all future iterates of SGD $\theta^{(t)}$ for $t \ge 0$ are contained within $A$ almost surely, for any batch size $\beta\in\mathbb{N}$, learning rate $\eta>0$, and mini-batches $\{\mathcal{B}^{(t)}\subseteq [n]: t\in\mathbb{N}\}$.
\end{definition}
From this definition, it is immediately clear that all of parameter space $A = \mathbb{R}^d$ and any interpolating point $A = \{\theta_*\}$ such that $\mathcal{L}(\theta_*) = 0$ are invariant sets of dimension $d$ and $0$ respectively.
However, the highly over-parameterized and layer-wise structure of neural network architectures lead to many additional, non-trivial invariant sets.
We will focus on two fundamental invariant sets that appear ubiquitously in neural networks and correspond to simpler (sparse or low-rank) subnetworks. 

\begin{proposition}[Sign Invariant Sets]
\label{proposition:sign_inv}
Consider a hidden neuron $p$ within layer $l$ of a feed-forward neural network. Let $(w_{\text{in},p}^{(l)}, b_{p}^{(l)})$ and $w_{\text{out},p}^{(l+1)}$ denote the parameters directly incoming and outgoing from the neuron respectively\footnote{The incoming and outgoing weights are related by: $(w_{\text{in},p}^{(l)})_q=({w_{\text{out}, q}^{(l-1)}})_p$.}.
If the nonlinearity $\sigma$ is origin-passing ($\sigma(0) = 0$), then the axial subspace $A=\{\theta \in \mathbb{R}^d|w_{\text{in},p}^{(l)}=0,b_{p}^{(l)} = 0, w_{\text{out},p}^{(l+1)}=0\}$ is an invariant set.
\end{proposition}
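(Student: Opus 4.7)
The plan is to show that any SGD update preserves membership in $A$ by proving that, for every training example $(x_i,y_i)$ and every $\theta\in A$, the per-sample gradients with respect to $w_{\text{in},p}^{(l)}$, $b_p^{(l)}$, and $w_{\text{out},p}^{(l+1)}$ all vanish. Once this holds, any mini-batch gradient is also zero in these coordinates, so the SGD update in \eqref{eq:SGD} leaves all three blocks of parameters unchanged; then invariance of $A$ follows by induction on $t$.

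First I would do the forward-pass computation. Fix $\theta\in A$ and any input $x$. Writing $h^{(l-1)}$ for the activation of layer $l-1$, the pre-activation of neuron $p$ is $z_p^{(l)} = w_{\text{in},p}^{(l)}\,h^{(l-1)} + b_p^{(l)} = 0$, and by the origin-passing hypothesis, $\sigma(z_p^{(l)})=\sigma(0)=0$. Thus the activation of neuron $p$ is identically zero across all data points. Moreover, since $w_{\text{out},p}^{(l+1)}=0$, the contribution of neuron $p$ to the pre-activation of layer $l+1$ is also zero, so the rest of the forward pass proceeds exactly as if neuron $p$ were absent.

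Next I would invoke backpropagation to handle the three gradient blocks. For the outgoing weights, the chain rule gives $\partial \ell/\partial w_{\text{out},p,j}^{(l+1)} = \delta_j^{(l+1)}\cdot\sigma(z_p^{(l)}) = 0$, where $\delta^{(l+1)}$ is the upstream error signal into layer $l+1$; this vanishes because the activation of neuron $p$ is zero. For the incoming weights and bias, the chain rule writes $\partial \ell/\partial w_{\text{in},p}^{(l)}$ and $\partial \ell/\partial b_p^{(l)}$ as $\sigma'(z_p^{(l)})\,\bigl(w_{\text{out},p}^{(l+1)\top}\delta^{(l+1)}\bigr)$ times either $h^{(l-1)}$ or $1$; this vanishes because $w_{\text{out},p}^{(l+1)}=0$. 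Hence all three blocks of per-sample gradients are zero on $A$.

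I do not expect a serious obstacle here, since everything reduces to a direct chain-rule calculation; the only subtlety is notational care in indexing incoming versus outgoing weights across layers, and verifying that we really only need $\sigma(0)=0$ (not differentiability or any further property of $\sigma$), because the zero of $\sigma$ is what kills both forward and backward signals through the neuron. With these gradients identically zero on $A$, the SGD update in \eqref{eq:SGD} fixes $w_{\text{in},p}^{(l)}$, $b_p^{(l)}$, and $w_{\text{out},p}^{(l+1)}$ at zero regardless of $\eta$, $\beta$, and the mini-batch choice, and induction on $t$ completes the proof.
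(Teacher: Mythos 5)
Your proof is correct and takes essentially the same route as the paper's: forward pass shows neuron $p$ has zero activation (by $\sigma(0)=0$) and zero downstream effect (by $w_{\text{out},p}^{(l+1)}=0$), and the chain rule then kills all three per-sample gradient blocks—the outgoing block via $\sigma(0)=0$, the incoming and bias blocks via $w_{\text{out},p}^{(l+1)}=0$—so SGD fixes them at zero. One small overstatement: you remark that differentiability of $\sigma$ isn't needed, but computing the backward pass does require $\sigma'$ (or a subgradient) to be defined at $0$; the accurate claim, which the paper also implicitly uses, is that the \emph{value} of $\sigma'(0)$ is irrelevant because it is multiplied by the vanishing factor $w_{\text{out},p}^{(l+1)}$.
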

The subspace corresponding to a sign invariant set represents the parameter space of a sparse subnetwork obtained by removing a hidden neuron.
Most modern neural network architectures employ origin-passing activation functions, which includes linear, hyperbolic tangent, Rectified Linear Unit (ReLU) \cite{nair2010rectified}, Leaky ReLU \cite{maas2013rectifier}, Exponential Linear Unit (ELU) \cite{clevert2015fast}, Swish \cite{ramachandran2017searching}, Sigmoid Linear Unit (SiLU), and Gaussian Error Linear Unit (GELU) \cite{hendrycks2016gaussian}.
Networks with any of these functions will exhibit invariant sets of this nature for each hidden neuron.
A fully-connected network of depth $m$ and width $k$ possesses $(m-1) k$ distinct sign invariant sets.

\begin{proposition}[Permutation Invariant Sets]
\label{proposition:permutation_inv}
    Consider two hidden neurons $p,q$ within the same layer $l$ of a feed-forward neural network.
    Let $(w_{\text{in},p}^{(l)},b_{p}^{(l)}), (w_{\text{in},q}^{(l)}, b_{q}^{(l)})$ denote the parameters directly incoming to the neurons, and $w_{\text{out},p}^{(l+1)}, w_{\text{out},q}^{(l+1)}$ the parameters directly outgoing from the neurons. 
    The affine subspace $A=\{\theta \in \mathbb{R}^d|w_{\text{in},p}^{(l)}=w_{\text{in},q}^{(l)},b_{p}^{(l)}=b_{q}^{(l)}, w_{\text{out},p}^{(l+1)}=w_{\text{out},q}^{(l+1)}\}$ is an invariant set.
\end{proposition}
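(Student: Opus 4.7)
My plan is to exploit the fact that the affine space $A$ is precisely the fixed-point set of the linear involution $\pi$ on parameter space that swaps the parameters of neuron $p$ with those of neuron $q$ (leaving all other coordinates untouched). First I would establish that every per-sample loss is symmetric under this swap, i.e., $\ell(\pi(\theta);x,y) = \ell(\theta;x,y)$ for all $\theta, x, y$. This reduces to showing $f_{\pi(\theta)} = f_\theta$ as functions, which is transparent from the feed-forward structure: the hidden-layer activations $\sigma(w_{\text{in},j}^{(l)} h^{(l-1)} + b_j^{(l)})$ for $j = p,q$ are swapped, and they enter subsequent layers only through the sum $\sum_j w_{\text{out},j}^{(l+1)}\sigma(\cdots)$, which is manifestly symmetric in the $p$ and $q$ indices.

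Next I would invoke a standard equivariance argument: if a differentiable function $g$ satisfies $g = g\circ \pi$ with $\pi$ a linear involution (here $\pi$ is an orthogonal permutation with $\pi^T = \pi^{-1} = \pi$), then the chain rule gives $\nabla g(\theta) = \pi\, \nabla g(\pi(\theta))$. Specializing to $g(\theta) = \ell(\theta;x_i,y_i)$ and to $\theta \in A$, where $\pi(\theta) = \theta$, we obtain $\nabla \ell(\theta;x_i,y_i) = \pi\,\nabla \ell(\theta;x_i,y_i)$. Reading this componentwise says exactly that the gradient blocks associated with $(w_{\text{in},p}^{(l)}, b_p^{(l)}, w_{\text{out},p}^{(l+1)})$ equal those associated with $(w_{\text{in},q}^{(l)}, b_q^{(l)}, w_{\text{out},q}^{(l+1)})$.

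Finally, summing over $i \in \mathcal{B}^{(t)}$ preserves this symmetry by linearity, so the SGD update in Eq.~\ref{eq:SGD} subtracts a vector lying in the fixed subspace of $\pi$ from a point already in that subspace. Hence $\theta^{(t+1)} \in A$, and by induction on $t$ this holds for all $t \ge 0$, for any $\eta > 0$, any batch size $\beta$, and any sequence of mini-batches, which is the definition of an invariant set.

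There is no real technical obstacle; the only thing to get right is careful bookkeeping of which coordinates $\pi$ permutes (only the incoming weights, bias, and outgoing weights attached to neurons $p$ and $q$) and the one-line chain-rule identity for the gradient of a $\pi$-invariant function. The argument does not use any property of $\sigma$ beyond measurability, so Proposition~\ref{proposition:permutation_inv} holds for arbitrary activation functions, in contrast to the origin-passing requirement in Proposition~\ref{proposition:sign_inv}.
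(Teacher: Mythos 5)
Your proof is correct, and it takes a genuinely different route from the paper's. The paper proves Proposition~\ref{proposition:permutation_inv} by direct computation: it reuses the explicit gradient formulas (Eq.~\ref{app:A:eq:outgoing_weights}--\ref{app:a:eq:biases}) derived in the proof of Proposition~\ref{proposition:sign_inv}, substitutes a point $\theta \in A$, and observes by inspection that the gradient blocks for neurons $p$ and $q$ coincide. Your argument instead introduces the permutation involution $\pi$, establishes $\ell \circ \pi = \ell$ from the architectural symmetry of the network function, and derives the gradient symmetry abstractly via the chain-rule identity $\nabla\ell(\theta) = \pi\,\nabla\ell(\pi(\theta))$, specialized to the fixed-point set of $\pi$. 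The paper's computational proof is self-contained and economical given that the gradient formulas were already in hand for the sign-invariance result; your equivariance argument is more conceptual and generalizes immediately to any linear involution (or more generally any orthogonal symmetry of the loss), which is in fact the viewpoint the paper gestures at in App.~\ref{app:invariant_set} when it remarks that ``if a network function is invariant under some action on its parameters, then there can, depending on the action, exist an invariant set associated with this symmetry.'' Your observation that this proof uses no property of $\sigma$ beyond measurability --- in contrast to the origin-passing requirement for sign invariant sets --- is correct and worth noting explicitly. One small bookkeeping point worth being explicit about: $A$ as defined is actually a \emph{linear} subspace (it contains the origin), which is why it coincides exactly with the fixed subspace of $\pi$, and why ``$\theta \in A$ and $\nabla\ell \in A$ implies $\theta - \tfrac{\eta}{\beta}\sum_i \nabla\ell_i \in A$'' is immediate; for a general affine invariant set one would phrase the closure step via the direction space of $A$.
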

The subspace corresponding to a permutation invariant set represents the parameter space of a low-rank subnetwork obtained by constraining two neurons to be identical. 
All feed-forward networks possess this permutation invariance among the neurons in each hidden-layer.
For a fully-connected network of depth $m$ and width $k$ there are $(m - 1)\binom{k}{2}$ distinct permutation invariant sets.

\textbf{Invariant sets generated by symmetry.}
The presence of sign and permutation invariant sets within neural networks is a direct result of reflection symmetries inherent in their architectural design.
In this context, a \textit{symmetry} is defined as any transformation of the parameter that preserves the network's input-output mapping, regardless of the input.
As stated in the following theorem, any approximate linear symmetry defined by a symmetric or orthogonal matrix generates an invariant set:

\begin{theorem}[Symmetry Induced Invariant Sets]
    \label{theorem:involution-invariant-sets}
    Let $Q \in \mathbb{R}^{d \times d}$ be a symmetric ($Q = Q^\intercal$) or orthogonal matrix ($QQ^\intercal = I_d$).
    The affine subspace $A = \{\theta \in \mathbb{R}^d | Q\theta = \theta\}$ is an invariant set if the loss function $\ell(\theta;x_i,y_i)$, for any $i\in [n]$, is approximately $Q$-symmetric around $A$, i.e., for any $\epsilon>0$, there exists $\delta>0$ such that $|\ell(Q\theta;x_i,y_i) - \ell(\theta;x_i,y_i)| < \epsilon d(\theta,A)$ for any $\theta\in\mathbb{R}^d$ satisfying $d(\theta,A)<\delta$ where $d(\theta,A)$ is the Euclidian distance between $\theta$ and $A$\footnote{The Euclidian distance between $\theta$ and $A$ is defined as $d(\theta,A):=\min_{a\in A} \|\theta-a\|$.}. 
\end{theorem}

Remarkably, the sign and permutation invariant sets defined earlier can be understood through this symmetry perspective.
In both cases, the symmetry is defined by a symmetric and orthogonal matrix.
This class of matrices describes the generalized notion of a reflection in $d$-dimensional space and the invariant sets are the axes of reflection, which can be at most $(d-1)$-dimensional.
In the case of permutation invariant sets, any two hidden neurons in the same layer of a network can be permuted without changing the input-output function of the network, and thus the affine subspace $A$ where these two neurons are identical is an invariant set.
On the other hand, in the case of sign invariant sets, the symmetric transformation $Q$ corresponds to a simultaneous sign flip of a neuron's input/output weights. This transformation does not change the input-output function of the network, as long as the activation function is odd. However, even for origin-passing activation functions that are not exactly odd, the loss function is approximately $Q$-symmetric around $A$, since the activation function is approximately odd around its origin. Therefore, Theorem~\ref{theorem:involution-invariant-sets} implies that the axial subspace $A$, where input and output weights are exactly zero, is an invariant set.

\textbf{Additional invariant sets.} In App.~\ref{app:invariant_set}, we discuss other classes of invariant sets, associated with softmax nonlinearities, low dimensional structure in data, and provide a further discussion on the connection between symmetry and invariant sets. 
We also discuss how our definition of invariant sets relates to other works \cite{kunin2020neural, brea2019weight, simsek2021geometry, wei2008dynamics} where symmetry and invariance are used to explore geometric properties of the loss landscape and the presence of critical points.

\section{Gradient Noise Attracts SGD Dynamics towards Invariant Sets}
\vspace{-4pt}
\label{sec:attractivity}

To study the dynamics of SGD we will approximate\footnote{As discussed in \citet{li2017stochastic}, Eq.~\ref{eq:SGF} is an order 1 weak approximation of Eq.~\ref{eq:SGD}.} its trajectory with a stochastic differential equation (SDE), a common analysis technique applied in many works \cite{mandt2016variational, li2017stochastic, jastrzkebski2017three, chaudhari2018stochastic, ali2020implicit, kunin2021limiting}.
SGD can be interpreted as full-batch gradient descent plus a per-step gradient noise term introduced by the random minibatch.
Let $\xi_i(\theta)$ represent the per-sample gradient noise at $\theta$. It is easy to check that $\mathbb{E}[\xi_i(\theta)] = 0$ for all $\theta$.
This motivates studying the SDE, which we refer to as \emph{stochastic gradient flow (SGF)},
\begin{equation}
    \label{eq:SGF}
    d \theta_t = -\nabla\mathcal{L}(\theta_t)dt + \sqrt{\frac{\eta}{\beta}}\Sigma(\theta_t)dB_t, \qquad \theta_0 = \theta^{(0)}.
\end{equation}
Here $B_t$ denotes a standard $d-$dimensional Wiener process and $\Sigma(\theta_t) = \sqrt{\mathbb{E}[\xi_i(\theta)\xi_i(\theta)^\intercal]}$.
The drift term of this SDE is the negative full-batch gradient $-\nabla\mathcal{L}(\theta_t)$, while the diffusion is determined by a spatially-dependent diffusion matrix $D(\theta_t) = \frac{\eta}{2\beta}\Sigma(\theta_t)\Sigma(\theta_t)^\intercal$.
Although this SDE cannot be interpreted as the continuous-time limit of SGD, it matches the first and second moments of the SGD process. 
See Appendix~\ref{app:sgf_sgd} for further discussions on the relationship between SGD and SGF.
Additionally, SGF preserves all affine invariant sets of SGD:
\begin{proposition}[Informal]
    \label{proposition:SGF-preserves-affine-invariant}
    All affine invariant sets of SGD are also affine invariant sets of SGF. \footnote{See App.~\ref{app:sgf-sgd} for the definition of invariant sets for continuous processes and the formal statement.}
\end{proposition}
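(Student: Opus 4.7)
My plan is to derive from SGD-invariance that both the drift and diffusion coefficients of SGF are tangent to any affine invariant set $A$ of SGD, and then invoke pathwise uniqueness of the SGF to conclude that $A$ is invariant under SGF as well.

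First I would algebraically characterize the SGD-invariance of an affine set. Writing $A=\{\theta : M\theta = c\}$ for some matrix $M$ and vector $c$, the definition of SGD-invariance requires $M\bigl(\theta - \frac{\eta}{\beta}\sum_{i\in\mathcal{B}}\nabla_\theta\ell(\theta;x_i,y_i)\bigr) = c$ for every $\theta\in A$, every $\eta>0$, every $\beta$, and every mini-batch $\mathcal{B}$. This reduces to $M\sum_{i\in\mathcal{B}}\nabla_\theta\ell(\theta;x_i,y_i)=0$. Applying it with $\mathcal{B}=\{i\}$ for each $i\in[n]$ separately yields the per-sample tangency
\[ M\,\nabla_\theta\ell(\theta;x_i,y_i)=0,\qquad \forall\,\theta\in A,\ i\in[n]. \]

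Second I would transfer this tangency to the coefficients of \eqref{eq:SGF}. Averaging over $i$ gives $M\nabla\mathcal{L}(\theta)=0$ on $A$, so the drift is tangent to $A$. The per-step noise $\xi_{\mathcal{B}}(\theta)=\frac{1}{\beta}\sum_{i\in\mathcal{B}}\nabla\ell(\theta;x_i,y_i)-\nabla\mathcal{L}(\theta)$ likewise satisfies $M\xi_{\mathcal{B}}(\theta)=0$ on $A$, hence
\[ M\,\Sigma(\theta)\Sigma(\theta)^\intercal M^\intercal \;=\; \mathbb{E}\bigl[M\xi_{\mathcal{B}}(\theta)\xi_{\mathcal{B}}(\theta)^\intercal M^\intercal\bigr] \;=\; 0. \]
Since this equals $(M\Sigma(\theta))(M\Sigma(\theta))^\intercal$, it forces $M\Sigma(\theta)=0$ on $A$, so the diffusion matrix has no component normal to $A$ either.

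Third I would conclude via uniqueness. Setting $\phi_t=M\theta_t-c$, the tangency of drift and diffusion on $A$ means that the constant process $\phi_t\equiv 0$, obtained by solving the SGF restricted to the affine submanifold $A$, is itself a solution of the full SGF with initial condition $\theta_0\in A$. Under the usual Lipschitz/regularity assumptions on $\nabla\mathcal{L}$ and $\Sigma$ implicit in the SDE approximation, pathwise uniqueness of strong solutions then forces any solution starting in $A$ to remain in $A$ almost surely.

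The main obstacle I anticipate is rigorously justifying the final uniqueness step, since $\nabla\mathcal{L}$ and $\Sigma$ need not be globally Lipschitz on $\mathbb{R}^d$. A more self-contained route is to apply It\^o's formula to $g(\theta)=\|M\theta-c\|^2$: both the drift contribution $-2\langle M\theta_t-c,\,M\nabla\mathcal{L}(\theta_t)\rangle$ and the It\^o correction $\tfrac{\eta}{\beta}\|M\Sigma(\theta_t)\|_F^2$ vanish on $A$, so a localization argument using the stopping time $\tau=\inf\{t:g(\theta_t)>0\}$ shows $g(\theta_{t\wedge\tau})\equiv 0$ and therefore $\tau=\infty$ almost surely, yielding invariance without appealing to a general SDE uniqueness theorem.
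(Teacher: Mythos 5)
Your main route — showing the drift and diffusion of the SGF are tangent to any affine SGD-invariant set $A$ and then invoking pathwise uniqueness — is exactly the paper's strategy. The paper also constructs a projected SDE on $A$ and compares it to the full SGF by uniqueness; the bulk of the paper's proof is precisely what you flag as the anticipated obstacle, namely establishing the regularity needed for uniqueness: Lipschitz continuity and boundedness of $\nabla\mathcal{L}$ and of the entries of $D$, and, more delicately, Lipschitz continuity of the matrix square root $\sqrt{D}$, which the paper obtains from the Powers--St\o rmer inequality. Your per-sample tangency $M\nabla\ell(\theta;x_i,y_i)=0$ via singleton batches, and the inference $M\Sigma\Sigma^\intercal M^\intercal=0 \Rightarrow M\Sigma=0$, are both correct and consistent with the implicit step in the paper's proof ($P\nabla\ell=\nabla\ell$ for each $i$ on $A$).

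The suggested It\^o workaround, however, has a gap as written, and it is not actually more self-contained than the uniqueness argument. Showing that the drift contribution $-2\langle M\theta_t-c,\,M\nabla\mathcal{L}(\theta_t)\rangle$ and the It\^o correction $\tfrac{\eta}{\beta}\|M\Sigma(\theta_t)\|_F^2$ vanish \emph{on} $A$, together with the stopping time $\tau=\inf\{t:g(\theta_t)>0\}$, only tells you $g(\theta_{t\wedge\tau})=0$ for all $t$ — which is circular, since for $s<\tau$ the process is in $A$ by definition of $\tau$. It does not show $\tau=\infty$. To close the argument you need to control the behavior of $g$ slightly \emph{off} $A$: using Lipschitz continuity of $\nabla\mathcal{L}$ and of $\Sigma$ (equivalently $\sqrt{D}$), you get $\|M\nabla\mathcal{L}(\theta)\|\lesssim \|M\theta-c\|$ and $\|M\Sigma(\theta)\|\lesssim\|M\theta-c\|$ in a neighborhood of $A$, which turns the It\^o dynamics of $g$ into a geometric bound of the form $dg\leq Cg\,dt + C'g\,dB_t$; a stochastic Gronwall estimate then forces $g\equiv 0$ from $g(\theta_0)=0$. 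But this relies on exactly the same Lipschitz regularity the uniqueness route needs, so the alternative does not sidestep the technical work — it just repackages it.
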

Like deterministic dynamical processes, the stochastic processes of SGF can be attracted toward the invariant sets. 
To describe this attraction, we adopt the concept from stochastic control theory \cite{kushner1967stochastic}.
\begin{definition}[Stochastic Attractivity]
    \label{def:attractivity}
    An invariant set $A\subset\mathbb{R}^d$ of a stochastic process $\{\theta_t\in\mathbb{R}^d:t\geq 0\}$ is stochastically attractive\footnote{This property is called stochastic stability in \cite{kushner1967stochastic}.} if for any $\rho>0$ and $\epsilon>0$, there exists $\delta>0$ such that for any $\theta\in\mathbb{R}^d$ with the Euclidian distance $d(\theta,A)<\delta$,
    \begin{equation}
    \mathbb{P}\left[\sup_{t\geq 0} d(\theta_t,A) \geq \epsilon \bigg\vert \theta_{0} = \theta\right] \leq \rho.
    \end{equation}
\end{definition}

Intuitively, a set is stochastically attractive if there always exists a close initial condition that guarantees all future iterates remain close to the set with high probability. 
Understanding when the dynamics are attracted by a particular invariant set becomes fundamental for understanding the implicit bias of SGD.
To gain insight into this process lets consider a canonical example in one-dimension.

\textbf{Geometric Brownian motion.}
Geometric Brownian Motion (GBM) is a linear stochastic process given by the SDE, $d\theta_{t}=\mu \theta_{t} dt+\zeta \theta_{t} dB_{t}$, where $\mu \in \mathbb{R}$ represents the drift rate, $\zeta \in \mathbb{R}$ the volatility rate. Denote $\theta_0 \in \mathbb{R}$ as the initialization.
The set $A = \{0\}$ is an invariant set of GBM, which, depending on the relationship between $\mu$ and $\zeta$, can be stochastically attractive.
As the trajectory of GBM approaches the invariant set, its movement diminishes.
To account for this effect, we take the logarithmic transformation $z_t = \log (\theta_t)$, where we assume without loss of generality that $\theta_0>0$.
The stochastic process $z_t$ obeys the transformed SDE,
$dz_t = \left(\mu - \zeta^2 / 2\right)dt + \zeta dB_t$,
which has an additional drift term $-\zeta^2 / 2$ from Itô's lemma that attracts the process towards the invariant set.
The total drift is now  determined by a competition between the original drift rate $\mu$ and the additional attractive force driven by the diffusion.
This competition controls whether the invariant set $A = \{0\}$ of GBM is stochastically attractive (i.e. when $\mu < \zeta^2 / 2$ the invariant set is attractive).

\textbf{Stochastic attraction in one-dimension.}
Stochastic attractivity is a local property and thus we might expect SGF to act like GBM around an invariant set, enabling us to derive broader conclusions about stochastic attraction.
If we consider SGF with a smooth loss and diffusion in one-dimension, where without loss of generality we assume $A = \{0\}$ is an invariant set, we can Taylor expand the gradient and the diffusion around the set, yielding geometric Brownian motion near the set:
\begin{equation}
    \label{eq:taylor-expand-one-dim-SGF}
    d\theta_t \approx -  \mathcal{L}''(0)\theta_t dt +  \sqrt{D''(0)}\theta_t dB_t. 
\end{equation}
Here we use $\mathcal{L}'(0) = D(0) = D'(0) = 0$, which is a consequence of $\{0\}$ being an invariant set. 
From Eq.~\ref{eq:taylor-expand-one-dim-SGF} we can formulate a necessary/sufficient condition for stochastic attractivity in one-dimension:
\begin{theorem}[Necessary/Sufficient Condition for Stochastic Attraction in One-Dimension]
    \label{theorem:collapsing-condition-1dim-LD-notation}
    Let $A= \{0\}\in\mathbb{R}$ be an invariant set of $\{\theta_t\in\mathbb{R}:t\geq 0\}$ obeying Eq.~\ref{eq:SGF}.
    Suppose $\mathcal{L}'(\theta):\mathbb{R}\to\mathbb{R},D(\theta):\mathbb{R}\to\mathbb{R}$ are $C^2$ functions such that $D''(0)> 0$. Define rate of attractivity $\alpha=\mathcal{L}''(0)+\frac{1}{2} D''(0)$. $A$ is stochastically attractive if $\alpha>0$, while it is not stochastically attractive if $\alpha<0$. \footnote{When $\alpha = 0$, higher-order derivatives of the loss and diffusion determine the attractivity of $A$.}
\end{theorem}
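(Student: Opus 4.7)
The plan is to reduce SGF near the invariant origin to a perturbation of geometric Brownian motion and then apply the Lyapunov-function method of stochastic stability. The preceding GBM paragraph already supplies the form of the answer, and Itô applied to the logarithmic transform $\log|\theta_t|$ produces exactly the drift correction $-\tfrac12 D''(0)$ that, combined with the bare drift $-\mathcal{L}''(0)$, gives $-\alpha$; the task is to convert this heuristic into rigorous inequalities in the presence of nonlinear remainders.

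I would begin by noting that invariance of $\{0\}$ forces $\mathcal{L}'(0)=0$, $D(0)=0$, and, since $D$ is $C^2$, nonnegative, and attains its minimum at $0$, also $D'(0)=0$; Taylor expansion therefore gives $\mathcal{L}'(\theta)=\mathcal{L}''(0)\theta+O(\theta^2)$ and $D(\theta)=\tfrac12 D''(0)\theta^2+o(\theta^2)$, with $D''(0)>0$ by hypothesis. Because $\sqrt{2D(\theta)}$ is of order $|\theta|$ near $0$, a comparison with GBM shows the origin is an inaccessible boundary of the SDE, so a trajectory with $\theta_0>0$ remains in $(0,\infty)$ almost surely, and by symmetry only that side needs treatment. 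For sufficiency ($\alpha>0$) I would use the power Lyapunov function $V(\theta)=\theta^p$ with $p\in(0,2\alpha/D''(0))$. A short computation using the Taylor expansions above yields
\begin{equation*}
\mathcal{A}V(\theta)=-p\bigl[\alpha-\tfrac12 p D''(0)\bigr]V(\theta)+o(V(\theta))\qquad (\theta\to 0),
\end{equation*}
so that $\mathcal{A}V\leq -(c/2)V$ on some $(0,\epsilon_0)$ with $c=p[\alpha-\tfrac12 p D''(0)]>0$. Then $V(\theta_{t\wedge\tau_\epsilon})$ is a positive supermartingale up to the first exit time from $(0,\epsilon)$ (for any $\epsilon\leq\epsilon_0$), and Doob's maximal inequality delivers $\mathbb{P}(\sup_{t\geq 0}\theta_t\geq\epsilon\mid\theta_0=\theta)\leq(\theta/\epsilon)^p$, which vanishes as $\theta\to 0$ and supplies the $\delta$ demanded by Definition~\ref{def:attractivity}.

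For necessity ($\alpha<0$) I would use $V(\theta)=\log\theta$, for which Itô gives $\mathcal{A}V(\theta)\to -\alpha>0$ as $\theta\to 0^+$; hence $\log\theta_t$ has uniformly positive expected drift on a small enough neighborhood $(0,\epsilon_0)$, and a standard recurrence argument forces the process to exit $(0,\epsilon_0)$ almost surely in finite time from any interior starting point. Consequently $\mathbb{P}(\sup_t|\theta_t|\geq\epsilon_0\mid\theta_0)=1$ for every $\theta_0\in(0,\epsilon_0)$, contradicting attractivity for any $\rho<1$. The main obstacle is the technical propagation of the Taylor remainders through the Lyapunov computation: because only $C^2$ regularity is assumed, the remainder in $D$ is merely $o(\theta^2)$, producing an $o(V)$ term in the bound on $\mathcal{A}V$ that must be absorbed into a strict fraction of the leading term by shrinking $\epsilon_0$. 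A secondary subtlety is justifying inaccessibility of the origin under the nonlinear SGF for all $t\geq 0$, which reduces to the GBM comparison noted above exploiting the quadratic vanishing of $D$ at $0$.
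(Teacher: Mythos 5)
Your proposal is correct and takes essentially the same route as the paper: sufficiency via a small-power Lyapunov function $\theta^p$ together with Doob's supermartingale inequality (the paper derives this as a special case of its high-dimensional Theorem~\ref{theorem:collapsing-condition-LD-notation}, which uses the same $l^\lambda$ construction), and necessity by passing to a coordinate in which the diffusion is controlled and the drift is uniformly positive. The only cosmetic difference is in the necessity half: you use $\log\theta_t$ (whose diffusion coefficient $\sqrt{2D(\theta)}/\theta$ is merely bounded and converges to $\sqrt{D''(0)}$), whereas the paper uses the Lamperti coordinate $Y_t=2^{-1/2}\int^{\theta_t}D^{-1/2}$ to make the diffusion exactly unit; both then conclude by comparison with Brownian motion plus drift.
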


One of the most surprising implications of Theorem~\ref{theorem:collapsing-condition-1dim-LD-notation} is that SGF can potentially converge to a saddle-point or local maxima of a loss landscape, an observation previously made by \citet{ziyin2021sgd}.
To see this, recall that $D(\theta) \ge  0$ by  definition, and $D(0) = 0$ as $\{0\}$ is an invariant set. As a result, $D'(0)=0$ and $D''(0)\ge0$ by the continuity assumption of $D$.
The collapsing condition, therefore crucially depends on the curvature of the loss function $\mathcal{L}$ at $\theta=0$.
When $D''(0)$ is strictly positive, the collapsing condition can still be satisfied with negative curvature provided that $\mathcal{L}''(0)>-\frac{1}{2}D''(0)$.
Given that $D$ is proportional to $\eta/\beta$, the learning rate to batch size ratio determines the maximum attainable steepness for an invariant set to be attractive.

{\bf An illustrative example.}
Consider SGF in a double-well potential $\mathcal{L}(\theta) = \frac{1}{4}(\theta^2 - \mu)^2$ with multiplicative diffusion\footnote{Here we build on a substantial literature, originating from the seminal work by \citet{kramers1940brownian}, that investigates the escape rate between minima of a particle in a double-well potential with constant diffusion \cite{berglund2011kramers}.}, such that the dynamics are $d\theta_t = -(\theta_t^3-\mu\theta_t) dt +  \zeta\theta_t dB_t$ where $\mu, \zeta > 0$.
The minima of the potential are located at $\theta = \pm\sqrt{\mu}$, while
$\theta = 0$ is a local maximum and forms an invariant set.
This example is special as the dynamics have an analytical steady-state distribution $p_{ss}(\theta)$, given any initialization.
Thus, determining the stochastic attractivity of $\{0\}$ can be achieved by examining the steady-state distribution.
As in GBM, we assume, without loss of generality, that $\theta_0 > 0$ and consider the logarithm process $z_t=\log (\theta_t)$.
In this new coordinate, the noise term is constant, which allows us to determine the steady-state distribution. 
Transforming back to the original coordinate, we find that $p_{ss}(\theta)$ is given by a Gibbs distribution $p_{ss}(\theta) \propto e^{-\kappa \Psi(\theta)}$ for a modified potential\footnote{The interpretation of a modified potential determined by the gradient noise that drives the learning dynamics was explored in \cite{chaudhari2018stochastic} and \cite{kunin2021limiting}. Stochastic collapse is when the partition function for this modified loss diverges.} $\Psi(\theta) = \theta^2 / 2 - (2\mu/\zeta^2 - 2)\log(\theta)$ with constant $\kappa = 2 / \zeta^2$ and partition function $Z = \int_{-\infty}^\infty e^{-\kappa \Psi(\theta)}d\theta$.
When $\mu \le \zeta^2 / 2$, the partition function diverges and $p_{ss}(\theta)$ collapses to a Dirac delta distribution at $\theta=0$.
This transition agrees with the collapsing condition from Theorem~\ref{theorem:collapsing-condition-1dim-LD-notation}.
\begin{figure}[t]
    \centering
    \includegraphics[width=\textwidth]{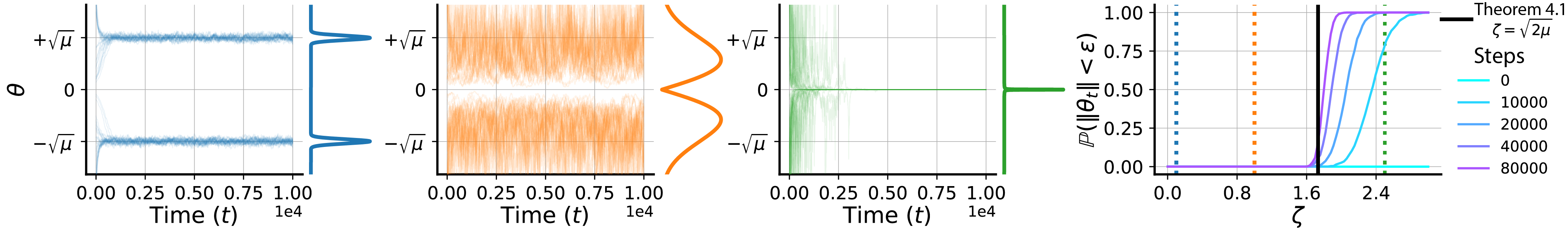}
    \caption{\textbf{Stochastic collapse in a quartic loss.} The left three plots show sample trajectories with the same, random initializations driven by the SDE $d\theta_t = -(\theta_t^3-\mu\theta_t) dt +  \zeta\theta_t dB_t$ for different values of $\zeta$.
    The analytic stationary solution is plotted on the side plot.
    \textbf{Leftmost:} for small $\zeta$ the steady-state distribution is two bumps around the minima $\theta = \pm \sqrt{\mu}$. 
    \textbf{Middle left:} with increasing $\zeta$ the distribution spreads out and is biased towards $\theta = 0$.
    \textbf{Middle right:} when $\zeta$ surpasses the collapsing condition $\sqrt{2\mu}$ the steady-state distribution collapses to a dirac delta distribution at $\theta = 0$.
    \textbf{Rightmost:} the empirical probability of the sample trajectories at different steps being within $\epsilon$ of the origin as a function of increasing gradient noise.
    As we see empirically, there is a sudden phase transition which aligns with the collapsing condition from Theorem~\ref{theorem:collapsing-condition-1dim-LD-notation}.
    \vspace{-0.6cm}
    }
    \label{fig:quartic-loss}
\end{figure}

\textbf{Stochastic attraction in high-dimensions.}
To extend the collapsing condition derived in Theorem~\ref{theorem:collapsing-condition-1dim-LD-notation} to high-dimensional cases, a natural idea would be to consider all one-dimensional slices of parameter space orthogonal to the invariant set.
However, one challenge is that some of these slices might satisfy the collapsing condition while others do not.
This can result in complex dynamics near the boundary of the invariant set making it difficult to derive a necessary and sufficient condition for attractivity in high-dimensions. 
Nonetheless, we can derive a sufficient condition:
\begin{theorem}[A Sufficient Condition for Stochastic Attraction in High-Dimensions]
    \label{theorem:collapsing-condition-LD-notation}
    Let $A\subset\mathbb{R}^d$ be a $d_{A}$-dimensional affine subset, and
    a stochastic process $\{\theta_t\in\mathbb{R}^d: t\geq 0 \}$ obey Eq.~\ref{eq:SGF} in $A_c$, open $c$-neighborhood of $A$ with some $c>0$.
    Suppose $\mathcal{L}: A_c \to \mathbb{R}$ is a $C^3$-function whose first and second-order derivatives are $L$-Lipschitz continuous. $ D:A_c\to\mathbb{R}^{d\times d}$ is the diffusion matrix such that the second-order derivatives of its elements are $L$-Lipschitz continuous.
    Furthermore, we assume that all the elements of $\sqrt{D}:A_c\to\mathbb{R}^{d\times d}$ are $L$-Lipschitz continous.
    Let $D_{\perp} = P_{\perp} D P_{\perp}$ where $P_{\perp}:\mathbb{R}^d\to \mathbb{R}^{d}$ projects to the normal space of $A$.
    If there exists $\delta>0$ such that $\nabla^2_{\hat{n}} \hat{n}^\intercal D \hat{n}^\intercal > \delta$ and
    \begin{equation}
        \label{eq:sufficient-condition-general-case}
        \nabla^2_{\hat n} \left(\mathcal{L} - \frac{1}{2}\operatorname{Tr}D_{\perp}\ + (1-\delta) \hat n^\intercal D \hat n \right)> 0,
    \end{equation}
    for any unit normal vector $\hat n\in\mathbb{R}^d$ perpendicular to $A$ and $\theta\in A$, then $A$ is stochastically attractive.
\end{theorem}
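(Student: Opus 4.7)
The plan is to establish stochastic attractivity of $A$ by constructing a Lyapunov function $V$ supported in a neighborhood of $A$, showing that $V(\theta_t)$ is a local supermartingale under SGF, and then invoking a classical stochastic stability theorem of Khasminskii type together with Doob's maximal inequality to recover the probabilistic bound in Def.~\ref{def:attractivity}. Motivated by the logarithmic change of variable used for the one-dimensional case (Theorem~\ref{theorem:collapsing-condition-1dim-LD-notation}), the natural candidate is $V(\theta) = d(\theta, A)^{2\delta}$, where $\delta>0$ is the same constant appearing in the hypothesis; in the one-dimensional case with $A=\{0\}$, Itô's formula applied to $V=|\theta|^{2\delta}$ already reproduces the collapsing condition $\mathcal{L}''+(\tfrac12-\delta)D''>0$, which is precisely the $d_A=0$ instance of the theorem's hypothesis.

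The core technical step is computing the generator $\mathcal{A}V = -\nabla V\cdot\nabla\mathcal{L} + \operatorname{Tr}(D\,\nabla^{2}V)$ and showing it is non-positive near $A$. Writing $y = P_{\perp}(\theta - \theta_{0})$ for any fixed $\theta_{0}\in A$, so $r=\|y\|$ and $\hat n = y/r$, a direct computation yields $\nabla V = 2\delta\, r^{2\delta-1}\hat n$ and $\nabla^{2}V = 2\delta\, r^{2\delta-2}\bigl[P_{\perp}+(2\delta-2)\hat n\hat n^{\intercal}\bigr]$. Since $\operatorname{Tr}(DP_{\perp})=\operatorname{Tr}(P_{\perp}DP_{\perp})=\operatorname{Tr}(D_{\perp})$ by cyclicity and idempotence of $P_{\perp}$, this gives
\[
\mathcal{A}V \;=\; -2\delta\, r^{2\delta-1}\,\hat n\cdot\nabla\mathcal{L}\;+\;2\delta\, r^{2\delta-2}\bigl[\operatorname{Tr}(D_{\perp})+(2\delta-2)\,\hat n^{\intercal}D\hat n\bigr].
\]

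Next, I would Taylor expand each piece around the orthogonal projection $\theta_{A}\in A$ of $\theta$. Invariance of $A$ under SGF (Prop.~\ref{proposition:SGF-preserves-affine-invariant}) forces the drift to be tangent to $A$ on $A$, so $\hat n\cdot\nabla\mathcal{L}(\theta_{A})=0$ and the Lipschitz bound on $\nabla^{2}\mathcal{L}$ gives $\hat n\cdot\nabla\mathcal{L}(\theta) = r\,\nabla^{2}_{\hat n}\mathcal{L}(\theta_{A}) + O(r^{2})$. Invariance likewise forces $D_{\perp}=0$ on $A$; because $D_{\perp}\succeq 0$ globally, both $\operatorname{Tr}(D_{\perp})$ and $\hat n^{\intercal}D\hat n = \hat n^{\intercal}D_{\perp}\hat n$ are non-negative scalars attaining their minima on $A$, so their gradients at $\theta_{A}$ vanish and the Lipschitz hypothesis on the second derivatives of $D$ yields $\operatorname{Tr}(D_{\perp})(\theta) = \tfrac{r^{2}}{2}\nabla^{2}_{\hat n}\operatorname{Tr}(D_{\perp})(\theta_{A}) + O(r^{3})$ and similarly for $\hat n^{\intercal}D\hat n$. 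Substituting and collecting terms collapses the leading-order behavior to
\[
\mathcal{A}V \;=\; -2\delta\, r^{2\delta}\,\nabla^{2}_{\hat n}\Bigl[\mathcal{L}-\tfrac{1}{2}\operatorname{Tr}(D_{\perp})+(1-\delta)\,\hat n^{\intercal}D\hat n\Bigr](\theta_{A})\;+\;O\!\bigl(r^{2\delta+1}\bigr),
\]
so the hypothesis of the theorem makes $\mathcal{A}V\le 0$ on a sufficiently small tubular neighborhood of $A$, exhibiting $V(\theta_{t})$ as a non-negative local supermartingale that vanishes exactly on $A$.

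The hard part, I expect, is making the Taylor remainders uniform in the base point $\theta_{A}$ and the direction $\hat n$ — this is where the $L$-Lipschitz hypotheses on the second derivatives of $\mathcal{L}$ and of the entries of $D$ are genuinely needed, and the assumption $\nabla^{2}_{\hat n}\hat n^{\intercal}D\hat n>\delta$ pins down the exponent so that the $(1-\delta)$ coefficient matches the generator calculation exactly while keeping the noise non-degenerate in every normal direction; the auxiliary Lipschitz bound on $\sqrt{D}$ guarantees strong existence and uniqueness of the SDE near $A$, legitimizing the application of Itô's lemma. With $\mathcal{A}V\le 0$ in hand, the rest is standard: Doob's inequality applied to the supermartingale $V(\theta_{t\wedge\tau})$, with $\tau$ the exit time from the tubular neighborhood, bounds $\mathbb{P}[\sup_{t}d(\theta_{t},A)\ge\epsilon\mid\theta_{0}=\theta]$ by $V(\theta)/\epsilon^{2\delta}$, which can be made arbitrarily small by choosing $\theta$ sufficiently close to $A$, yielding stochastic attractivity in the sense of Def.~\ref{def:attractivity}.
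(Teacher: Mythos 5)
Your approach is in essence the same as the paper's: take a power of the Euclidean distance to $A$ as a Lyapunov function, compute its generator, use invariance of $A$ under SGF to kill the zeroth- and first-order Taylor terms of the drift and of $D_\perp$ on $A$, show that the generator is nonpositive in a uniform tubular neighborhood via the Lipschitz hypotheses, and invoke a local Doob/Kushner maximal inequality. Your expansion of $\mathcal{A}V$, the identity $\operatorname{Tr}(DP_\perp)=\operatorname{Tr}D_\perp$, and the identification of why the $L$-Lipschitz hypotheses on the second derivatives are genuinely needed all match the paper's argument closely.

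The one substantive gap is your exponent choice $V=d(\theta,A)^{2\delta}$. With this choice the leading-order expansion gives
\[
\mathcal{A}V \;=\; -2\delta\, r^{2\delta}\,\nabla^{2}_{\hat n}\Bigl[\mathcal{L}-\tfrac{1}{2}\operatorname{Tr}D_{\perp}+(1-\delta)\,\hat n^{\intercal}D\hat n\Bigr](\theta_{A})\;+\;O(r^{2\delta+1}),
\]
and the hypothesis only guarantees that the bracket is strictly positive \emph{pointwise} on $A$, not bounded below uniformly. Since $A$ is an unbounded affine set, a pointwise-positive bracket can be arbitrarily small, and then the uniform $O(r^{2\delta+1})$ remainder cannot be absorbed on any fixed tubular neighborhood; the claim ``$\mathcal{A}V\le 0$ on a sufficiently small tubular neighborhood'' therefore does not follow. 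The exponent $2\delta$ is exactly at the critical boundary: writing $\mathcal{A}(l^\lambda)=2\lambda\, l^{\lambda-2} G\bigl(\tfrac{F}{2G}-1+\tfrac{\lambda}{2}\bigr)$ with $F,G$ the natural drift and normal-diffusion scalars, the hypothesis gives $\lim_{\xi\to 0}\tfrac{F}{2G}<1-\delta$ and Lipschitz control $\tfrac{F}{2G}\le(1-\delta)+C\xi$, so $\tfrac{F}{2G}-1+\tfrac{\lambda}{2}\le (\tfrac{\lambda}{2}-\delta)+C\xi$, which is $<0$ uniformly for small $\xi$ only when $\lambda<2\delta$ strictly. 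The paper's proof takes $\lambda=\delta$, which leaves a margin of $\delta/2$ that absorbs the Lipschitz growth in $\xi$; your $\lambda = 2\delta$ leaves zero margin. The fix is cheap — replace $2\delta$ by any $\lambda\in(0,2\delta)$ — but as written your claim that the $(1-\delta)$ coefficient ``pins down the exponent'' to $2\delta$ is a miscalibration: the exponent should be strictly smaller than that critical value, and the assumption $\nabla^{2}_{\hat n}\hat n^{\intercal}D\hat n>\delta$ is what makes the scalar $G$ nondegenerate, not what determines $\lambda$.
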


To prove Theorem~\ref{theorem:collapsing-condition-LD-notation} (see App.~\ref{app:proofs}),  we consider a family of distance measures between $\theta_t$ and the invariant set $A$. 
Eq.~\ref{eq:sufficient-condition-general-case} ensures that one of the distance measures is super-martingale in a neighborhood of the invariant set, allowing us to apply a version of Doob's maximal inequality (Lemma~1 in \cite{kushner1967stochastic}). 

\newpage
\section{Attractivity of Invariant Sets in Deep Neural Networks}
\label{sec:dnn}

\FloatBarrier
\setlength\intextsep{0pt} 
\begin{wrapfigure}{r}[0pt]{0.61\textwidth} 
    \centering
    \vspace{-4pt}
    \includegraphics[width=1.0\linewidth]{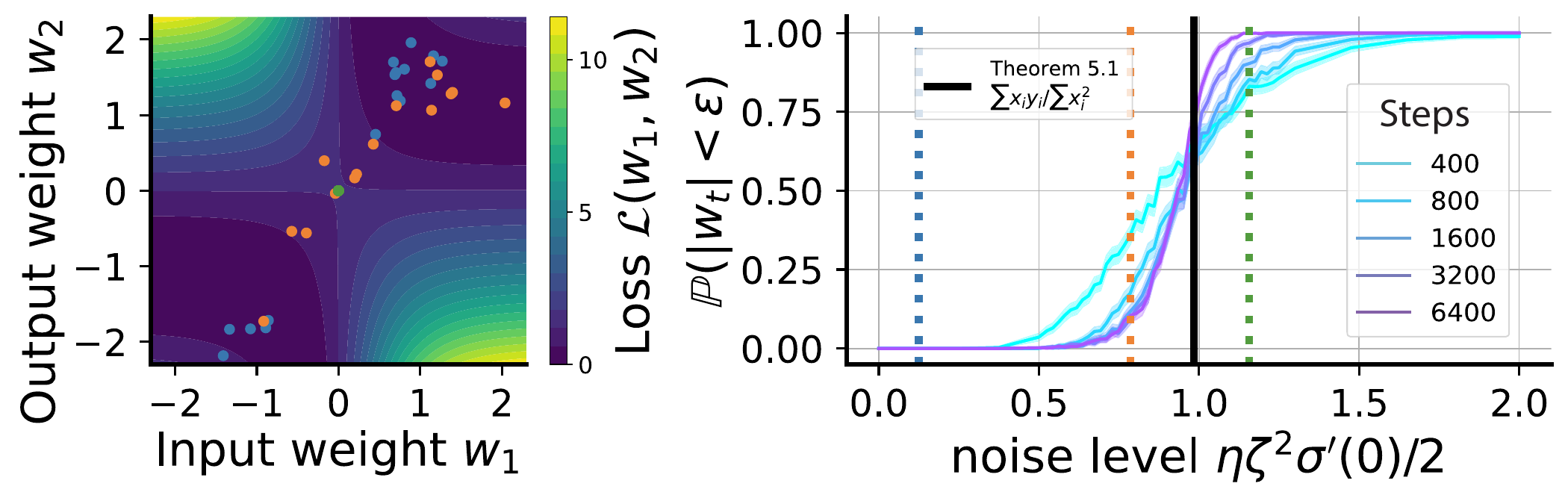}
 \caption{
        \textbf{Stochastic collapse of a single neuron.}
    The SDE dynamics of a single-neuron model 
    are simulated with various noise levels $\zeta$. {\bf Left:} A scatter map of trained weights $(w_1,w_2)$. The colors of the dots represent the noise level, corresponding to the vertical lines in the right panel. For each noise level, we plot 15 trained models with different noise realizations. The background heatmap shows the loss landscape. {\bf Right:} The colored lines represent the empirical probability that  $\sqrt{w_1^2+w_2^2}<\epsilon=10^{-2}$ (from $10^3$ samples) after a given number of update steps. We observe a sudden transition that aligns with the collapsing condition from Theorem~\ref{theorem:collapsing-condition-sign-invariant-informal-version}. 
    }
    \label{fig:sign-inv-set}
    \vspace{-8pt}
\end{wrapfigure}
\FloatBarrier

In this section we explore theoretically and empirically the stochastic attractivity of invariant sets in deep neural networks.

\textbf{Sign invariant sets.}
Sign invariant sets correspond to the parameter space of a sparse subnetwork obtained by removing a hidden neuron.
Here, we demonstrate the stochastic collapse to a sign invariant set in a minimal model of neural networks---a scalar single neuron model: $f(x;w_1,w_2)=w_2\sigma(w_1 x)$ with $w_1,w_2,x\in\mathbb{R}$.
Suppose this model is trained via label-noise gradient descent and using the $L^2$-loss $\mathcal{L}(w_1,w_2)=\frac{1}{n}\sum_{i\in [n]}(y_i-f(x_i;w_1,w_2))^2$ on a non-empty data set $\{(x_1,y_1),\cdots,(x_n,y_n)\}$ such that $\sum_{i\in [n]}x_i^2\neq0$.
Exploiting Theorem~\ref{theorem:collapsing-condition-1dim-LD-notation}, we can determine the attractivity of the sign invariant set $A = \{(0,0)\}$ for this network.

\begin{theorem}[Informal]
\label{theorem:collapsing-condition-sign-invariant-informal-version}
    Let $\{(w_{1,t}, w_{2,t})\in \mathbb{R}^2:t\geq 0\}$ be a process obeying label-noise gradient flow with $L^2$-loss of a single scalar neuron in a neighborhood $U$ of $A=\{(0,0)\}$, with learning rate $\eta>0$ and noise amplitude $\zeta>0$.
    Suppose the activation function $\sigma:\mathbb{R}\to\mathbb{R}$ is smooth satisfying $\sigma(0)=0$,  $\sigma'(0)>0$. 
    The invariant set $A$ is stochastically attractive if $\frac{\sigma'(0)\eta\zeta^2}{2} > \frac{\left|\sum_{i \in [n]}x_iy_i \right|}{\sum_{i \in [n]}x_i^2}$.
\end{theorem}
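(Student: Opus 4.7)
The plan is to exploit a special rank-one structure near $(0,0)$: to leading order, both the full-batch gradient and the stochastic-gradient noise are aligned along the same direction $(w_2, w_1)^{\intercal}$. This reduces the two-dimensional SGF to a pair of decoupled one-dimensional geometric Brownian motions, to which Theorem~\ref{theorem:collapsing-condition-1dim-LD-notation} applies directly.

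First I would Taylor expand around $(0,0)$. Using $f(x)=w_2\sigma(w_1 x)$ together with $\sigma(0)=0$, one finds $\nabla f(x_i)\approx \sigma'(0)x_i(w_2, w_1)^{\intercal}$, so that
\begin{equation*}
-\nabla \mathcal{L}(w_1, w_2) \approx 2\sigma'(0)\Bigl(\sum_i x_i y_i\Bigr) \begin{pmatrix} w_2 \\ w_1 \end{pmatrix},
\end{equation*}
while the label-noise covariance $\Sigma\Sigma^{\intercal}$ is, to leading order, a positive multiple of $\zeta^2\sigma'(0)^2\bigl(\sum_i x_i^2\bigr)(w_2, w_1)^{\intercal}(w_2, w_1)$. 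Being rank one, the diffusion term collapses to a scalar Brownian increment $dW_t$ multiplying the vector $(w_2, w_1)^{\intercal}$.

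Next, I would change to coordinates $u=w_1+w_2$ and $v=w_1-w_2$, in which the linearized SGF decouples (driven by the same $W_t$) into
\begin{equation*}
du_t = 2\sigma'(0)\Bigl(\sum_i x_i y_i\Bigr) u_t\, dt + \kappa\, u_t\, dW_t, \qquad dv_t = -2\sigma'(0)\Bigl(\sum_i x_i y_i\Bigr) v_t\, dt - \kappa\, v_t\, dW_t,
\end{equation*}
with $\kappa^2$ proportional to $\eta\zeta^2\sigma'(0)^2\sum_i x_i^2$. Each coordinate is a geometric Brownian motion, and the GBM attractivity criterion $\mu<\tfrac{1}{2}\kappa^2$ (i.e., Theorem~\ref{theorem:collapsing-condition-1dim-LD-notation} applied to each one-dimensional process) applied to whichever of the two signs is positive yields, after collecting constants, the stated condition $\tfrac{1}{2}\sigma'(0)\eta\zeta^2 > |\sum_i x_i y_i|/\sum_i x_i^2$. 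Simultaneous attraction of $u$ and $v$ to $0$ is equivalent to attraction of $(w_1, w_2)$ to $(0,0)$.

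The main obstacle is to control the higher-order corrections rigorously, since the true drift and diffusion are only approximately rank-one, and the $(u,v)$-decoupling is exact only at the linearized level. I would handle this by restricting to a small enough neighborhood $U$ of the origin and using smoothness of $\sigma$ to bound the remainder drift and diffusion uniformly as $O(\|\theta\|^2)$, then invoking the same supermartingale / Doob's-maximal-inequality machinery used in the proof of Theorem~\ref{theorem:collapsing-condition-LD-notation}: the Lyapunov function provided by the leading GBMs has a strict attractivity margin whenever the stated inequality is strict, and this margin absorbs the higher-order remainders on a sufficiently small neighborhood. A direct application of Theorem~\ref{theorem:collapsing-condition-LD-notation} with $d_A=0$ would be natural in principle, but it degenerates in this setting because $\nabla^2_{\hat{n}}(\hat{n}^{\intercal}D\hat{n})$ vanishes when $\hat{n}$ is aligned with either coordinate axis $(1,0)$ or $(0,1)$; the $(u,v)$ reduction is precisely the change of basis that diagonalizes this degenerate structure.
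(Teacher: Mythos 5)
Your proposal is correct and takes essentially the same route as the paper: the paper also changes to sum/difference coordinates $p=w_1+w_2$, $s=w_1-w_2$, reduces to two one-dimensional GBM-like processes at the origin, checks the 1D attractivity condition for each (with opposite signs on $\sum_i x_i y_i$, yielding the absolute value), and intersects the two resulting invariant lines. The one procedural difference is that the paper makes the ``control the higher-order corrections and combine the two coordinates'' step rigorous via a stopped-process variant of the high-dimensional sufficient condition (Lemma~\ref{lemma:collapsing-condition-stopped-process}), applied to the lines $\{p=0\}$ and $\{s=0\}$ restricted to a small box; your final remark that a direct application of Theorem~\ref{theorem:collapsing-condition-LD-notation} with $d_A=0$ degenerates because $\nabla^2_{\hat{n}}\hat{n}^\intercal D\hat{n}$ vanishes for axis-aligned $\hat{n}$ is exactly the reason the paper proceeds via the two one-dimensional slices rather than the point.
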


The term $\frac{\left|\sum_{i \in [n]}x_iy_i \right|}{\sum_{i \in [n]}x_i^2}$ can be thought of as the signal of the dataset.
Thus, Theorem~\ref{theorem:collapsing-condition-sign-invariant-informal-version} states that stochastic attractivity is determined by balancing the signal and noise -- an idea we will see later in Sec.~\ref{sec:linear_teacher_student} as the origin of generalization benefits from stochastic collapse.
We leave the poof in App.~\ref{app:sign-inv-set-single-neuron}.

\textbf{Permutation invariant sets.}
Permutation invariant sets correspond to subnetworks with fewer unique hidden neurons.
An important question is to what extent are permutation invariant sets attractive -- as this would indicate an implicit bias towards a low-rank model. 
We provide intuitive insights into the attractivity of permutation invariant sets through a toy example of a two-neuron neural network in App.~\ref{app:two_neuron_toy_model}. 
More importantly, we empirically test whether permutation invariant sets can be attractive in deep neural networks via experiments using VGG-16 \cite{simonyan2014very} and ResNet-18 \cite{he2016deep} trained on CIFAR-10 and CIFAR-100 respectively \cite{krizhevsky2009learning}.
Since the ReLU activation has additional symmetry resulting in a potentially larger invariant set \cite{kunin2020neural}, we replace ReLU with GELU activation in all our experiments below. To detect stochastic collapse to a permutation invariant set, we compute the normalized pairwise distance between parameters for neurons from the same layer, defined by $\text{dist}(w^{(l)}_i,w^{(l)}_j)\equiv\frac{2\|w^{(l)}_i-w^{(l)}_j\|^2_2}{\|w^{(l)}_i\|^2_2+\|w^{(l)}_j\|^2_2}$. Strikingly, hierarchical clustering based on this distance reveals multiple large clusters of many neurons with near identical incoming and outgoing parameters after training (Fig.~\ref{fig:permutation_dnn}). Such clustering implies SGD implicitly drives weight matrices towards low-rank structures in many layers via attraction towards permutation invariant sets.  

\begin{figure}[t]
    \centering
    \begin{subfigure}{\textwidth}
         \centering
    \includegraphics[width=\textwidth]{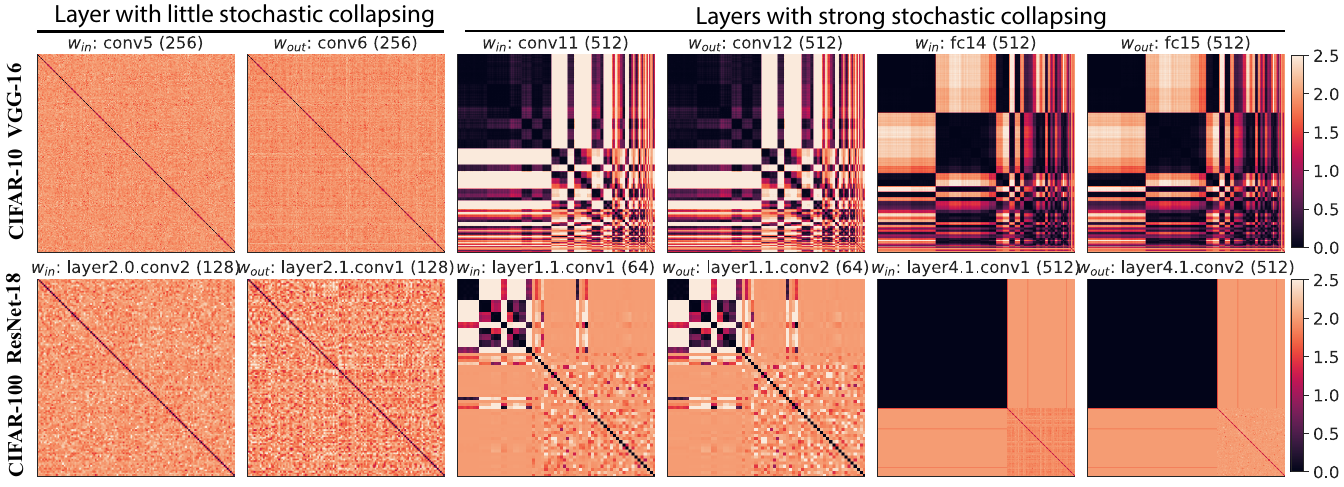}
     \end{subfigure}
    \caption{
    \textbf{Evidence of stochastic collapse towards permutation invariant sets in deep neural networks}. Each pair of plots shows the normalized distance matrix between incoming parameters ($w_{\text{in}}$) and outgoing parameters ($w_{\text{out}}$) of neurons within the same hidden layer.  We show two pairs of plots with little stochastic collapsing and four pairs of plots with strong stochastic collapsing: three pairs for three different layers in VGG-16 trained for $10^5$ steps on CIFAR-10 (\textbf{top row}), and three pairs for three different layers in a ResNet-18 trained for $10^6$ steps on CIFAR-100 (\textbf{bottom row}). For each pair of plots, we sort the neurons by hierarchical clustering according to normalized distances between incoming parameters {\it only} (left plot in each pair), and then show the distance matrix between outgoing parameters using the {\it same} sorting of neurons (right plot in each pair). The similarity between each plot in a pair indicates that clusters of neurons with similar incoming parameters {\it also} have similar outgoing parameters. See App.~\ref{app:exp_details} for experimental details.
    }
    \vspace{-8pt}
\label{fig:permutation_dnn}
\end{figure}

To explore the influence of the learning rate and batch size on the attraction towards sign and permutation invariant sets, we trained VGG-16 and ResNet-18 on CIFAR-10 and CIFAR-100, respectively, while varying these hyperparameters. We defined \emph{vanishing} neurons as those with incoming and outgoing weights under 10\% of the maximum norm for that layer, and identified the number of independent non-vanishing neurons by clustering the concatenated vector of incoming and outgoing parameters based on normalized distance. Two neurons were considered \emph{identical} if their distance in parameter space was 10\% of their norms, a stringent criterion given the high-dimensionality of the weight vectors. 

\begin{figure}[t]
    \centering
    \begin{subfigure}{0.999\textwidth}
         \centering
    \includegraphics[width=0.99\textwidth]{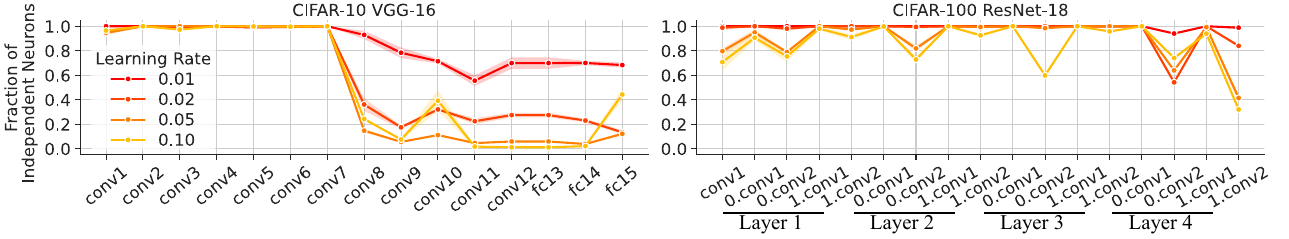}
     \end{subfigure}
    \caption{
    \textbf{
    Larger learning rates intensify stochastic collapse.} This figure illustrates how the fraction of independent neurons per layer in VGG-16 trained on CIFAR-10 (\textbf{left column}) and ResNet-18 trained on CIFAR-100 (\textbf{right column}) varies with changes in learning rates. The networks are evaluated at training steps of $10^5$.  A reduced percentage of independent neurons indicates stronger stochastic collapse. See App.~\ref{app:exp_details} for further details.
    }
    \label{fig:permutation_dnn_lr_sigma}
    \vspace{-0.4cm}
\end{figure}

We found that increased learning rates typically intensify the stochastic attraction to invariant sets by reducing the number of independent neurons (Fig.~\ref{fig:permutation_dnn_lr_sigma}). A large reduction in the fraction of independent neurons was observed in VGG-16 between conv7 and conv8, where the number of channels increase from 256 to 512, indicating this excess model capacity is counteracted by stochastic collapse. Also, we note that Proposition~\ref{proposition:permutation_inv} and \ref{proposition:sign_inv} do not apply to neurons with residual connections. More strict definitions are required, as described in Proposition~\ref{app:permutation_inv_resnet} and \ref{app:sign_inv_resnet}. We believe, this stricter constraint is the source of layer-to-layer oscillations in the fraction of independent neurons for ResNet18 in Fig.~\ref{fig:permutation_dnn_lr_sigma}.
Surprisingly, we noticed that unlike learning rate effects, changes in batch size produce complex shifts in the number of independent neurons. See Fig.~\ref{app:collapse_dependence_on_batch} for a further discussion.

\section{How Stochastic Collapse Can Improve Generalization}
\vspace{-4pt}
\label{sec:linear_teacher_student}

To understand how stochastic collapse can benefit generalization, we will theoretically analyze learning dynamics in a teacher-student model for linear networks, a well established framework for studying generalization \cite{advani2020high,lampinen2018analytic,goldt2019dynamics}. 
This analysis not only provably connects the phenomenon of stochastic collapse to generalization benefits, but also suggests a stochastic collapse based mechanism for why successful learning rate schedules improve generalization. We then provide direct evidence for this theoretically suggested mechanism in deep neural networks.  

\textbf{Insights from a two-layer linear neural network.}
We build on the analysis of full-batch gradient learning dynamics of training error \cite{saxe2013exact} and test error \cite{lampinen2018analytic} in the limit of infinitesimal learning rates for two-layer linear neural networks in a student-teacher setting.  In our new analysis here we incorporate the important new ingredient of stochastic gradients, which dramatically alters the learning dynamics. The details of our analysis are presented fully in App.~\ref{app:linear_teacher_student}.

A teacher neural network with a low-rank weight matrix $\bar{W}$ generates a training dataset by providing noise corrupted outputs to a set of random Gaussian inputs.  The input-output data covariance matrix $\Sigma_{yx}$ then drives the learning dynamics of a two-layer student with composite weight matrix $\hat{W} = W_2W_1$.  We analyze the learning dynamics of the student under a set of four assumptions precisely stated in App.~\ref{app:linear_teacher_student}. These assumptions are similar to the ones made in  \cite{saxe2013exact,lampinen2018analytic}, and are roughly stated as: (A1) whitened inputs to the teacher; (A2) structured gradient noise related to input-output covariance; (A3) spectral initialization with student singular vectors aligned to input-output covariance singular vectors; (A4) balanced initialization with equal power in the two weight layers of the student.  Under these assumptions, the learning dynamics of the student weight matrix $\hat{W} = W_2W_1$ can be decomposed into independent learning dynamics for each singular value $\hat{s}_i$ of $\hat{W}$. 
%
An associated singular value $\tilde{s}_i$ of $\Sigma_{yx}$ drives the dynamics of $\hat{s}_i$ via the SDE,
\begin{equation}
    \label{eq:stochastic-logistic-equation}
    d \hat{s}_i = 2\hat{s}_i\left(\left(\tilde{s}_i + \eta \zeta^2 / 2\right) - \hat{s}_i\right) dt + 2\sqrt{\eta \zeta^2} \hat{s}_i dB_t,
\end{equation}
where $\zeta > 0$ is the amplitude of the gradient noise.
In the small noise limit of $\zeta \to 0$, this SDE aligns with the nonlinear ODE corresponding to the student network's dynamics under gradient flow, where exact solutions exist \cite{saxe2013exact,lampinen2018analytic}.
Important insights gained from this previous analysis are that larger data singular modes are learned earlier \cite{saxe2013exact} and larger data singular modes, when learned, help the student generalize, while smaller singular modes, when learned, impede generalization by overfitting to noise in the training data \cite{lampinen2018analytic}.
Both these insights will apply in our analysis of SGF.
However, we will also uncover a third behavior unique to SGF: stochastic collapse effectively governs the minimum learnable singular mode, serving as a natural mechanism to prevent overfitting.
Our analysis in App.~\ref{app:linear_teacher_student} reveals that the student singular values $\hat{s}_i$ evolve according to the process:

\begin{theorem}[Stochastic Student Dynamics]
    \label{theorem:SDE-solution}
    Under assumptions A1 - A4, the dynamics of $\hat{s}_i(t)$ given by Eq.~\ref{eq:stochastic-logistic-equation} are governed by the stochastic process,
    \begin{equation}
        \hat{s}_i(t) = e^{(2\tilde{s}_i - \eta\zeta^2)t + 2\sqrt{\eta\zeta^2}B_t} \left(2\int_0^t e^{(2\tilde{s}_i - \eta\zeta^2)\tau + 2\sqrt{\eta\zeta^2} B_\tau}d\tau + \hat{s}_i(0)^{-1}\right)^{-1}.
    \end{equation}
\end{theorem}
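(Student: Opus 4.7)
The plan is to solve the SDE in Eq.~\ref{eq:stochastic-logistic-equation} using the classical reciprocal substitution that linearizes stochastic logistic equations. Writing $a = 2\tilde{s}_i + \eta\zeta^2$, $b = 2$, and $\sigma = 2\sqrt{\eta\zeta^2}$, the dynamics read $d\hat{s}_i = (a\hat{s}_i - b\hat{s}_i^2)\, dt + \sigma \hat{s}_i\, dB_t$. Since both drift and diffusion coefficients vanish at the origin, $\{0\}$ is an invariant set, so under the implicit assumption $\hat{s}_i(0) > 0$ the process stays strictly positive and the change of variable $Y_t = 1/\hat{s}_i(t)$ is well-defined for all $t \geq 0$ almost surely.

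The first step is to apply It\^o's formula to $f(x) = 1/x$. The quadratic term $-b\hat{s}_i^2$ becomes a constant forcing $+b$, while the second-order It\^o correction $\tfrac{1}{2}f''(\hat{s}_i)(d\hat{s}_i)^2$ contributes an extra $+\sigma^2 Y_t\, dt$, yielding the linear SDE
\begin{equation}
    dY_t = \bigl((\sigma^2 - a)Y_t + b\bigr)\, dt - \sigma Y_t\, dB_t.
\end{equation}

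The second step is to solve this linear SDE by variation of parameters. Let $F_t = \exp\bigl((\sigma^2/2 - a)t - \sigma B_t\bigr)$ be the geometric Brownian motion satisfying the homogeneous equation $dF_t = (\sigma^2 - a)F_t\, dt - \sigma F_t\, dB_t$. Writing $Y_t = F_t U_t$ and applying It\^o's product rule --- carefully accounting for the cross-variation $d\langle F, U\rangle$, which cancels the residual drift from $F$ against the $(\sigma^2 - a)Y_t$ term --- forces $U_t$ to be of finite variation with $dU_t = b F_t^{-1}\, dt$. Integrating and inverting gives
\begin{equation}
    \hat{s}_i(t) \;=\; \frac{F_t^{-1}}{\hat{s}_i(0)^{-1} + b \int_0^t F_\tau^{-1}\, d\tau}.
\end{equation}

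The final step is a brief bookkeeping calculation: $F_t^{-1} = \exp\bigl((a - \sigma^2/2)t + \sigma B_t\bigr)$, and plugging in $a - \sigma^2/2 = 2\tilde{s}_i - \eta\zeta^2$, $\sigma = 2\sqrt{\eta\zeta^2}$, and $b = 2$ recovers the stated expression exactly. The only real obstacle is sign and factor bookkeeping in the It\^o corrections --- a slip would misplace the $-\eta\zeta^2$ shift in the exponent --- so the essential sanity check is verifying that the drift coefficient in the $Y_t$ equation is $\sigma^2 - a$ (rather than $-a$) and that the $-\sigma^2/2$ shift introduced by the GBM integrating factor cleanly combines with the $+\eta\zeta^2$ inside $a$ to produce the advertised exponent $2\tilde{s}_i - \eta\zeta^2$.
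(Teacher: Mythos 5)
Your proof is correct and uses the same approach as the paper: the reciprocal substitution $Y_t = 1/\hat{s}_i$ (the paper calls it $u_i$) followed by variation of parameters with the GBM integrating factor. One small slip in exposition only: once $U_t$ is of finite variation the cross-variation $d\langle F,U\rangle$ is identically zero rather than ``canceling residual drift''---the full $(\sigma^2-a)Y_t\,dt-\sigma Y_t\,dB_t$ piece comes entirely from $U_t\,dF_t$---but this phrasing does not affect the computation or the final expression, which matches the paper exactly.
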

In terms of the per layer balanced student weight $w_i = \sqrt{\hat{s}_i}$, the above process can be thought of as originating from the quartic loss landscape $\ell_i(w_i) = \frac{1}{4}(w_i^2 - \tilde{s_i})^2$, equivalent to the example presented in Sec.~\ref{sec:attractivity}.
Thus, there exists an invariant set for each student singular value corresponding to the affine space $\hat{s}_i = w_i = 0$ where both drift and diffusion terms vanish.  Stochastic collapse to this invariant set would thus induce a low-rank regularization on the student in which some number of singular modes in the data are {\it never learned} despite having nonzero data singular values $\tilde{s}_i$. To confirm stochastic collapse to this invariant set, we derive:
\begin{corollary} \label{corollary:limit_SDE_dynamics_of_student_teacher}
    In the limit $\hat{s}_i(0)\rightarrow 0$, for any $t>0$, then $\hat{s}_i(t) /\hat{s}_i(0)\underrel{a.s.}{\rightarrow}  e^{(2\tilde{s}_i - \eta\zeta^2)t + 2\sqrt{\eta\zeta^2}B_t}$.
\end{corollary}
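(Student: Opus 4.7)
The plan is to derive the corollary directly from the explicit solution in Theorem~\ref{theorem:SDE-solution} by dividing through by $\hat{s}_i(0)$ and taking a deterministic $\hat{s}_i(0)\to 0$ limit path-by-path. Specifically, I would first divide the numerator and denominator of the expression in Theorem~\ref{theorem:SDE-solution} by $\hat{s}_i(0)^{-1}$ (equivalently, multiply through by $\hat{s}_i(0)$) to obtain
\begin{equation}
    \frac{\hat{s}_i(t)}{\hat{s}_i(0)} = \frac{e^{(2\tilde{s}_i - \eta\zeta^2)t + 2\sqrt{\eta\zeta^2}B_t}}{1 + 2\hat{s}_i(0)\int_0^t e^{(2\tilde{s}_i - \eta\zeta^2)\tau + 2\sqrt{\eta\zeta^2} B_\tau}\, d\tau}.
\end{equation}
This form makes the role of the initial condition transparent: $\hat{s}_i(0)$ appears only as a multiplicative prefactor on a single random integral in the denominator, while the numerator is independent of $\hat{s}_i(0)$.

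Next I would fix an arbitrary outcome $\omega$ in the full-measure set on which $\tau\mapsto B_\tau(\omega)$ is continuous. On such a sample path, the integrand $\tau\mapsto e^{(2\tilde{s}_i - \eta\zeta^2)\tau + 2\sqrt{\eta\zeta^2} B_\tau(\omega)}$ is continuous on the compact interval $[0,t]$, hence bounded, and therefore the integral $I_t(\omega) \coloneqq \int_0^t e^{(2\tilde{s}_i - \eta\zeta^2)\tau + 2\sqrt{\eta\zeta^2} B_\tau(\omega)}\, d\tau$ is finite. Because $\hat{s}_i(0)$ is a deterministic parameter that we drive to zero, we have $2\hat{s}_i(0) I_t(\omega) \to 0$, so the denominator above tends to $1$ along this sample path. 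The numerator does not depend on $\hat{s}_i(0)$, so the ratio converges to $e^{(2\tilde{s}_i - \eta\zeta^2)t + 2\sqrt{\eta\zeta^2} B_t(\omega)}$ on the same full-measure event, giving the asserted almost-sure convergence.

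The only subtle point is the path-by-path finiteness of $I_t$, but this is immediate from Brownian continuity on compacts, so there is no genuine obstacle; the corollary is essentially a limit-statement about the rational function of $\hat{s}_i(0)$ in Theorem~\ref{theorem:SDE-solution}. I would also briefly note that $\hat{s}_i(0)\to 0$ is taken as a deterministic parameter limit (not a random initial condition), which lets one interchange the limit with the pointwise evaluation of the explicit formula without invoking any dominated convergence or stability theorems for SDEs. This is what justifies interpreting the limit process $\hat{s}_i(t)/\hat{s}_i(0)\sim e^{(2\tilde{s}_i-\eta\zeta^2)t+2\sqrt{\eta\zeta^2}B_t}$ as geometric Brownian motion, thereby connecting the student-teacher dynamics near the invariant set $\{\hat{s}_i=0\}$ to the GBM collapsing criterion of Theorem~\ref{theorem:collapsing-condition-1dim-LD-notation}.
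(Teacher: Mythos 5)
Your proposal is correct and takes essentially the same approach as the paper's proof: both rewrite the explicit formula from Theorem~\ref{theorem:SDE-solution} as $\hat{s}_i(t)/\hat{s}_i(0) = e^{(2\tilde{s}_i - \eta\zeta^2)t + 2\sqrt{\eta\zeta^2}B_t}\big/\bigl(1 + 2\hat{s}_i(0)\int_0^t e^{(2\tilde{s}_i - \eta\zeta^2)\tau + 2\sqrt{\eta\zeta^2} B_\tau}d\tau\bigr)$, observe that the integral is almost surely finite because Brownian motion is continuous on the compact interval $[0,t]$, and conclude that the denominator tends to $1$ as $\hat{s}_i(0)\to 0$.
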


\begin{figure}[t]
    \centering
    \includegraphics[width=\textwidth]{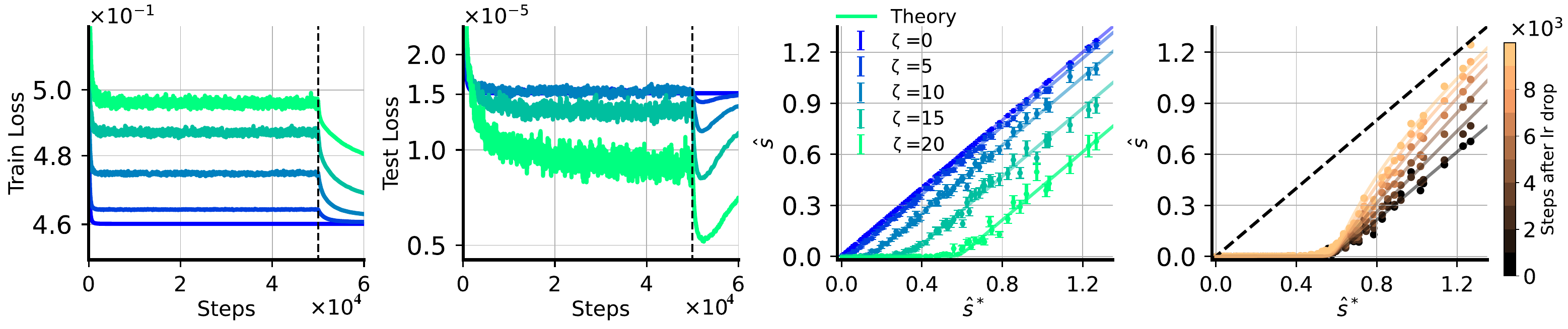}
    \caption{\textbf{Demonstrating generalization benefits of stochastic collapse in a teacher-student setting.} We show the train loss (\textbf{leftmost}) and test loss (\textbf{middle left}) during the training of the student with different gradient noises. Dashed lines in the leftmost and middle left panels indicate the step where learning rate is dropped. Training with larger gradient noises $\zeta$ generalizes better. \textbf{Middle right}: we show the noisy teacher signals against the learned student signals before dropping the learning rate. Larger noises have a stronger implicit bias towards zero. \textbf{Rightmost}: same as the middle right panel but we show the learned signals (for the brightest green in the middle right panel) at different training steps after the learning rate drop. All the results are averaged over 256 replicates.
    \vspace{-0.4cm}
    }
    \label{fig:student-teacher}
\end{figure}

Corollary~\ref{corollary:limit_SDE_dynamics_of_student_teacher} has important implications. First, if $\tilde{s}_i<\frac{\eta\zeta^2}{2}$, the distribution will exhibit stochastic collapse, converging to a delta distribution at the origin, consistent with Theorem~\ref{theorem:collapsing-condition-1dim-LD-notation} (Fig.~\ref{fig:student-teacher}: middle right panel). This in turn implies that early training with large learning rates promotes stochastic collapse of more student singular modes.  
Indeed, even after the training loss plateaus, further extended training with a large learning rate $\eta$ will continue to drive stochastic collapse of small singular modes. After a subsequent learning rate drop, some of these modes will cease to obey the collapsing condition and as a consequence, the associated invariant set will repulse rather than attract the dynamics. Nonetheless, the repulsive escape will take longer the closer it starts from the invariant set (Fig.~\ref{fig:student-teacher}: rightmost panel). 
This results in dynamics in which generalization error is reduced by the student not learning many data singular modes with large $\eta$, because $\tilde{s}_i <\frac{\eta\zeta^2}{2}$, and then learning only the subset with large $\tilde{s}$ that have time to escape the vicinity of the invariant set after a learning rate drop. The overall effect is that the student selectively learns modes with higher signal-to-noise ratios, resulting in improved generalization after the learning rate drop as shown in Fig.~\ref{fig:student-teacher}: middle left panel.
In summary, the linear teacher-student model highlights that with an appropriate learning rate schedule, a noisy student can effectively avoid overfitting through stochastic collapse and achieve superior generalization compared to a noiseless student.

\textbf{Stochastic collapse explains why early large learning rates helps generalization.}
The analyses in the linear teacher student network provide valuable insights into how stochastic collapsing can enhance generalization. Importantly, it sheds light on the mystery of why training at large learning rates for a long period of time (even after training and test loss have plateaued) helps generalization after a subsequent learning rate drop.  
The key prediction is that a large learning rate induces stronger stochastic collapse, thereby regularizing the model complexity. Furthermore, remaining in a phase of larger learning rates for a prolonged period drives SGD closer to the invariant set. Consequently, when the learning rate is eventually dropped, overfitting in these specific directions is mitigated. 

To test this predicted mechanism, we trained VGG-16 and ResNet-16 on CIFAR-10 and CIFAR-100, respectively. The training loss and test accuracy had already plateaued at $10^4$ steps (Fig.~\ref{fig:generalization_dnn}: left column).
We dropped the learning rate after different lengths of the initial high learning rate training phase and confirmed that training with large learning rates for longer periods helps subsequent generalization (Fig.~\ref{fig:generalization_dnn}: middle column). 
We then tested our prediction that stochastic collapse is occurring during the early high learning rate training, thereby enhancing an implicit bias towards simpler subnetworks. In particular, we computed the fraction of independent neurons at each step where we drop the learning rate. We found, as predicted, that models with a longer initial training phase collapse towards invariant sets with fewer independent neurons (Fig.~\ref{fig:generalization_dnn}: right column).
\label{subsection:lrschedule}

\begin{figure}[h]
    \centering
    \begin{subfigure}{0.96\textwidth}
         \centering
    \includegraphics[width=\textwidth]{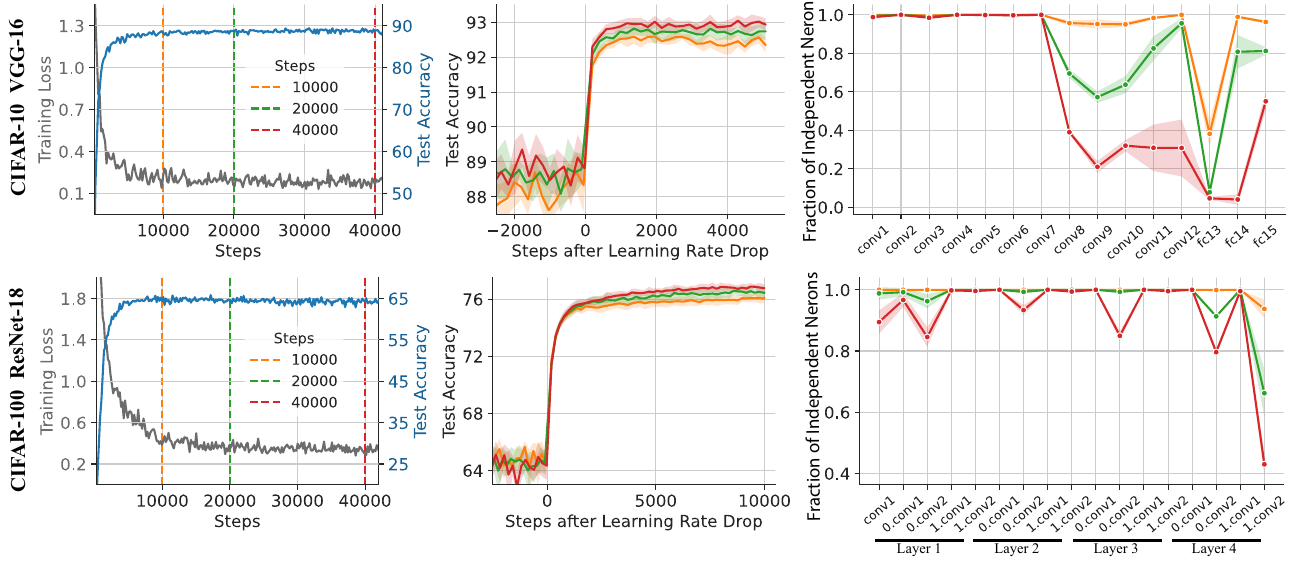}
     \end{subfigure}
    \caption{
    \textbf{Large learning rates aid generalization via stochastic collapse to simpler subnetworks.}
    \textbf{Left}: The training loss (grey) and test accuracy (blue) during an initial training phase using high learning rate (lr=0.1). Vertical dashed lines highlight the training steps at which we will drop the learning rate. \textbf{Middle}: Test accuracy preceding and following a learning rate drop to lr=0.01. Colors indicate when the learning rate drop shown in the left plot occured. Despite plateauing of the training loss and test accuracy, a later learning rate drop yields higher final test accuracy. \textbf{Right}: The fraction of independent neurons per layer evaluated at different learning rate drop times (indicated by the color) during the initial high learning rate training phase. Prolonged training with a large learning rate induces implicit regularization by reducing the number of independent neurons. All curves represent the average across eight replicates. See App.~\ref{app:exp_details} for  experimental details. 
    }
    \label{fig:generalization_dnn}
\end{figure}

\section{Conclusion}
\vspace{-4pt}
\label{sec:conclusion}

In this study, we demonstrated how stochasticity in SGD biases overly expressive neural networks towards \emph{invariant sets} that correspond to simpler subnetworks, resulting in improved generalization. 
We established a sufficient condition for \emph{stochastic attractivity} in terms of the Hessian of the loss landscape and the noise introduced by stochastic gradients. 
We combined theory and empirics to study the invariant sets corresponding to vanishing neurons with origin-passing activation functions and identical neurons generated by permutation symmetry. 
Furthermore, we elucidated how this process of \emph{stochastic collapse} can be beneficial for generalization using a linear teacher-student framework and explain the importance of large learning rates early in the training process. 

\textbf{Limitations and future directions.}
Activation function with additional continuous symmetries, such as ReLU, might have a larger class of invariant sets than those studied in our work, which remains to be explored.
As all the invariant sets in our study are affine, exploring the possibility of curved invariant sets and how curvature affects the analysis is an interesting direction for future work.
Furthermore, the interplay between symmetries in data and invariant sets warrants investigation.
Identifying the necessary conditions for stochastic attractivity of general affine invariant sets is an important goal for future work.
Extending our analytic results from the continuous SGF to the discrete SGD updates is an interesting theoretical direction.
Lastly, designing new optimization algorithms based on our insights into stochastic collapse is a major goal for our future work.

\clearpage

\begin{ack}
We thank Nan Cheng, Shaul Druckmann, Mason Kamb, Itamar Daniel Landau, Chao Ma, Nina Miolane, Mansheej Paul, Allan Raventós, Ben Sorscher, and Daniel Wennberg for helpful discussions. 
D.K. thanks the Open Philanthropy AI Fellowship for support. 
S.G. thanks the James S. McDonnell and Simons Foundations, NTT Research, and an NSF CAREER Award for support.
\end{ack}

\section*{Author Contributions}
D.K. and F.C. initiated the project.
F.C. formulated an initial analysis on stochastic collapse and is primarily responsible for the experiments associated with deep neural networks. 
D.K. is primarily responsible for the linear teacher-student analysis and the original observation that SGD can collapse to a saddle-point introduced by symmetry.
A.Y. primarily contributed to the theoretical formulations and proofs, which include the concept of invariant sets and conditions for stochastic attractivity.
S.G. advised throughout the work and provided funding for computation.
All of the authors have worked on writing the manuscript.

\bibliography{bib}

\clearpage
\appendix

\section{Extended Related Work}
\label{app:related_work}

\textbf{SGD's noise covariance shape matters.}
Our work is closely related to \citet{haochen2021shape} work that showed how parameter-dependent noise introduces an implicit bias towards local minima with smaller variance, while spherical Gaussian noise does not.
In particular, their work conjectured that the parameter dependent noise introduced by SGD has an implicit bias effect``toward parameters where the noise covariance is smaller''.
They studied this effect in two-layer quadratically-parameterized model introduced by \cite{woodworth2020kernel}.
Our work generalizes this effect to a broad range of models by introducing invariant sets and demonstrates that this implicit bias towards these regions is regulated by a competition between the curvature of the full-batch loss and the magnitude of the noise.

\textbf{SGD can converge to a local maximum.}
Our work is closely related to a recent work studying failure modes of SGD \cite{ziyin2021sgd} and a concurrent follow up work studying the stability of SGD near a fixed point \cite{ziyin2023probabilistic}.
\citet{ziyin2021sgd} studied the behavior of SGD in one-dimensional quadratic and quartic losses and showed that SGD can converge to local maxima, escape from saddle-points slowly, and prefer sharp minima due to the gradient noise.
A strength of their analysis is that they directly derive their results from discrete SGD updates, while we use a continuous formulation.
However, their analysis is limited to one-dimensional scenarios and is used to illustrate behaviors of SGD they implicitly deemed unfavorable. 
In our work, we describe the mechanism of stochastic collapse generally and connect it to a simplicity bias that can benefit generalization.
Concurrent to our work, \citet{ziyin2023probabilistic} proposes a new notion of stability for SGD near a fixed point called probabilistic stability, which overlaps with our work in discussing stochastic collapse for single neurons but not for permutation invariance. 
They spotlight SGD learning phases using this stability concept, while we explore how stochastic gradients incline networks toward simpler subnetwork invariants.

\textbf{SGD learns sparse features.}
Our work is closely related to \citet{andriushchenko2022sgd} recent work studying a sparsity bias of SGD with large learning rates.
The novelty of their work can be summarized as: 1. They revealed that large-step SGD dynamics have effective slow dynamics with multiplicative noise during loss stabilization. 2. They theoretically analyzed diagonal linear networks, showing that the SDE has implicit bias toward sparser representations. 3. They conjectured that deep nonlinear networks also show this phenomenon, which is supported by the empirical results. 4. They argued that SGD first learns sparse features and then fits the data after the step size is decreased. With this foundation laid, we would like to emphasize two fundamental aspects that set our work apart:

First, our analysis goes far beyond the basic understanding of diagonal linear networks, offering a broader perspective by introducing the invariant sets with a more general theorem (Theorem~\ref{theorem:collapsing-condition-LD-notation}). In their work, the authors conjectured an implicit bias towards sparser features beyond their diagonal linear model, leaving this exploration open as an ``exciting avenue for future work''. We believe that our paper makes substantial progress along this avenue by understanding the implicit bias towards sparse representations as stochastic collapse to the invariant sets. Furthermore, our theory is applicable to general deep neural nets. We also provide a theoretical framework that predicts the quantitative conditions for this attractiveness in a general setting. While applying this theorem to arbitrary neural nets is a challenging future work, our work utilized this theorem to quantitatively analyze the collapsing threshold of simple models.
Second, our results shed light on the collapsing phenomena of weight vectors, a perspective that contrasts with their emphasis on neuron similarity based on activation patterns. Given that weight vector similarities often lead to similarity in activation patterns, our findings suggest that neural networks adhere to a stronger condition.

\clearpage
\section{Further Discussion on Invariant Sets of SGD}
\label{app:invariant_set}

In our work, we start by describing subsets of parameter space that are unmodified by SGD, which we term invariant sets.
Several works have explored geometric properties of the loss landscape through the lens of symmetry and invariance.
\cite{kunin2020neural} demonstrated how the continuous symmetries of the loss function create level sets of the loss and impose geometric constraints on the gradients.
\cite{brea2019weight} and \cite{simsek2021geometry} showed how permutation symmetries in the hidden neurons can lead to many potentially isolated minima with saddle points between them. 
\cite{wei2008dynamics} studied the learning dynamics of gradient flow near singular regions of parameter space, which for a multilayer perceptron, include permutation and sign invariance.
A series of works \cite{baldi1989neural, fukumizu2000local, amari2006singularities, dimattina2010modify} have leveraged the hierarchical structures of neural networks along with properties of their activation functions to identify critical points of the loss landscape.

\subsection{Proof of Proposition~\ref{proposition:sign_inv} and Proposition~\ref{proposition:permutation_inv}}                                             
\begin{proof}
\label{app:proof:sign}
    $A$ is affine and $0\in A$, thus it suffices to show that $\frac{\partial\ell}{\partial\theta}\in A$ and is independent of the training data. Denote the activation at layer $l$ as $h^{(l)}=\sigma\left({w^{(l)}}^T h^{(l-1)}+b^{(l)}\right)$.
    The gradients associated with the outgoing weights are, 
    \begin{align}
    \label{app:A:eq:outgoing_weights}
    \frac{\partial\ell}{\partial w_{\text{out},i}^{(l+1)}}
    &=\frac{\partial\ell}{\partial h^{(l+1)}}\odot\frac{\partial h^{(l+1)}}{\partial w_{\text{out},i}^{(l+1)}}\nonumber\\
    &=\frac{\partial\ell}{\partial h^{(l+1)}}\odot\sigma'\left(w_{\text{out},i}^{(l+1)} h_i^{(l)}+b_i^{(l+1)}\right)\sigma\left({w_{\text{in},i}^{(l)}}^T h^{(l-1)}+b_i^{(l)}\right).
    \end{align}
    If $\theta\in A$, $\frac{\partial\ell}{\partial w_{\text{out},p}^{(l+1)}}
    =0$. 
    The gradients associated with the incoming weights are, 
    \begin{align}
    \label{app:a:eq:incoming_weights}
        \frac{\partial\ell}{\partial w_{\text{in},i}^{(l)}}&=\left\langle\frac{\partial\ell}{\partial h^{(l+1)}},\frac{\partial h^{(l+1)}}{\partial h_i^{(l)}}\right\rangle\frac{\partial h_i^{(l)}}{\partial 
 w_{\text{out},i}^{(l)}}\nonumber\\
    &=\left\langle\frac{\partial\ell}{\partial h^{(l+1)}},\left(\sigma'\left(w_{\text{out},i}^{(l+1)} h^{(l)}_i+b^{(l+1)}\right)\odot w_{\text{out},i}^{(l+1)}\right)\right\rangle\sigma'\left({w_{\text{in},i}^{(l)}}^T h^{(l-1)}+b_i^{(l)}\right)\odot h^{(l-1)}.
    \end{align}
    If $\theta\in A$, $\frac{\partial\ell}{\partial w_{\text{in},p}^{(l)}}=0$. The gradients associated with the biases are, 
    \begin{align}
    \label{app:a:eq:biases}
        \frac{\partial\ell}{\partial b_i^{(l)}}&=\left\langle\frac{\partial\ell}{\partial h^{(l+1)}},\frac{\partial h^{(l+1)}}{\partial h_i^{(l)}}\right\rangle\frac{\partial h_i^{(l)}}{\partial 
 b_{i}^{(l)}}\nonumber\\
    &=\left\langle\frac{\partial\ell}{\partial h^{(l+1)}},\left(\sigma'\left(w_{\text{out},i}^{(l+1)} h^{(l)}_i+b^{(l+1)}\right)\odot w_{\text{out},i}^{(l+1)}\right)\right\rangle\sigma'\left({w_{\text{in},i}^{(l)}}^T h^{(l-1)}+b_i^{(l)}\right).
    \end{align}    
     If $\theta\in A$, $\frac{\partial\ell}{\partial b_p^{(l)}}=0$.
    Therefore, $\frac{\partial\ell}{\partial\theta}\in A$.
\end{proof}

\begin{proof}
\label{app:proof:permu}
    We can directly see from Eq.~\ref{app:A:eq:outgoing_weights}-\ref{app:a:eq:biases} that if $\theta\in A$, $\frac{\partial\ell}{\partial w_{\text{out},p}^{(l+1)}}
    =\frac{\partial\ell}{\partial w_{\text{out},q}^{(l+1)}}$, $\frac{\partial\ell}{\partial w_{\text{in},p}^{(l)}}=\frac{\partial\ell}{\partial w_{\text{in},q}^{(l)}}$ and $\frac{\partial\ell}{\partial b_p^{(l)}}=\frac{\partial\ell}{\partial b_q^{(l)}}$.
    Therefore, $\frac{\partial\ell}{\partial\theta}\in A$.
\end{proof}
\subsection{Proof of Theorem~\ref{theorem:involution-invariant-sets}}
\begin{proof}
It suffices to show that $\nabla\ell(\theta;x,y) \in A$ for all $\theta \in A$ and training data.
In the following, we use the notation of $\ell(\theta)$ as a shorthand for $\ell(\theta;x,y)$.
First, we show $\nabla\ell_Q(\theta) = \nabla\ell(\theta)$ for $\theta \in A$, where $\ell_Q(\theta) = \ell(Q\theta)$.
The derivative of $\ell_Q$ along an arbitrary unit vector $n$ is given by
$$\nabla_n\ell_Q(\theta) =\lim_{h\to 0} \frac{\ell(Q(\theta+hn))-\ell(Q\theta)}{h} = \lim_{h\to 0} \frac{\ell(\theta+hn)-\ell(\theta)}{h}+\frac{\ell(Q(\theta+hn))-\ell(\theta+hn)}{h}.$$
If $n$ is a tangent vector of $A$, the second term is zero. If $n$ is a normal vector of $A$, then by the approximate $Q$-symmetry assumption, the second term converges to zero.
These two facts imply that $\nabla\ell_Q(\theta) = \nabla\ell(\theta)$.
Then, for any $\theta \in A$, $\nabla\ell(\theta) = \nabla\ell_Q(\theta) = Q^\intercal\nabla\ell(Q\theta) $. 
Since $Q$ is symmetric ($Q = Q^\intercal$) or orthogonal ($QQ^\intercal = I_d$), we obtain
$$Q\nabla\ell(\theta;x,y) = \nabla\ell(\theta;x,y),$$ 
which implies $\nabla \ell(\theta;x,y) \in A$.
Hence, regardless of the sequence of mini-batches, starting from $\theta^{(0)} \in A$, the SGD trajectory remains within $A$, demonstrating $A$ is an invariant set.
\end{proof}

Theorem~\ref{theorem:involution-invariant-sets} provides an alternative proof of Proposition~\ref{proposition:sign_inv} and Proposition~\ref{proposition:permutation_inv}, simply identify a symmetric or orthogonal matrix $Q$ that generates these sets.
For the sign invariant sets, $Q$ is the identity matrix with negative diagonals restricted to the index of the parameters associated with the invariant set.
For the permutation invariant set, $Q$ is the identity matrix with the diagonal blocks for the parameters associated with the two permutatable neurons swapped to being off diagonal.

\subsection{Additional Invariant Sets}
Beyond the classes of invariant sets already presented, there can exist other invariant sets depending on the structure of the architecture.
For example, for a network using a softmax layer, the columns of the weight matrix and the vector of biases immediately preceding this layer can be shifted by a constant without changing the outputs.
As a result, the dynamics for these parameters are constrained to an affine space with a constant sum determined by their initialization.
This subspace is an invariant set.
Similarly, if the training data has a non-trivial kernel space, as in the case of overparameterized linear regression, then the rows of the first-layer's weight matrix can be translated by any vector in the kernel space without changing the output of the network.
Again, this constrains the learning dynamics to an affine space, which is an invariant set.
In general, any translational symmetry should generate invariant sets.

\clearpage
\section{Invariant sets of SGF}
\label{app:sgf-sgd}
\subsection{Definition of invariant sets for continuous processes}
Here, we provide the formal definition of invariant sets for a continuous process and a formal statement of Proposition~\ref{proposition:SGF-preserves-affine-invariant}.
\begin{definition}[Invariant sets of continuous processes]
\label{def:invariant-set-continuous-process}
For a given stochastic process $\{\theta_t \in \mathbb{R}^d: t\geq 0\}$, a Borel-measurable set $A\subset \mathbb{R}^d$ is defined as invariant if for every initial point $\theta_0$ in $A$, the probability that the process stays in $A$ for all $t\geq 0$ is 1, i. e.,
\begin{equation}
    \label{eq:invariant-set-continuous-process}
    \mathbb{P}[\theta_t \in A \text{ for any } t\geq 0 | \theta_0 \in A] = 1.
\end{equation}
\end{definition}
\subsection{Proof of Proposition~\ref{proposition:SGF-preserves-affine-invariant}}
\begin{repproposition}{proposition:SGF-preserves-affine-invariant}[Formal]
    Consider an SGD process $\{\theta^{(t)}\in\mathbb{R}^d:t\in\mathbb{N}\}$ given by Eq.~\ref{eq:SGD},
    where the gradients $\nabla \ell(\cdot; x_i, y_i):\mathbb{R}^d\to \mathbb{R}^d$ in Eq.~\ref{eq:SGD} are Lipschitz continuous and bounded for any $i\in[n]$.
    If $A\subseteq\mathbb{R}^d$ is an affine invariant set of SGD $\{\theta^{(t)}:t\in\mathbb{N}\}$,
    then $A$ is also an invariant set of an SGF process $\{\theta_t:t \geq 0\}$ given by Eq.~\ref{eq:SGF}.
\end{repproposition}
\begin{proof}[proof]
    By assumption, $\nabla \ell(\theta; x_i, y_i)$ is $L$-Lipschitz in $\theta$ and $\|\nabla \ell(\theta; x_i, y_i)\|\le L$ with some constant $L>0$. Then we can see that $\nabla \mathcal{L}$ is also Lipschitz continuous and bounded as follows:
    \begin{align*}
    \|\nabla\mathcal{L}(\theta_1)-\nabla\mathcal{L}(\theta_2)\| = \frac{1}{n}\left\|\sum_{i\in[n]}  (\nabla \ell(\theta_1; x_i, y_i)-\nabla \ell(\theta_2; x_i, y_i))\right\|
    \le L \|\theta_1-\theta_2\|
    \end{align*}
    \begin{align*}
        \|\nabla\mathcal{L}(\theta)\| \le \frac{1}{n}\sum_i \|\ell(\theta;x_i,y_i)\|
        \le  L.
    \end{align*}
Let $\bar \ell(\theta,x_i,y_i) = \ell(\theta,x_i,y_i) - \mathcal{L}(\theta,x_i,y_i)$. $\nabla \bar \ell(\cdot,x_i,y_i)$ is clearly $2L$-Lipschitz continuous and 
 is bounded by $2L$ for any $i\in [n]$.
Then,
\begin{align*}
&\left\|D_{lm}(\theta_1)-D_{lm}(\theta_2)\right\|\\
&=\frac{1}{n}
\left\|\sum_{i\in[n]} \left(\partial_l\bar\ell(\theta_1;x_i,y_i)\partial_m\bar\ell(\theta_1;x_i,y_i)-\partial_l\bar\ell(\theta_2;x_i,y_i)\partial_m\bar\ell(\theta_2;x_i,y_i)\right)\right\|\\
&\leq\frac{1}{n}\sum_{i\in[n]} \left[2L (|\partial_l\bar\ell(\theta_1;x_i,y_i)|+|\partial_m\bar\ell(\theta_2;x_i,y_i)|)\|\theta_1-\theta_2\|^2\right]\\
&\le 4L^2 \|\theta_1-\theta_2\|^2.
\end{align*}
This implies that $\|D(\theta_1)-D(\theta_2)\|_\infty \leq 4L^2 \|\theta_1-\theta_2\|^2$, where  $\|\cdot\|_{\infty}$ denotes the operator norm.
On the other hand, the Powers-Stormer inequality gives,
\begin{eqnarray*}
    \|\sqrt{D(\theta_1)}-\sqrt{D(\theta_2)}\|_{\operatorname{F}} \leq \sqrt{d} \sqrt{\|D(\theta_1)-D(\theta_2)\|_1}\leq d \sqrt{\|D(\theta_1)-D(\theta_2)\|_\infty},
\end{eqnarray*}
where $\|\cdot\|_{\operatorname{F}}$ and $\|\cdot\|_{1}$
 denote the Frobenius norm and Schatten 1-norm, respectively.
Thus, $\sqrt{D(\theta)}$ is Lipschitz continuous with respect to the Frobenius norm. This establishes the existence of a unique strong solution to the SDE in Eq.~\ref{eq:SGF}.
    It now suffices to show that the SDE has a solution $\{\theta_t:t\geq 0\}$ satisfying Eq.~\ref{eq:invariant-set-continuous-process}.
    Let $P:\mathbb{R}^d\to \mathbb{R}^d$ be the projection operator to the affine subset $A$, and we construct a projected SDE:
    $$
        d \tilde\theta_t = -P\nabla\mathcal{L}(\tilde\theta_t)dt + \sqrt{\frac{\eta}{\beta}}P\Sigma(\tilde\theta_t)dB_t.
    $$
    By construction, this projected SDE also has a unique solution $\tilde \theta_t$, which clearly satisfies the condition Eq.~\ref{eq:invariant-set-continuous-process}.
    Moreover, given that $A$ is an invariant set of the original SGD process, we have $P\nabla\ell(\theta;x_i,y_i)=\nabla\ell(\theta;x_i,y_i)$ for all $i\in\mathcal{B}$ and $x\in A$. This means that $\{\tilde\theta_t:t\geq 0\}$ satisfies Eq.\ref{eq:invariant-set-continuous-process} almost surely. Hence, we conclude that $A$ is also an invariant set of the solution to Eq.\ref{eq:SGF}.
\end{proof}

\clearpage
\section{SDE formulations of learning dynamics with gradient noise}
\label{app:sgf_sgd}

\subsection{The relationship between SGD and SGF}
Recall the SGD update rule from Eq.~\ref{eq:SGD}.
\begin{equation}
    \label{app:eq:apx-SGD}
    \theta^{(t+1)} = \theta^{(t)} - \frac{\eta}{\beta} \sum_{i \in \mathcal{B}^{(t)}} \nabla_\theta \ell\left(\theta^{(t)};x_i,y_i\right). 
\end{equation}
We factorize the right-hand side into the negative full-batch gradient and the deviation from the full-batch gradient,
\begin{equation}
    \label{app:eq:SGD-decomposition}
    \underbrace{\theta^{(t+1)} = \theta^{(t)} - \eta \nabla\mathcal{L}(\theta^{(t)})}_{\text{gradient descent}} - \sqrt{\eta}\underbrace{\left( \frac{\sqrt{\eta}}{\beta}\sum_{i \in \mathcal{B}} \nabla_\theta \ell\left(\theta^{(t)};x_i,y_i\right) - \sqrt{\eta}\nabla\mathcal{L}(\theta^{(t)})\right)}_{\text{gradient noise}}.
\end{equation}
Let $\xi_\mathcal{B}(\theta)$ represent the gradient noise at $\theta$.
Assuming we sample with replacement, the gradient noise can be disentangled into a sum of i.i.d. per-sample gradient noises: $\xi_\mathcal{B}(\theta) = \frac{\sqrt{\eta}}{\beta}\sum_{i \in \mathcal{B}}\xi_i$ where
$\xi_i = \nabla \ell(\theta;x_i,y_i) - \nabla \mathcal{L}(\theta)$.
It is easy to check that $\mathbb{E}[\xi_i(\theta)] = 0$ for all $\theta$.
Assuming that the gradient noises are gaussian random variables, Eq.~\ref{app:eq:SGD-decomposition} turns into 
\begin{align}
\label{app:discretization_of_SGD}
    \Delta \theta^{(t)} = - \eta \nabla\mathcal{L}(\theta^{(t)}) + \sqrt{2D(\theta^{(t)})}\Delta W,
\end{align}
where $\Delta W \sim \mathcal{N}(0,\eta I)$
and the diffusion matrix $D(\theta^{(t)})=\frac{1}{2} \text{Var}(\xi_\mathcal{B})=\frac{\eta}{2\beta} \text{Var}(\xi_i)$.
Eq.~\ref{app:discretization_of_SGD} is a Euler-Maruyama discretization of the following continuous-time stochastic differential equation, which we denote as \textit{stochastic gradient flow (SGF)},
\begin{equation}
    \label{app:eq:SGF}
    d \theta_t = -\nabla\mathcal{L}(\theta_t)dt + \sqrt{2D(\theta_t)}dB_t, \qquad \theta_0 = \theta^{(0)}.
\end{equation}
The diffusion matrix can be expressed as:
\begin{align}
    D(\theta_t) &= \frac{\eta}{2\beta} \left( \frac{1}{N}\sum_{i \in [N]}\left[\nabla_\theta \ell\left(\theta_{t};x_i,y_i\right)\nabla_\theta \ell\left(\theta_{t};x_i,y_i\right)^T\right]  -\nabla \mathcal{L}(\theta_{t}) \nabla \mathcal{L}(\theta_{t})^T\right).
\end{align}
The diffusion matrix can be decomposed into a product of a parameter-independent magnitude term $D_m$ and a parameter-dependent shape term $D_s(\theta)$ such that $D(\theta)=D_m D_s(\theta)$, where
\begin{align}
    D_m &= \frac{\eta}{2\beta}\\
    D_s(\theta) &= \frac{1}{N}\sum_{i \in [N]}\left[\nabla_\theta \ell\left(\theta_{t};x_i,y_i\right)\nabla_\theta \ell\left(\theta_{t};x_i,y_i\right)^T\right]  -\nabla \mathcal{L}(\theta_{t}) \nabla \mathcal{L}(\theta_{t})^T
\end{align}
The optimization hyperparameters learning rate and batch size affect the magnitude term $D_m$, while the network architecture, the training data, and the loss function affect the shape term $D_s(\theta)$. For generic network architectures, determining the exact form of the shape term is quite complicated. 
For the purpose of analytical assessments, we sometimes pivot to considering label-noise gradient descent (LNGD) as an alternative to stochastic gradient descent (SGD).

\subsection{A simpler SDE formulation via Label-Noise Gradient Descent}
In the following, we will restrict our discussion to the loss function with the mean square error. Our aim is to achieve analytical results by simplifying the diffusion matrix. In its most basic form, using the deterministic gradient descent, the diffusion matrix is simply 0. Expanding on this, what happens when we deliberately introduce noise into the gradients? 
One common method to achieve this is by adding noise to the dataset labels. If we introduce i.i.d. mean-zero, variance $\zeta^2$ Gaussian label noise per training label, denoted as $\xi_i$, it results in an update rule we refer to as label-noise gradient descent (LNGD),
\begin{equation}
    \theta^{(t+1)} = \theta^{(t)} - \eta \nabla \mathcal{L}(\theta^{(t)}) -  \sqrt{\eta}\frac{\sqrt{\eta}}{N}\sum_{i \in [N]}\nabla f_{\theta^{(t)}}(x_i) \xi_i.
\end{equation}
This is 
a discretization of the following continuous-time stochastic differential equation, which we denote as \textit{label-noise gradient flow (LNGF)},

\begin{equation}
    d\theta_t=- \eta \nabla \mathcal{L}(\theta_t)+\sqrt{2D_{LN}(\theta_t)}dB_t,\qquad \theta_0 = \theta^{(0)}.
\end{equation}
where the diffusion matrix of LNGF can be expressed as,
\begin{align}
    D_{LN}(\theta_t) 
    &= \frac{\eta \zeta^2}{2}\frac{1}{N} \sum_{i \in [N]}\nabla f_{\theta_{t}}(x_i) \nabla f_{\theta_{t}}(x_i) ^T,
\end{align}
which can also be decomposed into a magnitude term and a shape term,
\begin{align}
    D_m&=\frac{\eta\zeta^2}{2}\\
    D_s&=\frac{1}{N} \sum_{i \in [N]}\nabla f_{\theta_t}(x_i) \nabla f_{\theta_t}(x_i) ^T
\end{align}
Compared with SGF, the magnitude term here depends on the learning rate and the variance of the label noise.
Compared with SGF, the shape term here is much simpler, which allows us to get analytical results in Sec.~\ref{sec:dnn} and~\ref{sec:linear_teacher_student}. To compare with the theory, we also conducted experiments with LNGD. We study label-noise in theory because it can replicate the implicit regularization effects of the mini-batch noise.  Essentially, label noise serves as a foundational model that paves the way for a more comprehensive grasp of the more complex dynamics inherent to mini-batch noise.

\clearpage
\section{Conditions for stochastic attractivity}
\label{app:proofs}
\subsection{Proof of Theorem~\ref{theorem:collapsing-condition-LD-notation}}
Theorem~\ref{theorem:collapsing-condition-LD-notation} holds as a corollary of the following slightly more general statement where the drift term of the SDE is not necessarily a gradient of a function.
\begin{theorem}
    \label{theorem:collapsing-condition-appendix}
    Let $A\subset\mathbb{R}^d$ be a $d_{A}$-dimensional affine subset, and
    a stochastic process $\{\theta_t\in\mathbb{R}^d: t\geq 0 \}$ obey the following SDE in $A_c$, open $c$-neighborhood of $A$ with some $c>0$:
    \begin{equation}
    \label{eq:sde-theta-B1}
        d\theta_t = \mu(\theta_t)dt + \sqrt{2D(\theta_t)} dW_t,
    \end{equation}
    where $\{W_t:t\geq 0\}$ is the $d$-dimensional standard Wiener process.
    Here $\mu:A_c\to \mathbb{R}$ is a $L$-Lipschitz $C^2$-function with its first-order derivatives being $L$-Lipschitz as well. $ D:A_c\to\mathbb{R}^{m\times m}$ is the diffusion matrix such that the second-order derivatives of its elements are $L$-Lipschitz continuous. 
    We further assume that all the elements of $\sqrt{D(\theta)}$ are $L$-Lipschitz continuous.
    Let $D_{\perp} = P_{\perp} D P_{\perp}$ where $P_{\perp}:\mathbb{R}^d\to \mathbb{R}^{d}$ projects to the normal space of $A$.
    Suppose $A$ is an invariant set of $\theta_t$.
    $A$ is stochastically attractive if
     there exists $\delta>0$ such that for any unit normal vector $\hat n\in\mathbb{R}^d$ perpendicular to $A$ and any $\theta\in A$,
    \begin{eqnarray}
    \label{eq:sufficient-condition1-slightly-more-general-case}
    \nabla^2_{\hat{n}} \hat{n}^\intercal D \hat{n} > \delta\\
     \nabla_{\hat n}\hat{n}^{\intercal}\mu + \nabla^2_{\hat n}\left(- \frac{1}{2}\operatorname{Tr}D_{\perp}\ + (1-\delta) \hat n^\intercal D \hat n \right)> 0.
     \label{eq:sufficient-condition2-slightly-more-general-case}
    \end{eqnarray}
\end{theorem}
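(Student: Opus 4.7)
The plan is to build a Lyapunov function tailored to the invariant set $A$, show it is a local supermartingale under the hypotheses, and invoke Doob's maximal inequality in the form of Lemma~1 of \cite{kushner1967stochastic}, exactly as the paper's sketch suggests. Translating so that the origin lies on $A$, I would take the one-parameter family of candidates $V_\gamma(\theta) = \|P_\perp \theta\|^{2\gamma}$ for $\gamma \in (0,1)$; this is the ``family of distance measures'' alluded to in the proof sketch. Each $V_\gamma$ is nonnegative, vanishes precisely on $A$, and its sublevel sets furnish the tubular neighborhoods appearing in Definition~\ref{def:attractivity}. The parameter $\gamma$ is the central knob: small $\gamma$ amplifies the noise-induced drift toward $A$, mirroring the logarithmic transformation used for GBM at the end of Sec.~\ref{sec:attractivity}.

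The first substantive step is to compute the action of the generator via It\^o: writing $s = \|P_\perp \theta\|$ and $\hat n = P_\perp \theta / s$ and differentiating explicitly gives
\begin{equation*}
\mathcal{A} V_\gamma(\theta) = 2\gamma\, s^{2\gamma-2}\bigl[s\,\hat n^\intercal \mu(\theta) + \operatorname{Tr} D_\perp(\theta) + (2\gamma-2)\,\hat n^\intercal D(\theta)\hat n\bigr].
\end{equation*}
To extract the leading behaviour I would Taylor expand each term about the foot-point $\theta_0 = \theta - s\hat n \in A$, using three vanishing identities forced by invariance of $A$: $\hat n^\intercal \mu = 0$ on $A$ since the drift restricted to $A$ must be tangent to $A$; $D_\perp = 0$ on $A$ because preservation of $A$ by the noise forces $P_\perp \sqrt{D} = 0$ there; and, as a consequence, both $\operatorname{Tr} D_\perp$ and $\hat n^\intercal D \hat n$ are nonnegative smooth functions minimized on $A$, so their first normal derivatives also vanish there. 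The bracket then equals $s^2\bigl[\nabla_{\hat n}(\hat n^\intercal \mu) + \tfrac12 \nabla_{\hat n}^2 \operatorname{Tr} D_\perp + (\gamma-1)\nabla_{\hat n}^2(\hat n^\intercal D \hat n)\bigr] + R(\theta,\gamma)$, with $|R| \le C s^3$ uniformly in $\hat n$ and $\theta_0$ thanks to the Lipschitz hypotheses on derivatives of $\mu$ and $D$.

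Next I would set $\gamma = \delta$, at which point the leading bracket matches (up to the sign convention of the statement) the quantity controlled by Eq.~\ref{eq:sufficient-condition2-slightly-more-general-case}; combined with the lower bound $\nabla_{\hat n}^2(\hat n^\intercal D\hat n) > \delta$, the hypothesized strict positivity produces a uniform gap so that the bracket is $\le -c_0$ for some $c_0 > 0$ uniformly in $\hat n$ and $\theta_0 \in A$. Absorbing the remainder into this gap by shrinking to a smaller tube $A_{c'} \subseteq A_c$, one obtains $\mathcal{A} V_\gamma \le -\kappa V_\gamma$ on $A_{c'} \setminus A$ for some $\kappa > 0$, so with $\tau$ the first exit time from $A_{c'}$ the stopped process $V_\gamma(\theta_{t\wedge\tau})$ is a nonnegative supermartingale. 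Doob's maximal inequality then yields $\mathbb{P}\bigl[\sup_{t\ge 0} V_\gamma(\theta_{t\wedge\tau}) \ge \varepsilon^{2\gamma}\bigr] \le V_\gamma(\theta_0)/\varepsilon^{2\gamma}$; given $\rho,\varepsilon > 0$ one picks $\varepsilon' = \min(\varepsilon, c')$ and a sufficiently small initial distance so that both the excursion probability and the probability of exiting $A_{c'}$ are below $\rho$, delivering Definition~\ref{def:attractivity}.

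The main obstacle I foresee is making the Taylor-remainder bound uniform in $\hat n$ and in $\theta_0 \in A$, since $A$ is possibly unbounded: the Lipschitz hypotheses on $\mu$, on the entries of $D$, and on $\sqrt{D}$ are precisely what enables this uniformity. The Lipschitz assumption on $\sqrt{D}$ does not enter the generator directly but is needed to guarantee existence of a strong solution and to justify the localization arguments. A secondary wrinkle is that $V_\gamma$ is not $C^2$ at $A$ for $\gamma < 1$, so It\^o's formula must be applied to the process stopped strictly before it hits $A$ and the stopping rules then relaxed by monotone convergence; this is routine but worth flagging explicitly.
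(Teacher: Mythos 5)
Your proposal follows the same strategy as the paper's proof: the one-parameter family of power-law distance Lyapunov functions (your $V_\gamma = \|P_\perp\theta\|^{2\gamma}$ is the paper's $l^\lambda$ with $\lambda = 2\gamma$), the generator identity you display, the invariance-forced vanishing of $P_\perp\mu$, $D_\perp$ and their first normal derivatives on $A$, and the stopped-process form of Doob's maximal inequality from Lemma~1 of \cite{kushner1967stochastic}; where the paper extracts the second-order normal coefficient via L'H\^opital plus a Lipschitz bound on $F/G$, you Taylor-expand the bracket directly and absorb the remainder by shrinking the tube, which is equivalent. One point worth noting: your generator computation carries the correct sign $+s\,\hat n^\intercal\mu$ on the drift term, so the condition you arrive at is $-\nabla_{\hat n}\hat n^\intercal\mu + \nabla^2_{\hat n}\bigl(-\tfrac12\operatorname{Tr}D_\perp + (1-\delta)\hat n^\intercal D\hat n\bigr)>0$, whereas the paper's Eq.~\ref{eq:sufficient-condition2-slightly-more-general-case} has $+\nabla_{\hat n}\hat n^\intercal\mu$; specializing $\mu = -\nabla\mathcal{L}$ shows your sign is the one consistent with Theorem~\ref{theorem:collapsing-condition-LD-notation}, and the paper's proof compensates with a matching sign slip in its Eq.~\ref{eq:Al_lambda}, so you are right to flag the discrepancy. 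Do be aware that your claim of a uniform gap $c_0>0$ in the bracket coefficient over all foot-points $\theta_0\in A$ requires more than the stated pointwise strict inequalities when $A$ is unbounded; the paper's proof has the same unaddressed issue, so it is a shared, not a new, loose end.
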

\begin{proof}
Notice that, by the fact that $A$ is an invariant set,  $D_{\perp}(x) = 0$ and $P_{\perp}\mu(x) = 0$ hold for any $x\in A$.
In the following, we exploit a local version of Doob's maximal inequality (Lemma.~1 in \cite{kushner1967stochastic}) to prove stochastic attractivity.
By the Lipschitz property of $\mu$ and $\sqrt{D}$, the SDE Eq.~\ref{eq:sde-theta-B1} has a unique strong solution up to the exit time $\tau_c=\inf\{t\geq 0:\theta_t\notin A_c\}$.
The stopped process $\{\tilde\theta_t =\theta_{t\wedge \tau_c}:t\geq 0\}$ is a continuous strong Markov process.
It suffices to show that $A$ is stochastically attractive for $\tilde \theta_t$.
We first show that $\tilde{\mathcal{A}}l^\lambda(x) \leq 0$ for any $x\in A_{\epsilon}\backslash A$ with some $0<\epsilon<c$, and $\lambda >0$, where $\tilde{\mathcal{A}}$ is the infinitesimal generator of the stochastic process $\tilde\theta_t$ and $A_{\epsilon}$ denotes the open neighborhood of $A$ within distance $\epsilon>0$.
We consider the Euclidian distance $l(\tilde\theta_t)$ between the process $\tilde\theta_t$ and the affine invariant set $A$.
%
For any $\lambda >0$, the infinitesimal generator for $l^{\lambda}(\tilde\theta_t)$ at $x\in A_c \backslash A$ is given by
\begin{eqnarray}
    \label{eq:Al_lambda}
    \tilde{\mathcal{A}}l^{\lambda}(x) 
    &=&  -\lambda l^{\lambda-2}(x) x^\intercal P_{\perp} \mu + \lambda l^{\lambda-2}(x) \operatorname{Tr}D_{\perp}  + \lambda (\lambda -2)l^{\lambda-4}(x) \left(x^\intercal D_{\perp} x\right)\nonumber\\
    &=&2\lambda l^{\lambda-2}(x) G(x) \left(\frac{F(x)}{2G(x)} -1 + \lambda/2\right),
\end{eqnarray}
where
$$
F(x) := \left(-x^\intercal P_{\perp} \mu(x)+\operatorname{Tr}D_{\perp}(x)\right), \qquad G(x):= \left(\frac{P_{\perp}x}{\|P_{\perp}x\|}\right)^\intercal D(x)\frac{P_{\perp}x}{\|P_{\perp}x\|}.
$$
Notice that $G(x)>0$ for any $x\in A_{\epsilon}\backslash A$ with small enough $\epsilon>0$ because Eq.~\ref{eq:sufficient-condition1-slightly-more-general-case} holds and the second derivatives of $D(x)$ are $L$-Lipschitz.
Thus we want to show that there exists $\lambda>0$ such that for any $x\in A_{\epsilon}\backslash A$, $\frac{F(x)}{2G(x)} -1 + \lambda/2<0$ with some $\lambda>0$. Parameterize $x$ by $x(\xi) = x_{\parallel} + \xi \hat{n}$, where $x_{\parallel} \in A$ and $\hat n$ is the unit normal vector perpendicular to $A$.
From L'Hôpital's rule, we have,
$$
\lim_{\xi\to 0}\frac{F(x_{\parallel}+\xi \hat{n})}{G(x_{\parallel}+\xi \hat{n})} = \frac{\nabla^2_{\hat{n}} F(x_{\parallel})}{ \hat{n}^\intercal \nabla^2_{\hat{n}}D(x_{\parallel}) \hat{n} }.
$$
The second derivatives of $F$ can be calculated as follows:
\begin{eqnarray}
\label{eq:dF-and-dG}
\nabla^2_{\hat{n}} F(x_\parallel)
&=& -x_{\parallel}^\intercal 
 P_{\perp}\nabla_{\hat{n}}^2 \mu - 2\nabla_{\hat{n}} \hat{n}^\intercal \mu + \nabla_{\hat{n}}^2 \operatorname{Tr}D_{\perp}
= -2\nabla_{\hat{n}} \hat{n}^\intercal \mu  + \nabla_{\hat{n}}^2 \operatorname{Tr}D_{\perp}.
\end{eqnarray}
Substituting this, we obtain
\begin{equation}
   \label{eq:limit-F/G-at-xi0}
 \lim_{\xi\to 0}\frac{F(x_{\parallel}+\xi \hat{n})}{2G(x_{\parallel}+\xi \hat{n})} = \frac{-2\nabla_{\hat{n}} \hat{n}^\intercal\mu(x_{\parallel})+ \nabla_{\hat{n}}^2 \operatorname{Tr}D_{\perp}(x_{\parallel})}{ 2\hat{n}^\intercal \nabla^2_{\hat{n}}D(x_{\parallel}) \hat{n}} < 1-\delta. 
\end{equation}
By the assumed Lipschitz property of the derivatives of $\mu$ and $D$, $\nabla^2_{\hat{n}} F(x)$ and $\nabla^2_{\hat{n}} G(x)$ are $L$-Lipschitz in $A_{\epsilon}\backslash A$ with small enough $\epsilon>0$.
Therefore, for any $0<\xi<\epsilon$,
\begin{eqnarray*}
    &&\left|\nabla_{\hat{n}} \frac{F(x_{\parallel}+\xi \hat{n})}{G(x_{\parallel}+\xi \hat{n})}\right|\\
    &=& \left|\frac{G\nabla_{\hat{n}} F-F\nabla_{\hat{n}} G}{G^2}\right|\\
    &\leq& \frac{|(\nabla^2_{\hat{n}} G(x)+L\epsilon)\epsilon^2(\nabla^2_{\hat{n}} F(x_{\parallel}) + L\epsilon)\epsilon-(\nabla^2_{\hat{n}} F(x_{\parallel}) - L\epsilon)\epsilon^2(\nabla^2_{\hat{n}} G(x_{\parallel}) - L\epsilon)\epsilon|}{((\nabla^2_{\hat{n}} G(x_{\parallel})-L\epsilon)\epsilon^2)^2}\\
    &\leq&  \frac{|(\nabla^2_{\hat{n}} G(x_{\parallel})+L\epsilon)(\nabla^2_{\hat{n}} F(x_{\parallel}) + L\epsilon)-(\nabla^2_{\hat{n}} F(x_{\parallel}) - L\epsilon)(\nabla^2_{\hat{n}} G(x_{\parallel}) - L\epsilon)|}{\epsilon(\nabla^2_{\hat{n}} G(x_{\parallel}))^2}\\
    &&\times\left(\frac{\nabla^2_{\hat{n}} G(x_{\parallel})}{\nabla^2_{\hat{n}} G(x_{\parallel})-L\epsilon}\right)^2\\
    &\leq& \epsilon^{-1}|(1+\delta^{-1}L\epsilon)(1-\delta + \delta^{-1}L\epsilon)-(1-\delta - \delta^{-1}L\epsilon)(1 - \delta^{-1}L\epsilon)|
    \left(1+(\delta-L\epsilon)^{-1}L\epsilon\right)^2\\
    &\leq& L(3\delta^{-1}-1)\left(1+(\delta-L\epsilon)^{-1}L\epsilon\right)^2\\
    &\leq& 2L(3\delta^{-1}-1).
\end{eqnarray*}
Therefore $F(x_{\parallel}+\xi\hat {n})/G(x_{\parallel}+\xi \hat{n})$ is $2L(3\delta^{-1}-1)$-Lipschitz as a function of $\xi$ in $A_\epsilon\backslash A$.
This implies that there exists $\epsilon>0$ such that for any $x\in A_\epsilon\backslash A$, $F(x)/2G(x)< 1-\delta/2$ and hence $\tilde{\mathcal{A}}(l^\delta(x))\leq 0$.
Therefore $l^{\delta}(x)$ is in the domain of $\tilde{\mathcal{A}}_{A_{\epsilon}}$ and $\tilde{\mathcal{A}}_{A_{\epsilon}}(l^{\delta}(x))= \tilde{\mathcal{A}}(l^{\delta}(x))\leq 0$, where $\tilde{\mathcal{A}}_{A_{\epsilon}}$ is the infinitesimal generator of the stopped process $\tilde\theta_{t\wedge \tau}$ with $\tau =\inf_{t\geq 0}\{l(\theta_t)\geq \epsilon\}$. By the local version of Doob's maximal inequality, for any $\epsilon' < \epsilon$,
$$
\mathbb{P}[\sup_{t\geq 0} l^\delta(\tilde{\theta}_t)\geq \epsilon'| \tilde{\theta}_0 = x] = \mathbb{P}[\sup_{t\geq 0} l^\delta(\tilde{\theta}_{t\wedge \tau})\geq \epsilon'| \tilde{\theta}_0 = x] \leq \frac{l^{\delta}(x)}{\epsilon'}.
$$
Equivalently, for any $\epsilon' < \epsilon^{1/\delta}$,
$$
\mathbb{P}[\sup_{t\geq 0} l(\tilde{\theta}_t)\geq \epsilon'| \theta_0 = x] \leq \frac{l^{\delta}(x)}{\epsilon'^{1/\delta}}.
$$
By taking $l(x)$ to be arbitrarily small, we can bound the left-hand side with any positive value $\rho>0$. This means that $A$ is stochastically attractive for $\tilde{\theta}_t$ and so is for $\theta_t$.
\end{proof}

\subsection{Proof of Theorem~\ref{theorem:collapsing-condition-1dim-LD-notation}}
The sufficient condition is a direct corollary of Theorem~\ref{theorem:collapsing-condition-LD-notation}, and thus we show the necessary condition here.
By assumption,  $\mathcal{L}''(x)+ \frac{1}{2}D''(x) <-\delta$ at $x=0$ with some $\delta>0$.
It suffices to show that there exists $\epsilon>0$ such that the hitting time
    $$
        \tau := \inf\{t\in\mathbb{R}: \theta_t \geq \epsilon\}
    $$
is finite almost surely for any $x\in\mathbb{R}$ with initialization $\theta_0=x$.
    
We choose $\epsilon>0$ small enough so that $D(x)>0$ for any $x\in (-\epsilon,\epsilon)\backslash\{0\}$.
Then we can consider another stochastic process $Y_{t} = 2^{-1/2}\int^{\theta_{t \wedge \tau}}_{\epsilon/2} D^{-1/2}(x) dx$, which follows the SDE below
$$
    dY_t = D^{-1/2}\left(-\mathcal{L}'(\theta_{t \wedge \tau})-\frac{1}{2}D'(\theta_{t\wedge \tau}) \right)dt + dW_t.
$$
    for any $0\leq t < \tau$.
    By further taking $\epsilon$ small,
    $$
    D^{-1/2}\left(-\mathcal{L}'(\theta_{t \wedge \tau})-\frac{1}{2}D'(\theta_{t\wedge \tau})\right) > \frac{\delta\epsilon}{2}D^{-1/2} \geq \frac{\delta\epsilon}{2}/\sqrt{\sup_{0<x<\epsilon}D(x)} > \delta.
    $$
    
    Thus, $Y_t > \delta t + Y_0 + W_t$.
    Therefore,
    $$
        \tau = \inf \{t\in\mathbb{R}: Y_t \geq \frac{1}{\sqrt{2}}\int^{\epsilon}_{\epsilon/2} D^{-1/2}(x) dx\} \leq \inf \{t\in\mathbb{R}: \delta t + W_t \geq \frac{1}{\sqrt{2}}\int^{\epsilon}_{\epsilon/2} D^{-1/2}(x) dx - Y_0\}.
    $$
Since $\delta>0$, the right-hand side is almost surely finite with any value of $Y_0$. 
This immediately implies that $A$ is not stochastically attractive.
\clearpage
\section{The attractivity of the sign invariant set of single scalar neuron}
\label{app:sign-inv-set-single-neuron}
We here investigate when a neuron may collapse towards the sign invariant set. Consider a neural network consisting of only one neuron with scalar input and output: $f(x;w_1,w_2) = w_2 \sigma(w_1 x)$, where $x,w_1,w_2\in\mathbb{R}$ and $\sigma$ represents the activation function.
We assume that $\sigma(x)$ is a smooth Lipschitz continuous function with $\sigma(0)=0$.
Let $\mathcal{D} = \{(x_1,y_1),\cdots,(x_n,y_n)\}$ denote the training data set where $x_i,y_i\in\mathbb{R}, i\in [n]$.
We consider the loss function $\mathcal{L}$ given by
$$
    \mathcal{L}(w_1,w_2,t) = \frac{1}{n}\sum_{i\in [n]} (f(x_i;w_1,w_2) - y_i-\zeta_{i,t})^2,
$$
where $\zeta_{i,t}$ is the label noise which we assume to be increments of the $n$-dimensional Wiener process.
The gradient flow of the loss is given by
\begin{eqnarray*}
    \frac{dw_1}{dt} &=& -\frac{\partial \mathcal{L}}{\partial w_1} = -\frac{1}{n}\sum_{i\in [n]}(f(x_i;w_1,w_2) - y_i) w_2 x\sigma'(w_1 x_i) - w_2 x\sigma'(w_1 x_i)\zeta_{i,t} \\
    \frac{dw_2}{dt} &=& -\frac{\partial \mathcal{L}}{\partial w_2} = -\frac{1}{n}\sum_{i\in [n]} (f(x_i;w_1,w_2)  - y_i) \sigma(w_1 x_i) - \sigma(w_1 x_i)\zeta_{i,t}.
\end{eqnarray*}
This can be rewritten as a pair of SDEs.
\begin{eqnarray}
    \label{eq:SDE-single-neuron}
    dw_{1,t} &=& -\frac{1}{n}\sum_{i\in [n]} (f(x_i;w_1,w_2)  - y_i) w_2 x_i\sigma'(w_1 x_i) dt + \frac{\sqrt{\eta}\zeta}{\sqrt{n}} \sum_{i\in [n]}w_2 x_i\sigma'(w_1 x_i) dB_{i,t}\nonumber\\
    dw_{2,t} &=&  -\frac{1}{n}\sum_{i\in [n]}  (f(x_i;w_1,w_2)  - y_i) \sigma(w_1 x_i) dt+ \frac{\sqrt{\eta}\zeta}{\sqrt{n}}\sum_{i\in [n]}\sigma(w_1 x_i)dB_{i,t},
\end{eqnarray}
where $\zeta>0$ is a constant representing the amplitude of the gradient noise, $\eta>0$ is the learning rate, and  $B_{i,t}$ denotes the $n$-dimensional standard Wiener process.
In a more practical setup, training will be conducted using mini-batch SGD instead of GD, where the diffusion term resulting from label noise is independent of the dataset size. 
To account for this, we introduced a scaling factor of $1/\sqrt{n}$ in the diffusion term to ensure that the data size $n$ does not affect the behavior of SDE. 
\subsection{Proof of Theorem~5.1}
As a preparation, we first show a modified version of Theorem~\ref{theorem:collapsing-condition-appendix} to incorporate stopped processes.

\begin{lemma}
\label{lemma:collapsing-condition-stopped-process}
Let $A\subset \mathbb{R}^d$ denote an affine subset and $A_{c}$ be an open neighborhood around $A$ defined as $A_{c} = \{\theta\in\mathbb{R}^d:\inf_{x\in A}\|\theta-x\| <c\}$, where $c>0$ is a constant.
Consider a stochastic process $\{\theta_t\in\mathbb{R}^d:t\geq 0\}$ that follows the SDE Eq.~\ref{eq:sde-theta-B1} within $A_c$ with all the regularity conditions mentioned in Theorem~\ref{theorem:collapsing-condition-appendix}.
Futhermore, let $\tau = \inf\{t\geq 0: \theta_t\notin A_{c}\cap \operatorname{int} B\}$, where $B = \{\theta\in\mathbb{R}^d: \|\theta_i\|\leq c_B\}$ with $i\in [d]$ and $c_B>0$.
If $A\cap B$ is an invariant set of $\theta_t$, then $A\cap B$ is a stochastically attractive invariant set of the stopped process $\{\theta_{t\wedge \tau}:t\geq 0\}$, provided Eq.~\ref{eq:sufficient-condition1-slightly-more-general-case} and and \ref{eq:sufficient-condition2-slightly-more-general-case} hold for any $x\in A\cap B$.
\end{lemma}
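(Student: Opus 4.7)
The plan is to follow the proof of Theorem~\ref{theorem:collapsing-condition-appendix} almost verbatim; stopping at the boundary of $A_{c}\cap\operatorname{int} B$ only restricts the state space and therefore should not damage the supermartingale argument used there. First I would observe that the Lipschitz hypotheses on $\mu$ and $\sqrt{D}$ guarantee a unique strong solution to Eq.~\ref{eq:sde-theta-B1} up to the exit time $\tau$, so the stopped process $\tilde\theta_t = \theta_{t\wedge\tau}$ is a well-defined continuous strong Markov process. Its infinitesimal generator $\tilde{\mathcal{A}}$ coincides with the original generator of Eq.~\ref{eq:sde-theta-B1} on smooth test functions as long as the process has not yet left $A_{c}\cap\operatorname{int} B$, and acts trivially once the process is stopped.

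Second, using the Euclidean distance $l(x) = d(x, A)$ to the full affine set $A$, I would replay the It\^o calculation from the proof of Theorem~\ref{theorem:collapsing-condition-appendix}. The hypothesis that $A \cap B$ is invariant under the unstopped dynamics, applied to the test functions $l^2$ and $P_\perp \theta$, still forces $D_\perp(x) = 0$ and $P_\perp \mu(x) = 0$ for every $x \in A\cap B$, exactly as $A$-invariance did in the original proof. Combined with Eq.~\ref{eq:sufficient-condition1-slightly-more-general-case}--\ref{eq:sufficient-condition2-slightly-more-general-case} and the L'H\^opital-plus-Lipschitz estimate derived there, this yields $\tilde{\mathcal{A}} l^\delta(x) \le 0$ throughout $(A_{\epsilon}\cap\operatorname{int} B)\setminus A$ for a sufficiently small $\epsilon \in (0, c)$, so that $l^\delta$ is a local supermartingale for the stopped process on this neighborhood. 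The local Doob maximal inequality (Lemma~1 in \cite{kushner1967stochastic}) then gives, for every $\epsilon' < \epsilon^{1/\delta}$ and initialization $x$ with $l(x)$ sufficiently small,
\begin{equation*}
\mathbb{P}\Bigl[\sup_{t\ge 0} l(\tilde\theta_t) \ge \epsilon' \,\Big|\, \tilde\theta_0 = x\Bigr] \le l^\delta(x)/\epsilon'^{1/\delta}.
\end{equation*}

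The principal obstacle is translating this bound on $l(\tilde\theta_t) = d(\tilde\theta_t, A)$ into the bound on $d(\tilde\theta_t, A\cap B)$ that Definition~\ref{def:attractivity} actually requires. When the initialization lies close to a point of the relative interior $A \cap \operatorname{int} B$, the orthogonal projection of $\tilde\theta_t$ onto $A$ stays inside $B$ for as long as $l(\tilde\theta_t)$ remains small, by convexity of the box $B$, so the two distances coincide along the trajectory. For initializations with $\theta_0 \notin A_{c}\cap \operatorname{int} B$ we have $\tau = 0$ and $\tilde\theta_t \equiv \theta_0$, and the attractivity bound holds trivially provided $\delta < \epsilon$. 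Choosing the $\delta$ of Definition~\ref{def:attractivity} small enough that these two cases exhaust all initializations with $d(\theta, A\cap B) < \delta$ yields the required uniform probability bound and completes the proof.
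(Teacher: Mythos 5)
Your proof follows the same strategy as the paper's: reuse the supermartingale estimate $\tilde{\mathcal{A}}l^\lambda \leq 0$ from the proof of Theorem~\ref{theorem:collapsing-condition-appendix} and then apply the local Doob maximal inequality (Lemma~1 of \cite{kushner1967stochastic}) to the stopped process. The one substantive difference is the choice of Lyapunov function: you take $l(\cdot) = d(\cdot, A)$, the distance to the full affine set, whereas the paper takes $l(\cdot) = d(\cdot, A\cap B)$ directly --- the latter is exactly the quantity appearing in Definition~\ref{def:attractivity}, so the paper needs no final translation step.

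That translation is where your argument is loosest. The claim that the orthogonal projection of $\tilde\theta_t$ onto $A$ stays inside $B$ ``by convexity of the box'' is not true for an arbitrary affine $A$: orthogonal projection onto a general affine subspace does not preserve a box (it does for coordinate hyperplanes and the permutation diagonal, which is why it feels true here). The argument can be repaired without that claim: the stopped process stays in $B$ by construction, so whenever $d(\tilde\theta_t, A) = r$ the projection $P_A\tilde\theta_t$ lies in the box $B$ inflated by $r$ in each coordinate; for any affine $A$ meeting $\operatorname{int}B$ the set $A\cap B_r$ lies within $O(r)$ of $A\cap B$, and combining with the Pythagorean identity $d(\theta, A\cap B)^2 = d(\theta,A)^2 + d(P_A\theta, A\cap B)^2$ gives $d(\tilde\theta_t, A\cap B) \leq C\, d(\tilde\theta_t, A)$ for a geometry-dependent $C$. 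With that quantitative replacement (or by simply adopting the paper's Lyapunov function from the start) the proof closes and is otherwise essentially identical to the paper's.
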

\begin{proof}
    Since the drift and diffusion terms in Eq.\ref{eq:sde-theta-B1} are Lipschitz continuous within $A_{c}$, there exists a unique solution for the SDE up to the exit time $\tau$.
    This allows us to consider the unique stopped process $\theta_{t\wedge \tau}$, a continuous strongly Markov process.
    Let $l(\theta):\mathbb{R}^d\to \mathbb{R}_{\geq 0}$ be the Euclidian distance between $\theta\in\mathbb{R}^d$ and the invariant set $A \cap B$ , and $\tilde{\mathcal{A}}$ be the infinitesimal generator of $\theta_{t\wedge \tau}$ where $\tau_{\epsilon} = \inf\{t\geq 0: \theta_t\notin A_\epsilon\}$.
    The same argument in Theorem~\ref{theorem:collapsing-condition-appendix} leads to $\tilde{\mathcal{A}}(l^{\lambda}(x)) \leq 0$ for any $x\in A_{\epsilon}\cap B$ with small enough $\epsilon>0$ and $\lambda>0$.
    By applying a local version of maximial inequality (Lemma~1 in \citet{kushner1967stochastic}), we obtain, for any $x\in A_{\epsilon}$ and $0<\epsilon' <\epsilon$,
    $$
        \mathbb{P}\left[\left.\sup_{t\geq 0} l(\theta_{t\wedge \tau}) \geq \epsilon'\right|\theta_0 = x\right] = \mathbb{P}\left[\left.\sup_{t\geq 0} l(\theta_{t\wedge \tau \wedge \tau_{\epsilon}}) \geq \epsilon'\right|\theta_0 = x\right] \leq \frac{l^\lambda(x)}{\epsilon'}.
    $$
    This implies that $A\cap B$ is a stochastically attractive invariant set of the stopped process $\theta_{t\wedge \tau}$.
\end{proof}
\begin{reptheorem}{theorem:collapsing-condition-sign-invariant-informal-version}[formal]
    Suppose $\{(w_{1,t}, w_{2,t})\in \mathbb{R}^2:t\geq 0\}$ is a stochastic process obeying Eq.~\ref{eq:SDE-single-neuron} in a neighborhood $U$ of $A=\{(0,0)\}\subset \mathbb{R}^2$, and the activation function $\sigma:\mathbb{R}\to\mathbb{R}$ is a smooth function such that $\sigma(0)=0$. 
    The invariant set $A$ is stochastically attractive if $\frac{\eta\zeta^2|\sigma'(0)|}{2} > \frac{\left|\sum_{i\in [n]}x_iy_i \right|}{\sum_{i\in [n]}x_i^2}$.
\end{reptheorem}
\begin{proof}
If $\sigma'(0)=0$, the condition can never be met. Therefore, we only consider the case $\sigma'(0)\neq0$.
Without loss of generality, assume that $U$ is an open ball centered at $(0,0)$.
Since the drift and diffusion terms of Eq.\ref{eq:SDE-single-neuron} are locally Lipschitz continuous, it has a unique solution up to the exit time of $U$.
We define stochastic processes $p_t = w_{t,1}+w_{t,2}$, $s=w_{t,1}-w_{t,2}$.
These two processes obey the following SDEs:
\begin{eqnarray*}
\begin{cases}
    dp_t = \mu_p(p_t,s_t) dt + \frac{1}{\sqrt{n}}\sum_{i\in [n]}\sigma_{p,i}(p_t,s_t) dB_{i,t}\\
    ds_t = \mu_s(p_t,s_t) dt + \frac{1}{\sqrt{n}}\sum_{i\in [n]}\sigma_{s,i}(p_t,s_t) dB_{i,t}
\end{cases}
\end{eqnarray*}
where 
{\small
\begin{equation*}
    \begin{cases}
        \mu_p(p,s) = -\frac{1}{n}\sum_{i\in[n]} \left[(f(x_i;w_{1,t},w_{2,t})-y_i) w_{2,t}\sigma'(w_{1,t}x_i)x_i +(f(x_i;w_{1,t},w_{2,t})-y_i) \sigma(w_{1,t}x_i)\right]\\
    \mu_s(p,s) = -\frac{1}{n}\sum_{i\in[n]} \left[(f(x_i;w_{1,t},w_{2,t})-y_i) w_{2,t}\sigma'(w_{1,t}x_i)x_i -(f(x_i;w_{1,t},w_{2,t})-y_i) \sigma(w_{1,t}x_i)\right]\\
    \sigma_{p,i}(p,s) = \sqrt{\eta}\zeta w_{2,t}\sigma'(w_{1,t}x_i) x_i + \sqrt{\eta}\zeta \sigma(w_{1,t}x_i) \\
    \sigma_{s,i}(p,s) = \sqrt{\eta}\zeta w_{2,t}\sigma'(w_{1,t}x_i) x_i - \sqrt{\eta}\zeta \sigma(w_{1,t}x_i)
    \end{cases}
\end{equation*}
}
Define stopping times $\tau_p = \inf_{t\geq 0}\{p_t \geq \epsilon\}$,  $\tau_s = \inf_{t\geq 0}\{s_t \geq \epsilon\}$ where $\epsilon>0$ satisfies $\{(p,s):\max(|p|,|s|)<\epsilon\} \subset U$, and consider the stopped processes $(p_{t\wedge \tau_p},s_{t\wedge \tau_p})$ and $(p_{t\wedge \tau_s},s_{t\wedge \tau_s})$. We have,
\begin{eqnarray*}
    \frac{\partial}{\partial p}\mu_p(0,0) - \frac{1}{4n}\frac{\partial^2}{\partial p^2}\sum_{i\in [n]}\sigma^2_{p,i}(0,0) &=& -\frac{\sigma'(0)}{n}\sum_{i\in [n]}y_ix_i-\frac{1}{2n}\eta\zeta^2 \sigma'(0)^2 \sum_{i\in [n]}x_i^2 < 0\\
    \frac{1}{n}\frac{\partial^2}{\partial p^2}\sum_{i\in [n]}\sigma^2_{p,i}(0,0) &=& \frac{1}{2n}\eta\zeta^2 \sigma'(0)^2\sum_{i\in [n]}x_i^2  > 0,
\end{eqnarray*}
By the assumption on smoothness, $\frac{\partial}{\partial p}\mu_p(0,s) - \frac{1}{4}\frac{\partial^2}{\partial p^2}\sum_{i\in [n]}\sigma^2_{p,i}(0,s)<0$ and $\frac{\partial^2}{\partial p^2}\sum_{i\in [n]}\sigma^2_{p,i}(0,s)>0$ for any $s\in[-\epsilon,\epsilon]$ with small enough $\epsilon>0$.
Therefore by Lemma~\ref{lemma:collapsing-condition-stopped-process}, $A_p:=\{(0,s):s\in[-\epsilon,\epsilon]\}$ is stochastically attractive for the process $\{(p_{t\wedge \tau_p}, s_{t\wedge \tau_p}): t\geq 0\}$.
Similarly,
\begin{eqnarray*}
    \frac{\partial}{\partial s}\mu_s(0,0) - \frac{1}{4n}\frac{\partial^2}{\partial s^2}\sum_{i\in [n]}\sigma^2_{s,i}(0,0) &=& \frac{\sigma'(0)}{n}\sum_{i\in [n]}y_ix_i-\frac{1}{2n}\eta\zeta^2 \sigma'(0)^2 \sum_{i\in [n]}x_i^2 < 0\\
    \frac{1}{n}\frac{\partial^2}{\partial s^2}\sum_{i\in [n]}\sigma^2_{s,i}(0,0) &=& \frac{1}{2n}\eta\zeta^2 \sigma'(0)^2\sum_{i\in [n]}x_i^2  > 0.
\end{eqnarray*}
By Lemma~\ref{lemma:collapsing-condition-stopped-process}, $A_s:=\{(p,0):p\in[-\epsilon,\epsilon]\}$ is stochastically attractive for the process $\{(p_{t\wedge \tau_s}, s_{t\wedge \tau_s}): t\geq 0\}$.

By combining these two facts, we see that $A_p \cap A_s = \{(0,0)\}$ is stochastically attractive for the process $\{(w_{1,t},w_{2,t}): t\geq 0\}$.
\end{proof}
\clearpage
\section{The permutation invariant set in one hidden layer nonlinear neural network}
\label{app:two_neuron_toy_model}
Consider training a two neuron neural network $f_\mathbf{\theta}$ on a dataset $\{(x_k,y_k)\}_{k=1}^{N}$, where $f_\mathbf{\theta}(x)=\sum_{i=1}^2{{w}_i^{(2)}}\sigma({w}^{(1)}_i x)$. We assume the data are drawn randomly from standard Gaussian distribution. Consider running gradient descent with learning rate $\eta$ on the mean square error loss with label noise drawn freshly from $\mathcal{N}(0,\sigma^2)$. In this set up, the invariant set from the permutation symmetry is the affine space $\{\mathbb{\theta}|w^{(i)}_1=w_2^{(i)}\text{ for }i=1,2\}$. Empirically, we observe the stochastic collapse towards this invariant set as shown in~\ref{app:permutation_toy_example}. We found that the speed of collapsing depends on both the learning rate and the noise scale. Interestingly, the attractivity strengthens with increased learning rate and noise scale up to a certain threshold.

\begin{figure}[H]
    \centering
    \begin{subfigure}{0.999\textwidth}
         \centering
         \includegraphics[width=\textwidth]{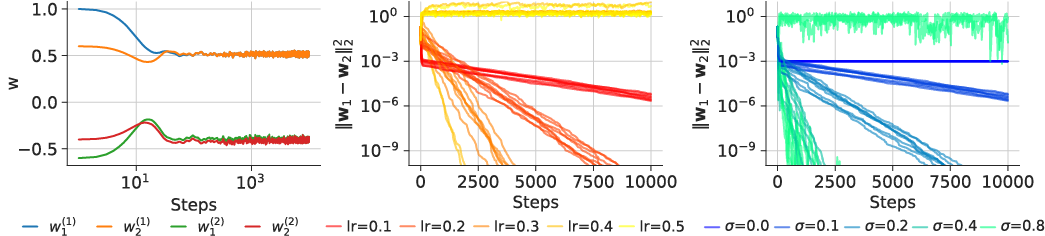}
     \end{subfigure}
    \caption{
    \textbf{Gradient noises in SGD induces stochastic collapsing to the invariant set associated with identical neurons}. \textbf{Left}: The collapsing process of the weights of hidden neurons 1 and 2 becoming identical over the course of training, demonstrating SGD's bias towards simpler subnetworks. \textbf{Middle}: A visualization of the impact of learning rates on the stochastic attractivity. An increase in the learning rate accelerates the collapsing process until reaching a critical threshold, beyond which the collapsing ceases.  \textbf{Right}: 
    The influence of noise scale on the collapsing process. The attractivity strengthens with increased noise up to a certain threshold.
    }
    \label{app:permutation_toy_example}
\end{figure}
\clearpage

\section{Conditions of sign and permutation invariant sets in the presence of residual connection}
\label{app:resnet_condition}
\begin{definition}[Residual connection]
    A hidden layer $l_2$ in a neural network is defined as residually connected from a previous layer $l_1$ if its forward pass follows:
\begin{align*}
h^{(l_2)}(x)=\sigma\left(w^{(l_2-1)}h^{(l_2-1)}(x)+b^{(l_2)}\right)+h^{(l_1)}(x),
\end{align*}
where $h^{(l)}$ denotes the hidden activations of layer $l$, $w^{(l)}$ is the weight matrix at layer $l$, and $b^{(l)}$ is the bias vector at layer $l$.
\end{definition}
\begin{proposition}[Sign invariant set for residual neural network]
\label{app:sign_inv_resnet}
Consider a hidden neuron $p$ of layer $l_2$ with a residual connection from layer $l_1$. Let $(w_{\text{in},p}^{(l)}, b_{p}^{(l)})$ denote the parameters (weights and bias) directly incoming to this neuron, and $w_{\text{out},p}^{(l+1)}$ represent the parameters (weights) directly outgoing from the neuron. If the nonlinearity $\sigma$ is origin-passing (i.e. $\sigma(0) = 0$), then the coordinate plane $A=\{\theta|w_{\text{in},p}^{(l)}=0,b_{p}^{(l)} = 0, w_{\text{out},p}^{(l+1)}=0 \text{ for }l=l_1,l_2\}$ forms an invariant set.
\end{proposition}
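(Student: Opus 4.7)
The plan is to mirror the structure of the proof of Proposition~\ref{proposition:sign_inv} in App.~\ref{app:proof:sign}, with the extra ingredient that I must carefully track how the residual connection reroutes gradient signals between layers $l_1$ and $l_2$. Since $A$ is the coordinate subspace obtained by setting the listed weights and biases to zero, it is an affine set containing the origin; therefore it suffices to verify that for every $\theta \in A$ and every training datum $(x,y)$, the gradient $\nabla_\theta \ell(\theta;x,y)$ has vanishing components in each of the constrained coordinates. The SGD update in Eq.~\ref{eq:SGD} is then a linear combination of elements of $A$ and lies in $A$.

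First I would evaluate the forward pass at $\theta \in A$. Because $w_{\text{in},p}^{(l_1)} = 0$ and $b_p^{(l_1)} = 0$, the pre-activation of neuron $p$ at layer $l_1$ vanishes, and the origin-passing property yields $h_p^{(l_1)}(x) = \sigma(0) = 0$. Since $w_{\text{in},p}^{(l_2)} = 0$ and $b_p^{(l_2)} = 0$ as well, the non-residual contribution to $h_p^{(l_2)}$ also vanishes, and the residual addition gives $h_p^{(l_2)}(x) = \sigma(0) + h_p^{(l_1)}(x) = 0$. Thus both activations of neuron $p$ are identically zero on $A$, regardless of the input $x$.

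Next I would compute the relevant gradient components by adapting Eqs.~\ref{app:A:eq:outgoing_weights}--\ref{app:a:eq:biases} to the residual architecture. The outgoing gradients for $w_{\text{out},p}^{(l_1+1)}$ and $w_{\text{out},p}^{(l_2+1)}$ are proportional to $h_p^{(l_1)}$ and $h_p^{(l_2)}$ respectively, and therefore vanish on $A$. For the incoming weights and biases at layers $l_1$ and $l_2$ I would unwind the backward pass: the residual connection implies that $\partial \ell / \partial h_p^{(l_1)}$ receives two contributions, one flowing through the feed-forward path into layer $l_1{+}1$, which is gated by $w_{\text{out},p}^{(l_1+1)} = 0$, and one flowing through the identity shortcut into $h_p^{(l_2)}$, which equals $\partial \ell / \partial h_p^{(l_2)}$. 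The latter in turn passes through $w_{\text{out},p}^{(l_2+1)} = 0$ and is therefore also zero. Hence $\partial \ell / \partial h_p^{(l_2)} = \partial \ell / \partial h_p^{(l_1)} = 0$, which forces $\partial \ell / \partial w_{\text{in},p}^{(l)} = 0$ and $\partial \ell / \partial b_p^{(l)} = 0$ for $l \in \{l_1, l_2\}$ via chain rule.

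The main subtlety, and the reason the proposition demands zeroing both layers simultaneously, is precisely this residual coupling: a naive attempt to use only the $l_2$ coordinates as an invariant set would fail, because $h_p^{(l_2)} = h_p^{(l_1)}$ would generically be nonzero on such a subspace, and so $\partial \ell / \partial w_{\text{out},p}^{(l_2+1)}$ would be pushed off zero by SGD. Showing that this obstruction disappears once the $l_1$ coordinates are also constrained to zero is the only nontrivial step of the argument; everything else is a direct adaptation of the proof of Proposition~\ref{proposition:sign_inv}.
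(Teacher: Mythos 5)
Your argument is correct and follows exactly the route the paper indicates (the paper states that the proof "closely resembles" that of Proposition~\ref{proposition:sign_inv} and omits the details): you verify $h_p^{(l_1)}=h_p^{(l_2)}=0$ on $A$, use this to kill the outgoing-weight gradients, trace the backward pass to show $\partial\ell/\partial h_p^{(l_2)}=\partial\ell/\partial h_p^{(l_1)}=0$ because both exit only through zero outgoing weights and the identity shortcut, and then conclude the incoming-weight and bias gradients vanish. Your remark about why constraining only the $l_2$ coordinates would fail accurately identifies the residual coupling as the source of the stronger condition, matching the paper's own discussion.
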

\begin{proposition}[Permutation invariant set for residual neural network]
\label{app:permutation_inv_resnet}
Consider two hidden neurons $p,q$ in the same layer $l_1$ with a residual connection from layer $l_1$.
    Let $(w_{\text{in},p}^{(l)},b_{p}^{(l)}), (w_{\text{in},q}^{(l)}, b_{q}^{(l)})$ denote the parameters directly incoming to the neurons, and $w_{\text{out},p}^{(l+1)}, w_{\text{out},q}^{(l+1)}$ the parameters directly outgoing from the neurons. 
    The affine space $A=\{\mathbf{\theta}|w_{\text{in},p}^{(l)}=w_{\text{in},q}^{(l)},b_{p}^{(l)}=b_{q}^{(l)}, w_{\text{out},p}^{(l+1)}=w_{\text{out},q}^{(l+1)}\text{ for } l=l_1,l_2\}$ is an invariant set.
\end{proposition}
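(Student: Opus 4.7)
The plan is to adapt the template used in the proof of Proposition~\ref{proposition:permutation_inv} (Appendix~\ref{app:proof:permu}). The set $A$ is a linear subspace (containing the origin) cut out by the equalities $w_{\text{in},p}^{(l)} = w_{\text{in},q}^{(l)}$, $b_p^{(l)} = b_q^{(l)}$, and $w_{\text{out},p}^{(l+1)} = w_{\text{out},q}^{(l+1)}$ for $l \in \{l_1, l_2\}$, so it suffices to show that for every training point $(x,y)$ and every $\theta \in A$ the per-sample gradient $\nabla_\theta \ell(\theta;x,y)$ satisfies the same equalities. Each SGD iterate is then an affine combination of points of $A$ and hence stays in $A$.

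The first step is a forward-pass argument: when $\theta \in A$, channels $p$ and $q$ carry identical activations at layers $l_1$ and $l_2$, i.e.\ $h^{(l_1)}_p = h^{(l_1)}_q$ and $h^{(l_2)}_p = h^{(l_2)}_q$ for any input $x$. The equality at $l_1$ follows immediately from the matched incoming parameters. For any intermediate layer $l_1 < l < l_2$, matching outgoing weights $w_{\text{out},p}^{(l_1+1)} = w_{\text{out},q}^{(l_1+1)}$ make the feed-forward computation invariant under transposing channels $p,q$ at $l_1$, so $h^{(l)}$ itself is unchanged by the transposition. At $l_2$, the matched incoming parameters give $\sigma(z^{(l_2)})_p = \sigma(z^{(l_2)})_q$, which combined with $h^{(l_1)}_p = h^{(l_1)}_q$ yields the desired equality after adding the residual. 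Symmetry then propagates to layers beyond $l_2$ through the matched outgoing weights at $l_2$.

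The second step applies the chain-rule expressions of Eqs.~\ref{app:A:eq:outgoing_weights}--\ref{app:a:eq:biases}, now adapted to the residual architecture. For outgoing weights at $l \in \{l_1, l_2\}$, the gradient depends only on $h^{(l)}_p$ and $\partial\ell/\partial h^{(l+1)}$, and both quantities are $(p,q)$-symmetric by the forward analysis. For incoming weights and biases, the key object is the back-propagated signal $\partial\ell/\partial h^{(l)}_p$. At $l = l_1$ this signal receives \emph{two} contributions: one through the standard feed-forward path into $h^{(l_1+1)}$ (weighted by $w_{\text{out},p}^{(l_1+1)}$) and one through the skip connection directly into $h^{(l_2)}_p$ (propagating onward via $w_{\text{out},p}^{(l_2+1)}$). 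The invariant-set condition enforces matching outgoing weights at \emph{both} $l_1$ and $l_2$, so each contribution is individually symmetric in $p,q$, and so is their sum. At $l = l_2$ the skip term does not depend on $w_{\text{in},p}^{(l_2)}$ or $b_p^{(l_2)}$, so only the standard path contributes, and matching outgoing weights at $l_2$ alone suffices.

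The main obstacle is this extra gradient channel created by the residual: in a standard feed-forward network, $\partial\ell/\partial h^{(l_1)}$ routes only through $h^{(l_1+1)}$, whereas here it also routes directly into $h^{(l_2)}$. Accounting for this second route is precisely what forces the invariant set to demand matched parameters at \emph{both} $l_1$ and $l_2$ rather than $l_1$ alone; once the symmetry of this additional contribution is verified, assembling the pieces gives $\nabla_\theta \ell(\theta;x,y) \in A$, and the argument closes exactly as in Appendix~\ref{app:proof:permu}.
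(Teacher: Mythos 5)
Your proof is correct and follows exactly the approach the paper indicates: the paper states only that ``the proofs ... closely resemble the proofs in App.~\ref{app:proof:sign} and \ref{app:proof:permu}'' and omits the details, which you fill in by adapting the chain-rule argument of Proposition~\ref{proposition:permutation_inv}. You correctly identify the one genuinely new ingredient: the skip connection creates a second backpropagation path from $h^{(l_2)}_p$ directly into $h^{(l_1)}_p$, so symmetry of $\partial\ell/\partial h^{(l_1)}_p$ in $p,q$ requires matched outgoing weights at \emph{both} $l_1$ and $l_2$, which is precisely why the invariant set in Proposition~\ref{app:permutation_inv_resnet} imposes the constraints at both layers.
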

We omit the proofs as they closely resemble the proofs in App.~\ref{app:proof:sign} and \ref{app:proof:permu}. In the presence of residual connections, the conditions for the sign and permutation invariant sets become stronger. Not only do the weights and biases of the neuron(s) at layer $l_2$ need to follow the condition, the neuron(s) which are residually connected from must also must also meet these conditions.

\clearpage
\section{Understanding Stochastic Collapse within a Teacher-Student Framework}
\label{app:linear_teacher_student}

To investigate the impact of the stochastic collapse on generalization, we will extend the gradient flow analysis of training error \cite{saxe2013exact} and test error \cite{lampinen2018analytic} for two-layer linear neural networks to stochastic gradient flow.

\subsection{The Linear Teacher-Student Framework}

We follow the teacher-student framework proposed by \citet{lampinen2018analytic}.

\textbf{Low-rank teacher network.}
We consider a low-rank teacher that computes the linear map $\bar{W}: \mathbb{R}^n \to \mathbb{R}^m$. 
$\bar{W} \in \mathbb{R}^{m \times n}$ is a low-rank matrix where $k \ll \min\{m,n\}$ denotes the rank.

\textbf{Two-layer student network.}
We consider a two-layer student network, parameterized by $W_1 \in \mathbb{R}^{d \times n}$ and $W_2 \in \mathbb{R}^{m \times d}$, that computes the linear map $\hat{W} = W_2W_1 : \mathbb{R}^n \to \mathbb{R}^m$.
Here $d$ is the hidden-layer dimension and we let $d = \min\{m,n\}$ such that the student has the capacity to represent all linear maps from $\mathbb{R}^n$ to $\mathbb{R}^m$.

\textbf{Noisy training data.}
The student network is trained on data generated by the teacher network through the noisy map,
\begin{equation}
    y = \bar{y} + \epsilon \quad \mathrm{where} \quad \bar{y} = \bar{W}x,
\end{equation}
and $\epsilon \sim \mathcal{N}(0,\sigma_\epsilon^2I_m)$.
We will consider a fixed training dataset of $p$ noisy input-output pairs $\{(x_1,y_1),\dots,(x_p,y_p)\}$ generated by the teacher network from input $x_i \sim \mathcal{N}(0,\sigma_x^2I_n)$.
Let $X \in \mathbb{R}^{n \times p}$ and $Y \in \mathbb{R}^{m \times p}$ be the matrices with columns $x_i$ and $y_i$ respectively.
This setup yields the second-order training statistics that guide the student network's learning dynamics,
\begin{equation}
    \Sigma_{xx} = XX^\intercal = \sum_{i=1}^p x_ix_i^\intercal, \qquad \Sigma_{yx} = YX^\intercal = \sum_{i=1}^p y_ix_i^\intercal.
\end{equation}
Later in our analysis of the learning dynamics we will assume the input data matrix $X$ is whitened such that $\Sigma_{xx} = I_n$ implying that only the input-output covariance structure $\Sigma_{yx}$ will govern the learning dynamics of the student.

\textbf{Train and test error.}
The student network is trained to minimize the mean squared error loss between the prediction $\hat{Y} = \hat{W}X$ and the output $Y$, and evaluated on the expected prediction error for a new test point sampled from the input distribution.
We denote the train and test error as
\begin{equation}
    \mathcal{E}_{\mathrm{train}} = \frac{\|\hat{W}X - Y\|^2_F}{\|Y\|^2_F}, \qquad \mathcal{E}_{\mathrm{test}} = \frac{\mathbb{E}\left[\|\hat{W}x - \bar{W}x\|^2_F\right]}{\mathbb{E}\left[\|\bar{W}x\|^2_F\right]},
\end{equation}
where $x \sim \mathcal{N}(0, \sigma_x^2I_n)$ is a new test point sampled from the input distribution, $\|\cdot\|_F$ denotes the Frobenius norm, and $\mathbb{E}\left[\cdot\right]$ represents the expectation over the training data and the input distribution of the test point.

\textbf{Singular value structure.}
The train and test performance of a student is determined by the relationship between three matrices in $\mathbb{R}^{m \times n}$: the low rank teacher $\bar{W}$, the overparameterized student $\hat{W}$, and the noisy training data $\Sigma_{yx}$.
Due to the linear nature of this problem, we will use the Singular Value Decomposition (SVD) for all three matrices to describe their relationship,
\begin{equation}
    \bar{W} = \sum_{i=1}^k \bar{s}_i\bar{u}_i\bar{v}_i^\intercal,\qquad 
    \hat{W} = \sum_{i=1}^d \hat{s}_i \hat{u}_i \hat{v}_i^\intercal, \qquad
    \Sigma_{yx} = \sum_{i=1}^{d} \tilde{s}_i\tilde{u}_i\tilde{v}_i^\intercal.
\end{equation}
Here, the $s_i$ denote non-zero singular values, and $u_i$ and $v_i$ denote the left and right singular vectors for their respective matrices.
We will sometimes also find it useful to concatenate this information into a matrix representation where we will use $U$, $S$, and $V$ with the appropriate accents $\bar{\cdot}$, $\hat{\cdot}$, or $\tilde{\cdot}$ to denote which matrix they are associated with.

\subsection{Exact Solutions to the Student Learning Dynamics}
The student network is trained by gradient descent with learning rate $\eta$ to minimize $\mathcal{E}_{\mathrm{train}}$.
However, we assume that each gradient evaluation is corrupted with Gaussian label-noise such that the training dynamics are given by the coupled update equations,
\begin{align}
    W_1^{(t+1)} &= W_1^{(t)} - \eta\left(W_2^{(t)}\right)^\intercal\left(W_2^{(t)}W_1^{(t)}\Sigma_{xx} - \Sigma_{yx} +  Z^{(t)} X^\intercal\right),\\ 
    W_2^{(t+1)} &= W_2^{(t)} - \eta \left(W_2^{(t)}W_1^{(t)}\Sigma_{xx} - \Sigma_{yx} + Z^{(t)}X^\intercal\right)\left(W_1^{(t)}\right)^\intercal,
\end{align}
where $W_1^{(t)}$ and $W_2^{(t)}$ are the parameters after $t$ steps of training and $Z^{(t)} \in \mathbb{R}^{m \times p}$ is a matrix of label-noise associated with step $t$.
Directly studying these dynamics is difficult because the gradients equations are coupled between the weights $W_1$ and $W_2$.
However, by introducing some assumptions on the covariance of the inputs and the label-noise, selecting a special initialization of the parameters, and taking the limit as $\eta \to 0$, we can decouple the dynamics into a system of scalar nonlinear SDEs with exact solutions.

\textbf{Decoupling the dynamics.}
We introduce the following assumptions:
\begin{enumerate}
    \item[A1.] \emph{Whitened-Input.} We assume that $\Sigma_{xx} = I_n$ such that the input data matrix $X$ is whitened (this implicitly assumes  $p \ge n$ such that we have at least as many observations as features).
    \item[A2.] \emph{Structured Label-Noise.} We assume the the gradient noise can be decomposed as $Z^{(t)} = \tilde{U}\mathrm{diag}\left(z^{(t)}\right)\tilde{V}^\intercal X$ where $z^{(t)} \sim \mathcal{N}(0,\zeta^2)$.
    \item[A3.] \emph{Spectral Initialization.} We assume that the student network is initialized such that $W_1^{(0)} =  OA^{(0)}\tilde{V}^\intercal$ and $W_2^{(0)} = \tilde{U}B^{(0)}O^\intercal$ where $A^{(0)}$ and $B^{(0)}$ are diagonal matrices such that $\hat{S}(0) = B^{(0)}A^{(0)}$ and $O \in \mathbb{R}^{d \times d}$ is a random orthonormal matrix such that $O^\intercal O = I_d$.
    \item[A4.] \emph{Balanced Initialization.} We assume a balanced initialization such that $W_1^{(0)}{W_1^{(0)}}^\intercal = {W_2^{(0)}}^\intercal W_2^{(0)}$.
\end{enumerate}
Given the first three assumptions, the dynamics decouple in the eigenbasis of $\Sigma_{yx}$.
Under the change of variables $A^{(t)} = O^\intercal W_1^{(t)}\tilde{V}$ and $B^{(t)} = \tilde{U}^\intercal W_2^{(t)}O$, the dynamics transform to
\begin{align}
    A^{(t+1)} &= A^{(t)} - \eta\left(B^{(t)}\right)^\intercal\left(B^{(t)}A^{(t)} - \tilde{S} +  \mathrm{diag}\left(z^{(t)}\right)\right),\\ 
    B^{(t+1)} &= B^{(t)} - \eta \left(B^{(t)}A^{(t)} - \tilde{S} +  \mathrm{diag}\left(z^{(t)}\right)\right)\left(A^{(t)}\right)^\intercal,
\end{align}
which because $A^{(0)}$ and $B^{(0)}$ are diagonal matrices by assumption, decouples into a system of scalar equations.
Taking the limit as $\eta \to 0$ we can approximate each of these scalar equations with a two-dimensional nonlinear SDE,
\begin{equation}
    \label{eq:2d-system-teacher-student}
    d\begin{bmatrix}
        a_i\\b_i
    \end{bmatrix} = 
    -\begin{bmatrix}
        b_i\left(a_ib_i -  \tilde{s}_i\right)\\
         a_i\left(a_ib_i -  \tilde{s}_i\right)
    \end{bmatrix}dt + 
    \sqrt{\eta \zeta^2}\begin{bmatrix}
        b_i\\ a_i
    \end{bmatrix}dB_t,
\end{equation}
where $a_i$ and $b_i$ are the $i^{th}$ diagonal element of $A$ and $B$ respectively. 
Under these much simpler dynamics, it is not difficult to prove by Itô's Lemma that the fourth assumption will be obtained no matter the initialization,

\begin{lemma}[Autobalancing]
    \label{lemma:autobalancing}
    Under the dynamics in equation~(\ref{eq:2d-system-teacher-student}), the $\lim_{t\to\infty} a_i(t)^2 - b_i(t)^2 = 0$.
\end{lemma}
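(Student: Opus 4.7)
The plan is a direct application of It\^o's lemma to the scalar function $f(a_i,b_i) = a_i^2 - b_i^2$, exploiting the symmetry between the two SDEs in Eq.~\ref{eq:2d-system-teacher-student}. Because both components $a_i$ and $b_i$ are driven by the same scalar Wiener process $B_t$ (with the diffusion coefficients swapped) and carry the same drift factor $(a_ib_i - \tilde{s}_i)$, I expect the nonlinear drift and the martingale part to cancel exactly, leaving a trivial linear ODE for the balance quantity $X_t := a_i(t)^2 - b_i(t)^2$.

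First, I would compute $d(a_i^2)$ and $d(b_i^2)$ individually. For $a_i^2$, It\^o's formula gives a drift contribution $-2a_ib_i(a_ib_i - \tilde{s}_i)\,dt$ from the deterministic part, plus the quadratic variation correction $\eta\zeta^2 b_i^2\,dt$, and a diffusion $2\sqrt{\eta\zeta^2}\,a_ib_i\,dB_t$. The same computation for $b_i^2$ yields drift $-2a_ib_i(a_ib_i - \tilde{s}_i)\,dt + \eta\zeta^2 a_i^2\,dt$ and the identical diffusion $2\sqrt{\eta\zeta^2}\,a_ib_i\,dB_t$.

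Second, I would subtract to obtain
\begin{equation}
    dX_t = d(a_i^2 - b_i^2) = \eta\zeta^2\bigl(b_i^2 - a_i^2\bigr)\,dt = -\eta\zeta^2 X_t\,dt,
\end{equation}
where the $(a_ib_i - \tilde{s}_i)$ drift terms and the $2\sqrt{\eta\zeta^2}\,a_ib_i\,dB_t$ stochastic terms cancel identically. This is a deterministic linear ODE whose solution is $X_t = X_0\, e^{-\eta\zeta^2 t}$, so $X_t \to 0$ as $t \to \infty$, yielding the claim.

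There is essentially no obstacle: the only substantive observation is the double cancellation (of both the nonlinear drift and the stochastic term), which reflects the $a_i \leftrightarrow b_i$ symmetry of the decoupled SDE system. The It\^o corrections from the multiplicative noise are precisely what drive $X_t$ towards zero, which is a nice feature of the continuous-time limit under the spectral and balanced assumptions A3--A4.
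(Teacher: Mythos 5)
Your proof is correct and takes essentially the same approach as the paper: apply It\^o's lemma to $r_i = a_i^2 - b_i^2$, observe that the nonlinear drift and the stochastic terms cancel while the It\^o corrections produce $-\eta\zeta^2 r_i\,dt$, and conclude exponential decay. The paper states the resulting ODE without spelling out the intermediate computation, which you have written out in full and correctly.
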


\begin{proof}
    Let $r_i(t)$ denote the difference $r_i(t) = a_i(t)^2 - b_i(t)^2$.
    By Itô's Lemma, $r_i(t)$ is driven by the ODE $dr_i = - \eta\zeta^2 r_i dt$, which has the temporal solution $r_i(t) = r_i(0)e^{-\eta\zeta^2t}$.
    Because $\eta\zeta^2 > 0$, then the $\lim_{t\to\infty} r_i(t) = 0$.
\end{proof}

The result of all four assumptions A1 - A4 is a decoupling the learning dynamics into a set of independent quartic slices of the loss landscape, $\ell_i(w_i) = (w_i^2 - \tilde{s}_i)^2$ with $w_i = a_i = b_i$, where within each slice the dynamics evolve according to the nonlinear SDE,
\begin{equation}
    dw_i = -w_i(w_i^2 - \tilde{s_i})dt + \sqrt{\eta\zeta^2}w_i dB_t.
\end{equation}
The geometry of these slices is controlled by the strength of the singular mode for the training data $\tilde{s}_i$ and the dynamics are determined by the relationship with the student's singular mode strength $\hat{s}_i = w_i^2$.
By Itô's Lemma we find that the dynamics of $\hat{s}_i$ are governed by the nonlinear SDE,
\begin{equation}
    d \hat{s}_i = 2\hat{s}_i\left(\left(\tilde{s}_i + \frac{\eta \zeta^2}{2}\right) - \hat{s}_i\right) dt + 2\sqrt{\eta \zeta^2} \hat{s}_i dB_t,
\end{equation}
which is an equation often used to model population growth in a stochastic, crowded environment.
See \citet{oksendal2013stochastic}, Chapter 5 problem 5.15 for more details.

\begin{reptheorem}{theorem:SDE-solution}
    Under assumptions A1 - A4, the dynamics of $\hat{s}_i(t)$ for $t \ge 0$ are governed by the stochastic process,
    \begin{equation}
    \label{eq:app-SDE-solution}
        \hat{s}_i(t) = \frac{e^{(2\tilde{s}_i - \eta\zeta^2)t + 2\sqrt{\eta\zeta^2}B_t}}{2\int_0^t e^{(2\tilde{s}_i - \eta\zeta^2)\tau + 2\sqrt{\eta\zeta^2} B_\tau}d\tau + \hat{s}_i(0)^{-1}}.
    \end{equation}
\end{reptheorem}

\begin{proof}
    Let $u_i = \hat{s}_i^{-1}$.
    Then by Itô's Lemma, the dynamics of $u_i$ are given by the SDE,
    \begin{equation}
        du_i = \left(2 - (2\tilde{s}_i - 3\eta\zeta^2)u_i\right)dt - 2 \sqrt{\eta\zeta^2} u_idB_t
    \end{equation}
    Rearranging this SDE we can express it in the standard form of a geometric Ornstein–Uhlenbeck process (sometimes referred to as a modified Ornstein-Uhlenbeck process)
    \begin{equation}
        du_i = \theta (\mu - u_i)dt + \sigma u_i d B_t
    \end{equation}
    where $\theta = 2\tilde{s}_i - 3\eta\zeta^2$, $\mu = (\tilde{s}_i - \frac{3}{2}\eta\zeta^2)^{-1}$, and $\sigma = -2\sqrt{\eta \zeta^2}$.
    To derive a solution to this linear SDE we make the ansatz that the solution takes the form $u_i(t) = F(t)G(t)$ where $F(t)$ and $G(t)$ solve the respective SDEs,
    \begin{equation}
        dF = -\theta Fdt + \sigma F dB_t, \qquad dG = \alpha(t)dt + \beta(t)dB_t.
    \end{equation}
    To determine the unknown coefficients $\alpha(t)$ and $\beta(t)$ defining the SDE for $G(t)$, we can apply Itô's product rule
    \begin{align*}
        du_i &= dFG + F dG + dFdG\\
        &= (-\theta F G + \alpha(t) F + \beta(t)\sigma F)dt + (\sigma F G + \beta(t)F)dB_t\\
        &= (\alpha(t) F + \beta(t)\sigma F -\theta u_i)dt + (\sigma u_i + \beta(t)F)dB_t
    \end{align*}
    Aligning this expression with the original SDE for $u_i$ we see that
    \begin{equation}
        \alpha(t) = \theta\mu F^{-1}, \qquad \beta(t) = 0.
    \end{equation}
    We can now solve for $F(t)$ and $G(t)$.
    The SDE defining $F(t)$ is standard geometric Brownian motion and thus has the solution,
    \begin{equation}
        F(t) = e^{-\left(\theta + \frac{1}{2}\sigma^2\right)t + \sigma B_t}
    \end{equation}
    where we assumed $F(0) = 1$ and thus $G(0) = u_i(0)$.
    Using this solution for $F(t)$, then $G(t)$ is given by
    \begin{equation}
        G(t) = u_0 + \theta\mu\int_0^te^{\left(\theta + \frac{1}{2}\sigma^2\right)\tau - \sigma B_s}d\tau.
    \end{equation}
    Combining our solutions for $F(t)$ and $G(t)$ we get that the solution for $u_i(t)$ is
    \begin{equation}
        u_i(t) = u_i(0)e^{-(\theta + \frac{1}{2}\sigma^2) t + \sigma B_t} + \theta\mu \int_0^t e^{-(\theta + \frac{1}{2}\sigma^2) (t-\tau) + \sigma (B_t - B_\tau)}d\tau.
    \end{equation}
    Inverting this expression and rearranging terms, gives the expression for $\hat{s}_i(t)$,
    \begin{equation}
        \hat{s}_i(t) = u_i(t)^{-1} = \frac{e^{(\theta + \frac{1}{2}\sigma^2) t - \sigma B_t}}{\theta\mu \int_0^t e^{(\theta + \frac{1}{2}\sigma^2) \tau - \sigma B_\tau}d\tau + \hat{s}_i(0)^{-1}}
    \end{equation}
    Plugging in the simplifications $\theta \mu = 2$, $\theta + \frac{1}{2}\sigma^2 = 2\tilde{s}_i - \eta\zeta^2$, and $\sigma = -2\sqrt{\eta \zeta^2}$ gives the final expression.
\end{proof}

\begin{repcorollary}{corollary:limit_SDE_dynamics_of_student_teacher}
    In the limit of $\hat{s}_i(0)\rightarrow 0$, for any $t>0$, $\hat{s}_i(t)$ is given by
    \begin{align}
        \hat{s}_i(t) /\hat{s}_i(0)\underrel{a.s.}{\rightarrow}  e^{(2\tilde{s}_i - \eta\zeta^2)t + 2\sqrt{\eta\zeta^2}B_t}.
    \end{align}
\end{repcorollary}

\begin{proof}
    Since $B_t$ is almost surely continuous, $(2\tilde{s}_i - \eta\zeta^2)\tau + 2\sqrt{\eta\zeta^2} B_\tau$ has a maximum in $\tau\in [0,t]$ and therefore $2\int_0^t e^{(2\tilde{s}_i - \eta\zeta^2)\tau + 2\sqrt{\eta\zeta^2} B_\tau}d\tau$ is finite almost surely.
    Therefore, by Theorem~\ref{theorem:SDE-solution},
    \begin{equation}
        \hat{s}_i(t) /\hat{s}_i(0) = \frac{e^{(2\tilde{s}_i - \eta\zeta^2)t + 2\sqrt{\eta\zeta^2}B_t}}{2\hat{s}_i(0)\int_0^t e^{(2\tilde{s}_i - \eta\zeta^2)\tau + 2\sqrt{\eta\zeta^2} B_\tau}d\tau+1}\underrel{a.s.}{\rightarrow}  e^{(2\tilde{s}_i - \eta\zeta^2)t + 2\sqrt{\eta\zeta^2}B_t}.
    \end{equation}
\end{proof}
Corollary~\ref{corollary:limit_SDE_dynamics_of_student_teacher} has important implications for the dynamics of $\hat{s}_i(t)$.
It demonstrates that the dynamics observe a phase transition determined by the ratio $2\tilde{s}_i / \eta\zeta^2$, consistent with Theorem~\ref{theorem:collapsing-condition-1dim-LD-notation}.
When this ratio is less than one, the distribution will exhibit stochastic collapse, converging to a delta distribution at the origin.
When this ratio is greater than one, the distribution will converge to a stationary distribution with a positive expectation.
Corollary~\ref{corollary:limit_SDE_dynamics_of_student_teacher} demonstrates that stochastic collapse is determined by a signal-to-noise ratio where the signal derives from the noisy teacher and the noise originates from stochastic gradients.
While Corollary~\ref{corollary:limit_SDE_dynamics_of_student_teacher} describes the dynamics with vanishing initialization for any $t > 0$, it does not describe what happens with finite initialization nor infinite time.
To understand these scenarios we consider the stationary solution for the dynamics of $\hat{s}_i$.

\begin{corollary}[Stationary solution]
    \label{corollary:app-stationary-solution}
    In the limit of $t \to \infty$, $\hat{s}_i(t)$ is distributed according to the density,
    \begin{equation}
        p_{ss}(\hat{s}_i)= Z^{-1}e^{-\frac{\hat{s}_i}{\eta \zeta^2}} \hat{s}_i^{-\left(\frac{3}{2} - \frac{\tilde{s}_i}{\eta \zeta^2}\right)}
    \end{equation}
    where $Z = \int_{0}^{+\infty}e^{-\frac{\hat{s}_i}{\eta \zeta^2}} \hat{s}_i^{-\left(\frac{3}{2} - \frac{\tilde{s}_i}{\eta \zeta^2}\right)} d\hat{s}_i$ is the partition function.
\end{corollary}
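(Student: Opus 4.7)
The plan is to obtain $p_{ss}$ directly from the stationary Fokker--Planck (Kolmogorov forward) equation for the one-dimensional scalar SDE of Theorem~\ref{theorem:SDE-solution}. Writing that SDE as $d\hat{s}_i = \mu(\hat{s}_i)\,dt + \sigma(\hat{s}_i)\,dB_t$ with drift $\mu(s) = 2s(\tilde{s}_i + \eta\zeta^2/2 - s)$ and noise coefficient $\sigma(s) = 2\sqrt{\eta\zeta^2}\,s$, the stationary density on $(0,\infty)$ with vanishing probability current satisfies
\[
\tfrac{d}{ds}\!\left[\tfrac{1}{2}\sigma(s)^2\, p_{ss}(s)\right] = \mu(s)\, p_{ss}(s),
\]
whose solution is given by the standard closed form $p_{ss}(s) \propto \sigma(s)^{-2}\exp\!\left(\int^{s} 2\mu(y)/\sigma(y)^2\, dy\right)$. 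The entire proof will amount to substituting $\mu$ and $\sigma$ and simplifying.

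Concretely, I would first compute $\frac{2\mu(y)}{\sigma(y)^2} = \frac{1}{\eta\zeta^2}\!\left(\frac{\tilde{s}_i + \eta\zeta^2/2}{y} - 1\right)$ and integrate to obtain the antiderivative $\left(\frac{\tilde{s}_i}{\eta\zeta^2} + \frac{1}{2}\right)\log y - \frac{y}{\eta\zeta^2}$. Exponentiating this primitive and multiplying by the prefactor $\sigma(s)^{-2}\propto s^{-2}$ telescopes the powers of $s$ and yields $p_{ss}(s)\propto s^{\tilde{s}_i/(\eta\zeta^2) - 3/2}\,e^{-s/(\eta\zeta^2)}$, which is precisely the claimed density once the partition function is defined by $Z = \int_0^{+\infty}e^{-s/(\eta\zeta^2)}\,s^{-(3/2 - \tilde{s}_i/(\eta\zeta^2))}\,ds$.

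The main subtlety, rather than the calculation itself, will be justifying the current-free ansatz at the degenerate left boundary $s=0^+$. Because $\{0\}$ is an invariant set (the drift and diffusion coefficients both vanish there), $0$ must be classified as an entrance or natural boundary via Feller's test on the associated scale and speed measures before one can conclude uniqueness of the stationary law. The right endpoint $s=\infty$ is handled painlessly by the exponential factor, which ensures integrability at infinity and a vanishing current there. Checking integrability of the candidate $p_{ss}$ near $0$ reveals that $Z<\infty$ iff $\tilde{s}_i/(\eta\zeta^2) - 3/2 > -1$, i.e.\ iff $\tilde{s}_i > \eta\zeta^2/2$, which is precisely the complement of the collapsing condition in Theorem~\ref{theorem:collapsing-condition-1dim-LD-notation}. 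When the collapsing condition holds, $Z$ diverges at $0$ and the stationary law degenerates to the Dirac mass $\delta_0$ supported on the invariant set, consistent with Corollary~\ref{corollary:limit_SDE_dynamics_of_student_teacher}; otherwise the normalized formula above is a bona fide stationary density, and a standard ergodicity argument for the non-degenerate restriction to $(0,\infty)$ closes the proof.
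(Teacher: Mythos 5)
Your proposal takes essentially the same approach as the paper: both derive the stationary density from the one-dimensional Fokker--Planck equation with vanishing probability current $J_{ss}=0$. The only cosmetic difference is that you construct $p_{ss}$ forward via the standard quadrature $p_{ss}\propto \sigma^{-2}\exp\bigl(\int 2\mu/\sigma^2\bigr)$, whereas the paper posits the answer and verifies $J_{ss}=0$ by substitution; your additional remarks on boundary classification at $s=0$ and the integrability threshold $\tilde{s}_i>\eta\zeta^2/2$ are correct and make explicit a point the paper only mentions in passing.
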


\begin{proof}
    To check if $p_{ss}(\hat{s}_i)$ is the stationary solution we use the \emph{Fokker-Planck equation} for the dynamics of $\hat{s}_i$, which is
    \begin{equation}
        \partial_t p = \nabla \cdot \underbrace{\left(2\hat{s}_i\left(\hat{s}_i -\left(\tilde{s}_i + \frac{\eta \zeta^2}{2}\right)\right) p + 2\eta \zeta^2\nabla \cdot \left(\hat{s}_i^2 p\right)\right)}_{-J}, \qquad p(\hat{s}_i, 0) = \delta_{\hat{s}_i(0)},
    \end{equation}
    where $J$ is commonly referred to as the \emph{probability current}.
    The \emph{stationary solution} $p_{ss}$ satisfies $\partial_t p_{ss} = 0$, which because we are considering a one-dimensional SDE, is equivalent to $J_{ss} = 0$.
    Plugging in the expression for $p_{ss}(\hat{s}_i)$ and using the simplification
    \begin{align}
        2\eta \zeta^2\nabla \cdot \left(\hat{s}_i^2 p\right) &= 4\zeta^2\hat{s}_i p_{ss} + 
        2\eta \zeta^2\hat{s}_i^2 \nabla p_{ss}(\hat{s}_i) \\
        &= 4\zeta^2\hat{s}_i p_{ss} - 2\eta \zeta^2\hat{s}_i^2\left(\left(\eta\zeta^2\right)^{-1} + \hat{s}_i^{-1}\left(\frac{3}{2} - \frac{\tilde{s}_i}{\eta\zeta^2}\right)\right)p_{ss}(\hat{s}_i)\\
        &= -2\hat{s}_i\left(\hat{s}_i -\left(\tilde{s}_i + \frac{\eta \zeta^2}{2}\right)\right)p_{ss}
    \end{align}
    we can verify that $J_{ss} = 0$.
\end{proof}
The stationary solution can be interpreted as a Gibb's distribution (i.e. $p_{ss} \propto e^{-\kappa \Psi(x)}$) with the potential
\begin{equation}
    \Psi(\hat{s}_i) = \hat{s}_i + \left(\frac{3 \eta \zeta^2}{2} - \tilde{s}_i\right)\log(\hat{s}_i)
\end{equation}
and temperature constant $\kappa = \left(\eta\zeta^2\right)^{-1}$.
From this interpretation we see two phase transitions, 
\begin{equation}
    \argmax p_{ss}(\hat{s}_i) = 
    \begin{cases}
        \tilde{s}_i - \frac{3}{2}\eta\zeta^2 &\text{if } \tilde{s}_i > \frac{3}{2}\eta\zeta^2\\
        0 &\text{if } \tilde{s}_i \le \frac{3}{2}\eta\zeta^2
    \end{cases}
    \qquad
    p_{ss}(\hat{s}_i) = 
    \begin{cases}
        Z^{-1}e^{-\kappa \Psi(w)} & \text{if } \tilde{s}_i > \frac{\eta\zeta^2}{2}\\
        \delta_0 & \text{if } \tilde{s}_i \le \frac{\eta\zeta^2}{2}
    \end{cases}
\end{equation}
When $\frac{3}{2}\eta\zeta^2\le \tilde{s}_i$, the most probable state is $\tilde{s}_i-\frac{3}{2}\eta\zeta^2$, where we already see a bias from the stochasticity towards the origin. 
When $\frac{3}{2}\eta\zeta^2 > \tilde{s}_i$, the most probable state is at the origin. 
When $\frac{\eta\zeta^2}{2} \ge \tilde{s}_i$, the partition function $Z$ diverges and the stationary distribution collapses to a delta distribution at the origin, which is an invariant set.

We now conjecture an exact expression for the temporal dynamics of the expectation $\mathbb{E}[\hat{s}_i(t)]$:

\begin{conjecture}
    \label{conjecture:expectation-student}
    Under assumptions A1 - A4, the expectation $\mathbb{E}[\hat{s}_i(t)]$ for $t \ge 0$ is given by the equation,
    \begin{equation}
        \mathbb{E}[\hat{s}_i(t)] = \frac{(\tilde{s}_i - \frac{\eta\zeta^2}{2})e^{(2\tilde{s}_i - \eta\zeta^2)t}}{e^{(2\tilde{s}_i - \eta\zeta^2)t} - 1 + \frac{\tilde{s}_i-\frac{\eta\zeta^2}{2}}{\hat{s}_i(0)}}. 
    \end{equation}
\end{conjecture}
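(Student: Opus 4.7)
The plan is to reduce the claim to a moment identity and then attack that identity directly. First I would apply Itô's lemma to the SDE of Eq.~\ref{eq:stochastic-logistic-equation} to obtain
\begin{equation*}
\frac{d}{dt}\mathbb{E}[\hat{s}_i(t)] = (2\tilde{s}_i + \eta\zeta^2)\,\mathbb{E}[\hat{s}_i(t)] - 2\,\mathbb{E}[\hat{s}_i(t)^2],
\end{equation*}
which is not closed because of the second moment. A short direct differentiation then shows that the conjectured $m(t)$ is the unique solution of the \emph{closed} logistic ODE $\dot m = (2\tilde{s}_i - \eta\zeta^2)\,m - 2m^2$ with $m(0)=\hat{s}_i(0)$. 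Comparing the two ODEs term by term, the conjecture is therefore equivalent to the variance-mean identity $\mathrm{Var}[\hat{s}_i(t)] = \eta\zeta^2\,\mathbb{E}[\hat{s}_i(t)]$ holding for all $t\ge 0$.

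To establish this identity, the natural Ansatz is that the density of $\hat{s}_i(t)$ is $\mathrm{Gamma}(k(t), 1/(\eta\zeta^2))$ with a time-varying shape $k(t)$ but the fixed rate of the stationary distribution from Corollary~\ref{corollary:app-stationary-solution}. A $\mathrm{Gamma}(k,b)$ variable automatically satisfies $\mathrm{Var} = k/b^2 = \mathbb{E}/b$, so this Ansatz would immediately yield the required identity. I would substitute it into the Fokker-Planck equation, or equivalently into the full moment hierarchy $\dot m_n = n(2\tilde{s}_i + (2n-1)\eta\zeta^2)\,m_n - 2n\,m_{n+1}$ that Itô generates on $\hat{s}_i^n$, and extract an ODE for $k(t)$ from the $n=1$ equation; the conjecture then follows if the higher-$n$ equations are automatically consistent. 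As a fallback, one can try direct pathwise evaluation from Theorem~\ref{theorem:SDE-solution}: the identity $\hat{s}_i(t) = \tfrac{1}{2}\tfrac{d}{dt}\log\!\bigl(2Z_t + \hat{s}_i(0)^{-1}\bigr)$, where $Z_t$ is a geometric-Brownian integral of Dufresne/Yor type, reduces computing $m(t)$ to evaluating a Laplace transform in $\hat{s}_i(0)^{-1}$.

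The main obstacle is a sharp inconsistency at $t=0$ that casts doubt on exactness of the statement as written. For a deterministic initial condition $\hat{s}_i(0)$, the true mean ODE forces $\dot m(0) = (2\tilde{s}_i + \eta\zeta^2)\hat{s}_i(0) - 2\hat{s}_i(0)^2$, whereas the conjectured logistic ODE gives $(2\tilde{s}_i - \eta\zeta^2)\hat{s}_i(0) - 2\hat{s}_i(0)^2$, differing by exactly $2\eta\zeta^2\hat{s}_i(0)$. Equivalently, the variance-mean identity fails at $t=0$, where $\mathrm{Var}=0$ but the right-hand side equals $\eta\zeta^2\hat{s}_i(0)>0$. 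Hence the single-parameter Gamma Ansatz cannot propagate from a Dirac initialization, and any rigorous argument must either recast the conjecture as an asymptotic statement valid after an initial transient --- for which Corollary~\ref{corollary:app-stationary-solution} already pins down the stationary mean $\tilde{s}_i - \eta\zeta^2/2$ consistent with the conjectured limit --- or uncover a structural reason why the variance-mean relation self-heals despite failing initially. Clarifying which of these scenarios is intended is the decisive step before a full proof can be attempted.
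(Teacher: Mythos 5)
The paper does not prove this statement: it is explicitly labeled a \emph{Conjecture}, and the text surrounding it offers only consistency checks (agreement with the stationary law of Corollary~\ref{corollary:app-stationary-solution}, with Corollary~\ref{corollary:limit_SDE_dynamics_of_student_teacher} as $\hat{s}_i(0)\to 0$, with Theorem~\ref{theorem:SDE-solution} as $\zeta\to 0$, and with $L_2$-regularized gradient flow). There is therefore no paper proof to compare against, and your proposal should be read as an independent assessment of whether the conjecture can even be true as written.

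Your analysis is correct and is, as far as I can tell, the natural route. Taking expectations of Eq.~\ref{eq:stochastic-logistic-equation} gives
\[
\frac{d}{dt}\mathbb{E}[\hat{s}_i] = (2\tilde{s}_i + \eta\zeta^2)\,\mathbb{E}[\hat{s}_i] - 2\,\mathbb{E}[\hat{s}_i^2],
\]
while the conjectured $m(t)$ is the unique solution of $\dot m = (2\tilde{s}_i - \eta\zeta^2)m - 2m^2$ with $m(0)=\hat{s}_i(0)$; you verified both facts correctly, so the conjecture is equivalent to the variance--mean identity $\mathrm{Var}[\hat{s}_i(t)] = \eta\zeta^2\,\mathbb{E}[\hat{s}_i(t)]$ holding pointwise in $t$. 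This identity does hold at stationarity (the stationary density in Corollary~\ref{corollary:app-stationary-solution} is a Gamma with rate $(\eta\zeta^2)^{-1}$, so $\mathrm{Var}=\eta\zeta^2\,\mathbb{E}$ there), which is exactly what the paper's ``large-$t$'' and ``$\zeta\to 0$'' sanity checks detect.

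The obstruction you identify is genuine and decisive: with a deterministic initial condition $\hat{s}_i(0)>0$ (which is the setting of Corollary~\ref{corollary:limit_SDE_dynamics_of_student_teacher} and is forced by assumptions A3--A4), $\mathrm{Var}[\hat{s}_i(0)]=0$ but $\eta\zeta^2\,\hat{s}_i(0)>0$, so the variance--mean identity fails at $t=0$ and the conjectured $m(t)$ has the wrong initial derivative, by exactly $2\eta\zeta^2\hat{s}_i(0)$. Hence the formula cannot be exact for a Dirac initialization, and no clever Gamma Ansatz or Dufresne--Yor integral will rescue it as stated. Your diagnosis --- that the conjecture should either be downgraded to an asymptotic/post-transient statement, or restated for a Gamma-distributed random initialization with the matching rate --- is the right conclusion. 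One further point worth making explicit: because your first-moment equation has an \emph{additional positive} term $+\eta\zeta^2\,\mathbb{E}[\hat{s}_i]$ relative to the conjectured logistic ODE (offset by the variance term), the conjectured $m(t)$ is in fact a \emph{subsolution} on any interval where $\mathrm{Var}\le \eta\zeta^2\,\mathbb{E}$, so the conjecture at best gives a lower bound near $t=0$, not an equality. That observation sharpens your ``asymptotic statement'' fallback into something provable without solving the full moment hierarchy.
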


Conjecture~\ref{conjecture:expectation-student} captures both the phase transition for stochastic collapse from Corollary~\ref{corollary:limit_SDE_dynamics_of_student_teacher} and the student's singular value shrinkage from Corollary~\ref{corollary:app-stationary-solution}.
Furthermore, in the limit of $\zeta \to 0$, it aligns with the dynamics from Theorem~\ref{theorem:SDE-solution}, as would be expected.
Additionally, one can show that the dynamics predicted by Conjecture~\ref{conjecture:expectation-student} are equivalent to the deterministic dynamics given by $L_2$ regularized gradient flow with a regularization coefficient of $\lambda = \frac{\eta \zeta^2}{2}$.

\subsection{An Analytic Theory of Error Dynamics in the High-dimensional Limit}

So far, we have demonstrated that the learning dynamics of the student can be decoupled into a system of scalar nonlinear SDEs with exact solutions, featuring multiple phase transitions determined by a signal-to-noise ratio.
Combining this with an understanding of the test error, one can show how this implicit shrinking and collapsing process can serve as an effective form of regularization that improves the performance of the student.

\textbf{Decomposition of the error.}
Using the the SVD structure of the low-rank teacher $\{\bar{u}_i, \bar{s}_i, \bar{v}_i\}$, overparameterized student $\{\hat{u}_i, \hat{s}_i, \hat{v}_i\}$, and noisy training data $\{\tilde{u}_i, \tilde{s}_i, \tilde{v}_i\}$ we can decompose these error terms as
\begin{align}
    \mathcal{E}_{\mathrm{train}} &= \left(\sum_{i=1}^m\tilde{s}_i^2\right)^{-1}\left(\sum_{i=1}^m \tilde{s}_i^2 + \sum_{j=1}^d \hat{s}_j^2 - 2 \sum_{i=1}^m\sum_{j=1}^d \tilde{s}_i\hat{s}_j\langle\tilde{u}_i,\hat{u}_j\rangle \langle \tilde{v}_i, \hat{v}_j\rangle\right), \\
    \mathcal{E}_{\mathrm{test}} &= \left(\sum_{i=1}^k\bar{s}_i^2\right)^{-1}\left(\sum_{i=1}^k \bar{s}_i^2 + \sum_{j=1}^d \hat{s}_j^2 - 2 \sum_{i=1}^k\sum_{j=1}^d \bar{s}_i\hat{s}_j\langle\bar{u}_i,\hat{u}_j\rangle \langle \bar{v}_i, \hat{v }_j\rangle\right).
\end{align}
From these expressions we see that understanding the dynamics of the train and test error depends on (1) the alignment of the singular values and vectors of the student network with the SVD structure of the training data and (2) the relationship between the SVD structure of the training data and that of the teacher network.
Our analysis of the student network training dynamics covers (1) and as done in \citet{lampinen2018analytic} we can understand (2) in the high-dimensional limit through a random matrix theory analysis.

\textbf{Random matrix theory analysis.}
We will work in the limit $n,m \to \infty$ with a finite rank $k \sim O(1)$ and finite aspect ratio $\mathcal{A} = \frac{m}{n} \in (0,\infty)$.
Without loss of generality, we assume $\sigma_\epsilon^2 = n^{-1}$ such that the singular values of the noise are $O(1)$ and thus the singular values of the teacher network can be understood as signal to noise ratios (SNRs).
The relationship between the SVD structures of a low-rank matrix (the teacher network $\{\bar{u}_i, \bar{s}_i, \bar{v}_i\}$) and a noisy perturbation (the training data $\{\tilde{u}_i, \tilde{s}_i, \tilde{v}_i\}$) is well studied in \cite{benaych2012singular}, which we summarize here.

In the limit, the top $k$ singular values of $\Sigma_{yx}$ converge to values given by the transfer function,
\begin{equation}
    \tilde{s}(\bar{s}) = 
    \begin{cases}
        (\bar{s})^{-1}\sqrt{(1+\bar{s}^2)(\mathcal{A} + \bar{s}^2)} &\mathrm{if } \bar{s} > \mathcal{A}^{1/4}\\
        1 + \mathcal{A} &\mathrm{otherwise}
    \end{cases}
\end{equation}
while the bottom $m - k$ singular values are distributed according to the Marchenko-Pastur (MP) distribution,
\begin{equation}
    p_{MP}(\tilde{s}) = \begin{cases}
        \frac{\sqrt{4\mathcal{A} - (\tilde{s}^2 - (1 + \mathcal{A}))^2}}{\pi \mathcal{A}\tilde{s}} &\mathrm{if } \tilde{s} \in [1 - \sqrt{\mathcal{A}}, 1 + \sqrt{\mathcal{A}}]\\
        0 &\mathrm{otherwise}
    \end{cases}
\end{equation}
Additionally, in the limit, the top $k$ singular vectors of $\Sigma_{yx}$ acquire an alignment with the $k$ singular vectors the teacher network given by the relationship $|\langle\tilde{u}_i,\bar{u}_i\rangle | \, |\langle \tilde{v}_i, \bar{v}_i\rangle| = \mathcal{O}(\bar{s}_i)$ where the the overlap function is defined as,
\begin{equation}
    \mathcal{O}(\bar{s}_i) = 
    \begin{cases}
        \left(1 - \frac{\mathcal{A}(1 + \bar{s}^2)}{\bar{s}(\mathcal{A} + \bar{s}^2)}\right)^{1/2}\left(1 - \frac{(\mathcal{A} + \bar{s}^2)}{\bar{s}(1 + \bar{s}^2)}\right)^{1/2} &\mathrm{if } \bar{s} > \mathcal{A}^{1/4}\\
        0 &\mathrm{otherwise}
    \end{cases}
\end{equation}

\clearpage

\section{Experiment details}
\label{app:exp_details}

The codes to reproduce the experiments in the main paper can be found at \url{https://github.com/ccffccffcc/stochastic_collapse}. Our the experiments were run on the Google Cloud Platform (with $4\times$ or $8\times$ NVIDIA A100 (40GB) GPU). The initial code development occurred on a local cluster equipped with $10\times$ NVIDIA TITAN X GPUs. We carried out all the deep learning experiments with VGG-16 \cite{simonyan2014very} and ResNet-18 \cite{he2016deep}, training on the CIFAR-10 and CIFAR-100 datasets respectively~\cite{krizhevsky2009learning}. For VGG-16, we did not use Batch Normalization or Dropout. Both VGG-16 and ResNet-18 had their nonlinearity activation functions replaced with GELU, as the potentially larger invariant set associated with ReLU could complicate identification and illustration of the two main invariant sets discussed in the paper. We leave it as future work to study in details the structure of the invariant sets associated with ReLU. For all our training, we applied standard data augmentation and used SGD (with momentum $\beta=0.9$ and weight decay of 0.0005) as the optimizer.

\textbf{Stochastic collapse in a quartic loss.} (Fig.\ref{fig:quartic-loss}) The left three plot are generated by sampling 50 trajectories of the SDE $d\theta_t = -(\theta_t^3-\mu\theta_t) dt +  \zeta\theta_t dB_t$ with $\mu = 1.5$ and $\zeta = 0.1, 1, 2.5$.
The samples are computed using an Euler-Maruyama discretization scheme with step size $\eta = 0.0025$ for $10^4$ steps.
All three plots use the same random Gaussian initializations with mean zero and variance four.
The side plot for these three subplots shows the theoretical steady-state distribution described in Sec.~\ref{sec:attractivity}.
The rightmost plot is generated by sweeping 50 linearly spaced values of $\zeta$ from $\zeta = 0$ to $\zeta = 3$.
For each value of $\zeta$ we sample 10000 trajectories with the same scheme described previously but for a varying number of steps: $0, 5000, 10000, 20000, 40000, 80000$.
We compute the empirical probability of the final position being within $\epsilon = 1e^{-15}$ euclidean distance of the origin.

\textbf{Stochastic collapse towards a sign invariant set.} (Fig.\ref{fig:sign-inv-set}) We trained $10^3$ different single-neuron models via simulating SDEs \eqref{eq:SDE-single-neuron} 
 up to time $T=100$, using the Euler-Marumaya discretization scheme of step size $dt=10^{-2}$.
All the models are trained with the same data set, composed with 32 points $\{(x_i,y_i)\}_{i=1,\cdots,32}$ where $x_i$ is an i.i.d. standard Gaussian random variable, and $y_i$ is given by $y_i=x_i+\epsilon_i$ where $\epsilon_i$ is i.i.d. centered Gaussian random variable with standard deviation of $0.1$.
All the models are initialized with random weights sampled from i.i.d. centered Gaussian distribution with standard deviation $10^{-3}$.

\textbf{Evidence of stochastic collapse towards permutation invariant sets.} (Fig.~\ref{fig:permutation_dnn}) We trained VGG-16 for $10^5$ steps on CIFAR-10 with a learning rate of 0.1 and ResNet-18 for $10^6$ steps on CIFAR-100 with a learning rate of 0.02. We computed the normalized distance between neurons within the same layer by concatenating corresponding vectors of incoming and outgoing weights, and used this to cluster neurons. Neurons within 0.1 normalized distance from each other were grouped together, which then determines the number of clusters or the independent of neurons denoted as $n_{ind}$. In Fig.~\ref{fig:permutation_dnn}, we plotted the pairwise distance of the incoming weights and outgoing weights separately. We carried out Agglomerative Clustering based on the normalized distance of the incoming weights with $n_{ind}$ clusters and the corresponding pairwise distance between outgoing weights was visualized with the same ordering.

\textbf{Larger learning rates and increased label noise intensify stochastic collapse.} (Fig.~\ref{fig:permutation_dnn_lr_sigma} and~\ref{app:dnn_label_noise}) These experiments varied learning rates and the extent of label noise. For the learning rate experiments, we trained models without label noise for $10^5$ steps. For the label noise experiments, we maintained learning rates of 0.01 and 0.02 for VGG-16 and ResNet-18 respectively over $10^6$ steps. Here, fresh label noise was introduced with each batch sample. A label noise level of $\epsilon$\% meant that $\epsilon$\% of the labels were randomly assigned an incorrect label. We averaged the results across four replications with different seeds.

After training, we calculated and displayed the proportion of independent neurons in each layer of the networks. We first removed 'vanishing neurons', defined as those with incoming and outgoing weights less than 10\% of the maximum norm for that layer.  We then clustered the remaining neurons based on the pairwise normalized distance of the concatenated incoming and outgoing weights. Two neurons were defined as identical if their distance in weight space was less than 10\% of their norms. This allowed us to determine the number of independent neurons in each layer.

\textbf{Effect of batch size on stochastic collapse.} (Fig.~\ref{app:collapse_dependence_on_batch}) We trained models with different batch sizes for $10^5$ (CIFAR-10) and $10^6$ (CIFAR-100) steps, while keeping the learning rate constant at 0.02.  We averaged the results across four replications with different seeds. The method of calculating the proportion of independent neurons remains the same.

\textbf{Demonstrating generalization benefits of stochastic collapse in a teacher-student setting.} (Fig.~\ref{fig:student-teacher}) The experiments were conducted in accordance with assumptions A1 - A4. In these experiments, we set both the input and output dimensions to be 64, and the dataset size was set at 1024. We used a sparse teacher model with a rank of 8 and signals ranging evenly from 0.5 to 1. The input samples were drawn to satisfy the condition $X^TX=I$. We then constructed the signal $y$ by introducing a random Gaussian noise with a standard deviation of 0.5 to the true output values.
The network was trained with an initial learning rate warm-up of 1000 steps, followed by 50000 steps at a learning rate of 3.0. After 50000 steps, we reduced the learning rate to 0.3 and continued the training for another 10000 steps. The experimental results are averaged over 256 runs with the same dataset. 

In Fig.~\ref{fig:app-student-teacher-overview}, we present the results obtained when replacing label noise with different batch sizes. In this experiment, we used SGD without label noise and maintained a similar experimental setup as before. Additionally, we introduced a learning rate drop at step 53000.

\textbf{Large learning rates aid generalization via stochastic collapse to simpler subnetworks.} (Fig.~\ref{fig:generalization_dnn}) We carried out initial training of VGG-16 and ResNet-18 using a larger learning rate of 0.1. We created checkpoints in the training process at steps of $10^4$, $2\times10^4$, and $4\times10^4$. We resumed training from the checkpoints post but at a reduced learning rate of 0.01. To track the impact of this large learning rate phase on the emergence of simpler subnetworks, we calculated the layer-wise fraction of independent neurons at each of these checkpoints, before the learning rate drop. We averaged the results across eight replications with different seeds.

\clearpage
\section{Extended Experiments}
\label{app:extended-experiments}

\begin{figure}[ht]
    \centering
    \includegraphics[width=0.9\textwidth]{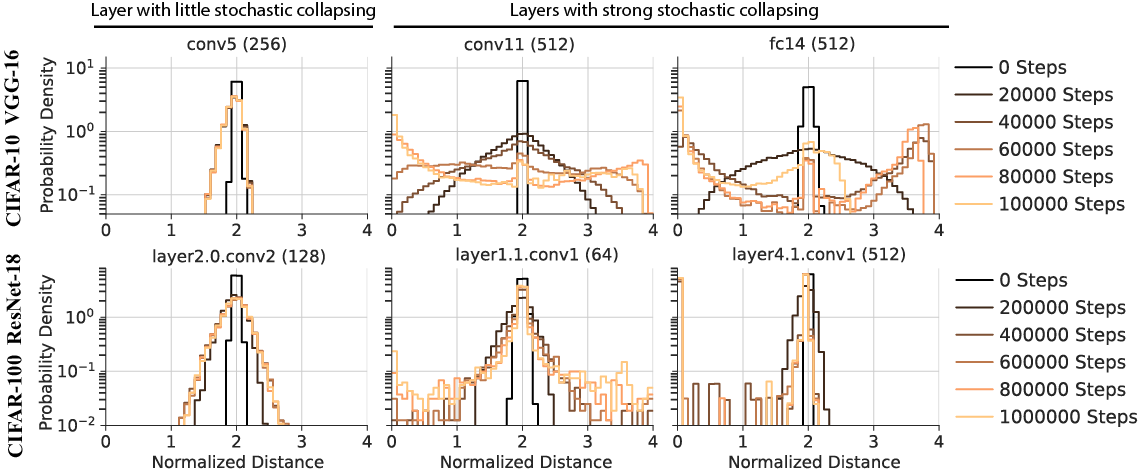}
    \caption{
    \textbf{Evidence of stochastic collapse towards permutation invariant sets in deep neural networks.}
   We show the distribution of normalized distance across all pairs of neurons for the same hidden layer of neurons as shown in Fig.~\ref{fig:permutation_dnn}. We show plots for three different layers in VGG-16 trained on CIFAR-10 (\textbf{top row}), and for three different layers in a ResNet-18 trained on CIFAR-100 (\textbf{bottom row}). Each plot assesses the distribution at various training stages. In layers with little stochastic collapsing (\textbf{left column}), the distributions rapidly thermalize and concentrate around 2. In layers with strong stochastic collapsing (\textbf{middle and right columns}), the distributions evolve to form a peak around 0, indicating stochastic collapsing towards the permutation invariant set.}
    \label{app:permutataion}
\end{figure}
\vspace{6mm}

\begin{figure}[ht]
\centering\includegraphics[width=0.86\textwidth]{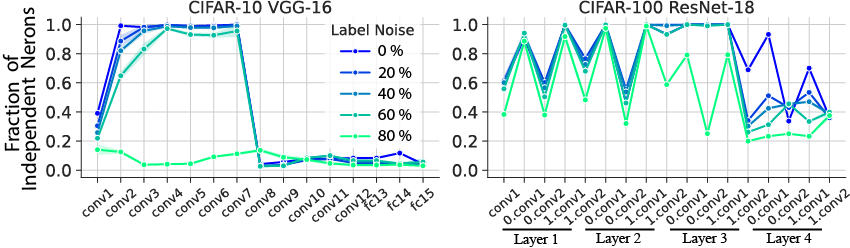}
    \caption{
    \textbf{Increased label noise intensify stochastic collapse.} This figure illustrates how the fraction of independent neurons per layer in VGG-16 trained on CIFAR-10 (\textbf{left column}) and ResNet-18 trained on CIFAR-100 (\textbf{right column}) varies with label noise. The networks are evaluated at training steps of $10^6$.  A reduced percentage of independent neurons indicates stronger stochastic collapse.
    }
    \label{app:dnn_label_noise}
\end{figure}
\vspace{6mm}

\begin{figure}[h]
    \centering
    \includegraphics[width=0.95\textwidth]{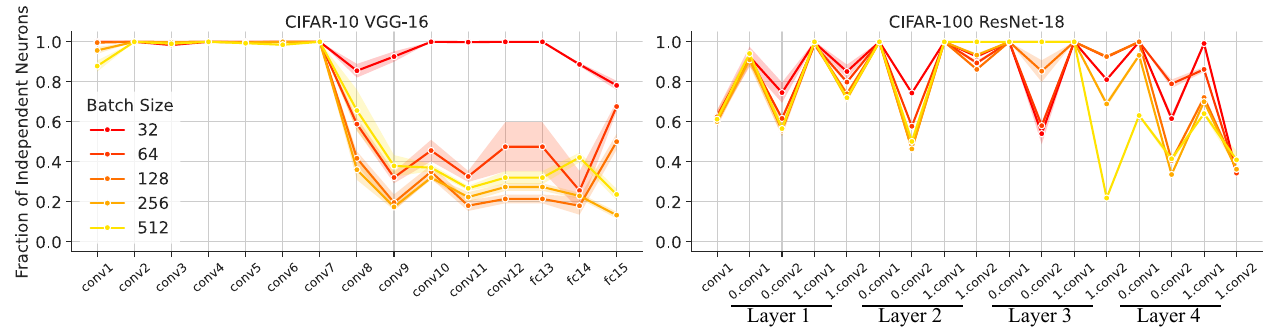}
    \caption{
    \textbf{Effect of batch size on stochastic collapse.} This figure illustrates how the fraction of independent neurons per layer in VGG-16 trained on CIFAR-10 (\textbf{left column}) and ResNet-18 trained on CIFAR-100 (\textbf{right column}) varies with batch size. A lower percentage of independent neurons signifies a heightened degree of stochastic collapse.  As elaborated in Appendix~\ref{app:sgf_sgd}, modifying batch size is expected to parallel changes in the learning rate since both influence the amplitude term in the diffusion matrix. Yet, unlike learning rate effects, alterations in batch size produce complex shifts in stochastic collapse. We conjecture that this subtlety may arise from non-replacement sampling and the randomness introduced by transformations applied to input images.
    }
    \label{app:collapse_dependence_on_batch}
\end{figure}

\clearpage
\begin{figure}[h]
    \centering
    \includegraphics[width=\textwidth]{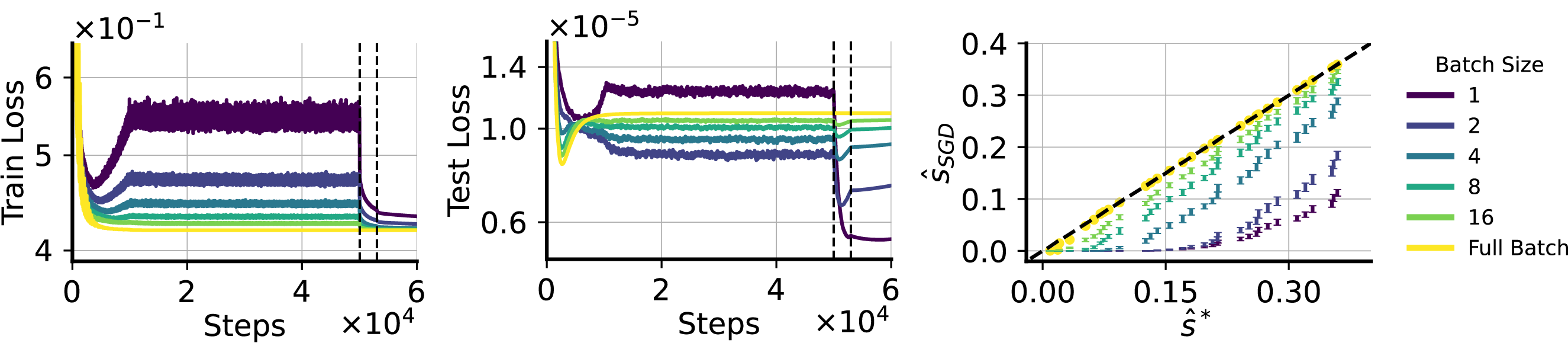}
    \caption{\textbf{Demonstrating generalization benefits of stochastic collapse in a teacher-student setting with batch noises.} Same as Fig.~\ref{fig:student-teacher} but we show effects of different batch sizes. No label noise was used in this setup. We show the train loss (\textbf{left}) and test loss (\textbf{middle}) during the training of the student with different batch sizes. Dashed lines in the leftmost and middle left panels indicate the steps where learning rate is dropped. Training with smaller batch sizes generalizes better. \textbf{Right}: we show the noisy teacher signals against the learned student signals before dropping the learning rate. We only show the smallest 32 singular values.
    }
    \label{fig:app-student-teacher-overview}
\end{figure}
\vspace{10mm}
\begin{figure}[H]
    \centering
    \includegraphics[width=\textwidth]{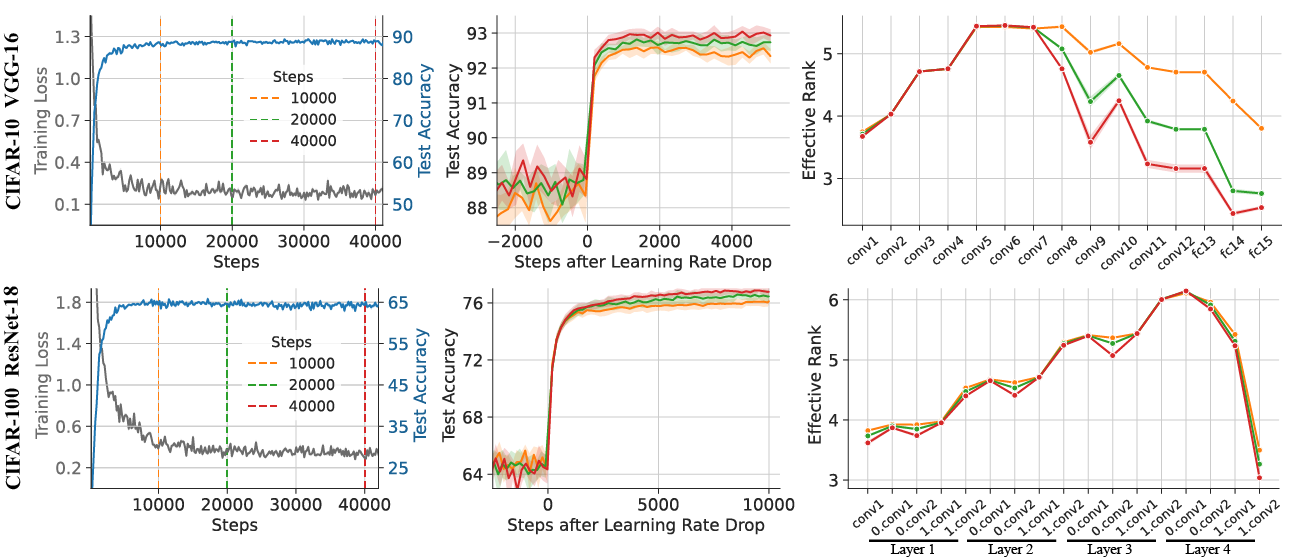}
    \caption{
    \textbf{Large learning rates aid generalization via stochastic collapse to simpler subnetworks.}
    \textbf{Left}, \textbf{Middle}: Same as Fig.~\ref{fig:generalization_dnn}. \textbf{Right}: The effective rank~\cite{roy2007effective} of the weight matrices of independent neurons per layer evaluated at different learning rate drop times (indicated by the color) during the initial high learning rate training phase. To compute the effective rank, we first gather the singular values, denoted by $s$, of the concatenated weight matrices of the incoming and outgoing weights for a specified layer. The effective rank is then computed as $\rho=- \sum_i \hat{s}_i \log \hat{s}_i$, where $\hat{s}_i$ represents the normalized singular value defined by $\hat{s}_i=\frac{s_i}{\sum_j s_j}$.
    }
    \label{app:generalization_dnn}
\end{figure}

\end{document}